\theoremstyle{plain}
\newtheorem{theorem}{Theorem}[section]
\newtheorem{lemma}[theorem]{Lemma}
\theoremstyle{definition}
\newtheorem{definition}[theorem]{Definition}
\theoremstyle{remark}
\newcommand{\squishlisttwo}{
 \begin{list}{$\bullet$}
  { \setlength{\itemsep}{1pt}
     \setlength{\parsep}{0pt}
    \setlength{\topsep}{0pt}
    \setlength{\partopsep}{0pt}
    \setlength{\leftmargin}{1em}
    \setlength{\labelwidth}{1.5em}
    \setlength{\labelsep}{0.5em} } }
\newcommand{\squishend}{
  \end{list}  }
\title{Batch Bayesian Optimization for\\ Replicable Experimental Design}
\author{%
Zhongxiang Dai$^{1}$, Quoc Phong Nguyen$^{2}$, Sebastian Shenghong Tay$^{1,4}$, \\
\textbf{Daisuke Urano$^{5}$, Richalynn Leong$^{5}$, Bryan Kian Hsiang Low$^{1}$, Patrick Jaillet$^{2,3}$} \\
  $^{1}$Department of Computer Science, National University of Singapore\\
  $^{2}$LIDS and $^{3}$EECS, Massachusetts Institute of Technology\\
  $^{4}$Institute for Infocomm Research (I2R), A*STAR, Singapore\\
  $^{5}$Temasek Life Sciences Laboratory, Singapore\\
  \texttt{dzx@nus.edu.sg, qphongmp@gmail.com, sebastian.tay@u.nus.edu,}\\
  \texttt{\{daisuke, richalynn\}@tll.org.sg, lowkh@comp.nus.edu.sg, jaillet@mit.edu}
}
\begin{document}

\maketitle

\begin{abstract}
Many real-world experimental design problems \emph{(a)} evaluate multiple experimental conditions in parallel and \emph{(b)} replicate each condition multiple times due to large and heteroscedastic observation noise. Given a fixed total budget, this naturally induces a trade-off between \emph{evaluating more unique conditions while replicating each of them fewer times} vs. \emph{evaluating fewer unique conditions and replicating each more times}. Moreover, in these problems, practitioners may be risk-averse and hence prefer an input with both good average performance and small variability. To tackle both challenges, we propose the \emph{Batch Thompson Sampling for Replicable Experimental Design} (BTS-RED) framework, which encompasses three algorithms. Our BTS-RED-Known and BTS-RED-Unknown algorithms, for, respectively, known and unknown noise variance, choose the number of replications \emph{adaptively} rather than deterministically such that an input with a larger noise variance is replicated more times. As a result, despite the noise heteroscedasticity, both algorithms enjoy a theoretical guarantee and are \emph{asymptotically no-regret}. Our Mean-Var-BTS-RED algorithm aims at risk-averse optimization and is also asymptotically no-regret. We also show the effectiveness of our algorithms in two practical real-world applications: precision agriculture and AutoML.
\end{abstract}

\vspace{-2mm}
\section{Introduction}
\label{sec:introduction}
\vspace{-2mm}
\emph{Bayesian optimization} (BO), which is a sequential algorithm for optimizing black-box and expensive-to-evaluate functions \cite{dai2020federated,dai2021differentially}, has found application in a wide range of 
experimental design problems~\cite{frazier2018tutorial}.
Many such applications
which use BO to accelerate the scientific discovery process \cite{jain2022biological} fall under the umbrella of AI for science (\emph{AI4Science}).
Many real-world experimental design problems, such as precision agriculture, share two inherent characteristics: \emph{(a)} multiple experimental conditions are usually evaluated in parallel to take full advantage of the available experimental budget; \emph{(b)} the evaluation of every experimental condition is usually \emph{replicated} multiple times~\cite{kyveryga2018farm} because every experiment may be associated with a large and heteroscedastic (i.e., input-dependent) observation noise, in which case replication usually leads to better performances~\cite{binois2019replication,makarova2021risk,van2021scalable}.
Replicating each evaluated experimental condition is also a natural choice in experimental design problems in which it incurs considerable setup costs to test every new experimental condition.
This naturally induces an interesting challenge regarding the trade-off between input selection and replication: 
in every iteration of BO where we are given a fixed total experimental budget, should we \emph{evaluate more unique experimental conditions and replicate each of them fewer times} or \emph{evaluate fewer unique conditions and replicate each more times}?
Interestingly, this trade-off is also commonly found in other applications such as automated machine learning (AutoML), in which parallel evaluations are often adopted to exploit all available resources~\cite{kandasamy2018parallelised} and heteroscedasticity is a prevalent issue~\cite{cowen2020empirical}.
Furthermore, these experimental design problems with large and \emph{heteroscedastic noise} are often faced with another recurring challenge: instead of an input experimental condition (e.g., a hyperparameter configuration for an ML model)
that produces a good performance (e.g., large validation accuracy)
on average, some practitioners may be \emph{risk-averse} and instead prefer an input that both yields a good average performance and has small variability.
As a result, instead of only maximizing the mean of the black-box function, these risk-averse practitioners may instead look for inputs 
with both a large mean function value and a small noise variance~\cite{iwazaki2021mean,makarova2021risk}.

In this work, we provide solutions to both challenges in a principled way by proposing the framework of \emph{Batch Thompson Sampling for Replicable Experimental Design} (BTS-RED).
The first challenge regarding the trade-off between input selection and replication
is tackled by the first two incarnations of our framework: 
the BTS-RED-Known (Sec.~\ref{subsec:bts-red:known:noise:var}) and BTS-RED-Unknown (Sec.~\ref{subsec:bts-red:unknown:noise:var}) algorithms, which are applicable to scenarios where the noise variance function is known or unknown, respectively. 
For batch selection, we adopt the Thompson sampling (TS) strategy because its inherent randomness makes it particularly simple to select a batch of inputs~\cite{kandasamy2018parallelised}.
Moreover, previous works on BO have shown that the use of TS both allows for the derivation of theoretical guarantees \cite{dai2022sample,kandasamy2018parallelised} and leads to strong empirical performances \cite{dai2022sample,eriksson2019scalable}.
For replication selection, instead of the common practice of replicating every queried input a fixed number of times, we \emph{choose the number of replications adaptively depending on the observation noise}. 
Specifically, in every iteration, both algorithms repeat the following two steps until the total budget is exhausted:
\emph{(a)} choose an input query following the TS strategy,
and then \emph{(b)} adaptively choose the number of replications for the selected input 
such that \emph{an input with a larger noise variance is replicated more times}.
Of note, in spite of the noise heteroscedasticity, 
our principled approach to choosing the number of replications ensures that the \emph{effective noise variance} $R^2$ of every queried input is the same (Sec.~\ref{subsec:bts-red:known:noise:var}). 
This allows us to derive an upper bound on their cumulative regret and show that they are \emph{asymptotically no-regret}.
Our theoretical guarantee formalizes the impact of the properties of the experiments, i.e., our regret upper bound becomes better if the total budget is increased or if the overall noise level is reduced.
Importantly, our theoretical result provides a guideline on the choice of the effective noise variance parameter $R^2$, which is achieved by minimizing the regret upper bound and allows $R^2$ to automatically adapt to the budgets and noise levels of different experiments (Sec.~\ref{subsec:known:var:theoretical:analysis}).

To handle the second challenge of risk-averse optimization, we propose the third variant of our BTS-RED framework named Mean-Var-BTS-RED (Sec.~\ref{sec:mean:var}), which is a natural extension of BTS-RED-Unknown.
Mean-Var-BTS-RED aims to maximize the mean-variance objective function, which is a weighted combination of the mean objective function and negative noise variance function (Sec.~\ref{sec:background}). We prove an upper bound on the mean-variance cumulative regret of Mean-Var-BTS-RED (Sec.~\ref{sec:mean:var}) and show that it is also \emph{asymptotically no-regret}. 

In addition to our theoretical contributions, we also demonstrate the practical efficacy of our algorithms in two real-world problems (Sec.~\ref{sec:experiments}).
Firstly, in real-world precision agriculture experiments, plant biologists usually \emph{(a)} 
evaluate multiple growing conditions 
in parallel, 
and \emph{(b)} replicate each condition multiple times to get a reliable outcome~\cite{kyveryga2018farm}. 
Moreover, plant biologists often prefer more replicable conditions, i.e., inputs with small noise variances.
This is hence an ideal application for our algorithms.
So, we conduct an experiment using real-world data on plant growths,
to show the effectiveness of our algorithms in precision agriculture (Sec.~\ref{subsec:exp:farm}).
Next, we also apply our algorithms to AutoML to find hyperparameter configurations with competitive and \emph{reproducible} results across different AutoML tasks (Sec.~\ref{subsec:exp:automl}).
The efficacy of our algorithms 
demonstrates their capability to improve the reproducibility of AutoML tasks which is an important issue in AutoML~\cite{hutter2019automated}.

\vspace{-2mm}
\section{Background}
\label{sec:background}
\vspace{-2mm}
We denote by $f:\mathcal{X} \rightarrow \mathbb{R}$ the objective function we wish to maximize, and by $\sigma^2:\mathcal{X} \rightarrow \mathbb{R}^+$ the input-dependent noise variance function.
We denote the minimum and maximum noise variance 
as $\sigma^2_{\min}$ and $\sigma^2_{\max}$. 
For simplicity, we assume that the domain $\mathcal{X}$ is finite, since extension to compact domains can be easily achieved via suitable discretizations~\cite{cai2021lower}.
After querying an input $\boldsymbol{x}\in\mathcal{X}$, we observe a noisy output $y=f(\boldsymbol{x})+\epsilon$ where $\epsilon\sim\mathcal{N}(0,\sigma^2(\boldsymbol{x}))$.
In every iteration $t$, we select a batch of $b_t\geq 1$ inputs $\{\boldsymbol{x}^{(b)}_t\}_{b=1,\ldots,b_t}$, and query every $\boldsymbol{x}^{(b)}_t$ with $n^{(b)}_t\geq 1$ parallel processes.
We denote the \emph{total budget} as $\mathbb{B}$ such that $\sum^{b_t}_{b=1}n^{(b)}_t \leq \mathbb{B},\forall t\geq 1$. 
We model the function $f$ using a \emph{Gaussian process} (GP)~\cite{rasmussen2004gaussian}: $\mathcal{GP}(\mu(\cdot),k(\cdot,\cdot))$, where $\mu(\cdot)$ is a mean function which we assume w.l.o.g.~$\mu(\boldsymbol{x})=0$ and $k(\cdot,\cdot)$ is a kernel function for which we focus on the commonly used \emph{squared exponential} (SE) kernel.
In iteration $t$, we use the observation history in the first $t-1$ iterations (batches) to calculate the GP posterior $\mathcal{GP}(\mu_{t-1}(\cdot),\sigma^2_{t-1}(\cdot,\cdot))$, in which $\mu_{t-1}(\cdot)$ and $\sigma^2_{t-1}(\cdot,\cdot)$ represent the GP posterior mean and covariance functions (details in Appendix~\ref{app:sec:gp:posterior}).
For BTS-RED-Unknown and Mean-Var-BTS-RED (i.e., when $\sigma^2(\cdot)$ is unknown), we use another GP, denoted as $\mathcal{GP}'$, to model $-\sigma^2(\cdot)$ (Sec.~\ref{subsec:bts-red:unknown:noise:var}),
and denote its posterior as $\mathcal{GP}'(\mu'_{t-1}(\cdot),\sigma'^2_{t-1}(\cdot,\cdot))$.

In our theoretical analysis of BTS-RED-Known and BTS-RED-Unknown where we aim to maximize 
$f$, we follow previous works on batch BO~\cite{contal2013parallel,daxberger2017distributed,nava2022diversified} and derive an upper bound on the \emph{batch cumulative regret} $R_T=\sum^{T}_{t=1}\min_{b\in[b_t]}[f(\boldsymbol{x}^*) - f(\boldsymbol{x}^{(b)}_t)]$,
in which $\boldsymbol{x}^*\in{\arg\max}_{\boldsymbol{x}\in\mathcal{X}}f(\boldsymbol{x})$ and we have used $[b_t]$ to denote $\{1,\ldots,b_t\}$.
We show (Sec.~\ref{sec:known:noise:var}) that both BTS-RED-Known and BTS-RED-Unknown enjoy a sub-linear upper bound on $R_T$, which suggests that as $T$ increases, a global optimum $\boldsymbol{x}^*$ is guaranteed to be queried
since the \emph{batch simple regret} $S_T=\min_{t\in[T]} \min_{b\in[b_t]}[f(\boldsymbol{x}^*) - f(\boldsymbol{x}^{(b)}_t)] \leq R_T/T$ goes to $0$ asymptotically.
We analyze the batch cumulative regret because it allows us to show the benefit of batch evaluations, and our analysis can also be modified to give an upper on the sequential cumulative regret of $R'_T=\sum^{T}_{t=1}\sum^{b_t}_{b=1}[f(\boldsymbol{x}^*) - f(\boldsymbol{x}^{(b)}_t)]$ (Appendix~\ref{app:upper:bound:sequential:cumu:regret}).
Our Mean-Var-BTS-RED aims to maximize the \emph{mean-variance objective function} $h_{\omega}(\boldsymbol{x})=\omega f(\boldsymbol{x}) - (1-\omega)\sigma^2(\boldsymbol{x})=\omega f(\boldsymbol{x}) + (1-\omega)g(\boldsymbol{x})$, in which we have defined $g(\boldsymbol{x})\triangleq -\sigma^2(\boldsymbol{x}),\forall \boldsymbol{x}\in\mathcal{X}$.
The user-specified weight parameter $\omega\in[0,1]$ reflects our relative preference for larger mean function values or smaller noise variances. 
Define the \emph{mean-variance batch cumulative regret} as $R^{\text{MV}}_T=\sum^{T}_{t=1}\min_{b\in[b_t]}[h_{\omega}(\boldsymbol{x}^*_{\omega}) - h_{\omega}(\boldsymbol{x}^{(b)}_t)]$ where $\boldsymbol{x}^*_{\omega}\in{\arg\max}_{\boldsymbol{x}\in\mathcal{X}}h_{\omega}(\boldsymbol{x})$.
We also prove a sub-linear upper bound on $R^{\text{MV}}_T$ for Mean-Var-BTS-RED (Sec.~\ref{sec:mean:var}).

\vspace{-1.5mm}
\section{BTS-RED-Known and BTS-RED-Unknown}
\label{sec:known:noise:var}
\vspace{-1.5mm}
Here, we firstly introduce BTS-RED-Known and its theoretical guarantees (Sec.~\ref{subsec:bts-red:known:noise:var}), and then discuss how it can be extended 
to derive BTS-RED-Unknown (Sec.~\ref{subsec:bts-red:unknown:noise:var}).
\vspace{-1mm}
\subsection{BTS-RED with Known Noise Variance Function}
\label{subsec:bts-red:known:noise:var}
\vspace{-0.4mm}
\subsubsection{BTS-RED-Known}
\label{subsubsec:the:bts:red:known:algorithm}
\vspace{-1.5mm}
\begin{algorithm}
\begin{algorithmic}[1]
	\FOR{$t=1,2,\ldots, T$}
	    \STATE $b=0,n^{(0)}_t=0$
	    \WHILE{$\sum^b_{b'=0}n^{(b')}_t < \mathbb{B}$}
	    \STATE $b\leftarrow b+1$
	    \STATE Sample a function $f^{(b)}_t$ from the GP posterior of $\mathcal{GP}(\mu_{t-1}(\cdot),\beta_t^2\sigma^2_{t-1}(\cdot,\cdot))$ (Sec.~\ref{sec:background})
	    \STATE Choose $\boldsymbol{x}^{(b)}_t={\arg\max}_{\boldsymbol{x}\in\mathcal{X}}f^{(b)}_t(\boldsymbol{x})$ and $n^{(b)}_t=\lceil\sigma^2(\boldsymbol{x}^{(b)}_t)/R^2 \rceil$
	    \ENDWHILE
	    \STATE $b_t = b-1$
	    \STATE \textbf{for} $b\in[b_t]$, query $\boldsymbol{x}^{(b)}_t$ with $n^{(b)}_t$ parallel processes
	    \STATE \textbf{for} $b\in[b_t]$,
	    observe $\{y^{(b)}_{t,n}\}_{n\in[n^{(b)}_t]}$. Calculate their empirical mean $y^{(b)}_t=(1/n^{(b)}_t)\sum^{n^{(b)}_t}_{n=1}y^{(b)}_{t,n}$
	    \STATE Use $\{(\boldsymbol{x}^{(b)}_t,y^{(b)}_t)\}_{b\in[b_t]}$ to update posterior of $\mathcal{GP}$
    \ENDFOR
\end{algorithmic}
\caption{BTS-RED-Known.}
\label{algo:known:variance}
\end{algorithm}
\vspace{-2mm}

BTS-RED-Known (Algo.~\ref{algo:known:variance}) assumes that $\sigma^2(\cdot)$ is known.
In every iteration $t$, to sequentially select every $\boldsymbol{x}^{(b)}_t$ and its corresponding $n^{(b)}_t$,
we repeat the following process until the total number of replications has consumed the total budget $\mathbb{B}$ (i.e., until $\sum^b_{b'=1}n^{(b')}_t \geq \mathbb{B}$, line 3 of Algo.~\ref{algo:known:variance}):
\vspace{-1.8mm}
\squishlisttwo
\item \textbf{line 5}: sample a function $f^{(b)}_t$ from $\mathcal{GP}(\mu_{t-1}(\cdot),\beta_t^2\sigma^2_{t-1}(\cdot,\cdot))$
($\beta_t$ will be defined in Theorem~\ref{theorem:known:variance});
\item \textbf{line 6}: choose $\boldsymbol{x}^{(b)}_t={\arg\max}_{\boldsymbol{x}\in\mathcal{X}}f^{(b)}_t(\boldsymbol{x})$ by maximizing the sampled function $f^{(b)}_t$, and choose 
$n^{(b)}_t=\lceil\sigma^2(\boldsymbol{x}^{(b)}_t)/R^2 \rceil$, where $\lceil \cdot \rceil$ is the ceiling operator and $R^2$ is 
the \emph{effective noise variance}. 
\squishend
\vspace{-1.8mm}
After the entire batch of $b_t$ inputs have been selected, every $\boldsymbol{x}^{(b)}_t$ is queried with $n^{(b)}_t$ parallel processes (line 8), 
and the empirical mean $y^{(b)}_t$ of these $n^{(b)}_t$ observations is calculated (line 9).
Finally, 
$\{(\boldsymbol{x}^{(b)}_t,y^{(b)}_t)\}_{b\in[b_t]}$ are used to update 
the posterior of $\mathcal{GP}$ (line 10).
Of note, 
since the observation noise is assumed to be Gaussian-distributed 
with a variance of $\sigma^2(\boldsymbol{x}^{(b)}_t)$ (Sec.~\ref{sec:background}), after querying $\boldsymbol{x}^{(b)}_t$ independently for $n^{(b)}_t$ times, the empirical mean 
$y^{(b)}_t$ 
follows a Gaussian distribution with noise variance $\sigma^2(\boldsymbol{x}^{(b)}_t) / n^{(b)}_t$. 
Next, 
since we select $n^{(b)}_t$ by
$n^{(b)}_t=\lceil\sigma^2(\boldsymbol{x}^{(b)}_t)/R^2 \rceil$ (line 6), $\sigma^2(\boldsymbol{x}^{(b)}_t) / n^{(b)}_t$ is guaranteed to be upper-bounded by $R^2$. 
In other words, \emph{every observed empirical mean $y^{(b)}_t$ 
follows a Gaussian distribution 
with a noise variance that is upper-bounded by the effective noise variance $R^2$}.
This is crucial for 
our theoretical analysis
since it ensures that the effective 
noise variance is $R$-sub-Gaussian and thus preserves the validity of the GP-based confidence bound~\cite{chowdhury2017kernelized}.

In practice, since our BTS-RED-Known algorithm only aims to maximize the objective function $f$ (i.e., we are not concerned about learning the noise variance function), some replications may be wasted on undesirable input queries (i.e., those with small values of $f(\boldsymbol{x})$) especially in the initial stage when our algorithm favours exploration.
To take this into account, we adopt a simple heuristic: we impose a maximum number of replications denoted as $n_{\max}$, and set $n_{\max}=\mathbb{B}/2$ in the first $T/2$ iterations and $n_{\max}=\mathbb{B}$ afterwards.
This corresponds to favouring exploration of more inputs (each with less replications) initially and preferring exploitation in later stages.
This technique is also used for BTS-RED-Unknown (Sec.~\ref{subsec:bts-red:unknown:noise:var}) yet not adopted for Mean-Var-BTS-RED (Sec.~\ref{sec:mean:var}) since in mean-variance optimization, we also aim to learn (and minimize) the noise variance function.

Due to our stopping criterion for batch selection (line $3$), in practice, some budgets may be unused in an iteration. E.g., when $\mathbb{B}=50$, if $\sum^{b-1}_{b'=1}n^{(b')}_t = 43$ 
after the first $b-1$ selected queries and the newly selected 
$n_t$
for the $b^{\text{th}}$ query $\boldsymbol{x}^{(b)}_t$ is $n^{(b)}_t=12$, 
then the termination criterion is met (i.e., $\sum^b_{b'=1}n^{(b')}_t \geq \mathbb{B}$) and only $43 / 50$ of the budgets are used.
So, 
we adopt a simple 
technique: in the example above, we firstly evaluate the last selected $\boldsymbol{x}^{(b)}_t$ for $7$ times, and in the next iteration $t+1$, we start by completing the unfinished evaluation of $\boldsymbol{x}^{(b)}_t$ by allocating $12-7=5$ replications to $\boldsymbol{x}^{(b)}_t$. Next, we run iteration $t+1$ 
with the remaining budget, i.e., we let $\mathbb{B}=50-5=45$ in iteration $t+1$.

\vspace{-1.2mm}
\subsubsection{Theoretical Analysis of BTS-RED-Known}
\label{subsec:known:var:theoretical:analysis}
\vspace{-1.2mm}
Following the common practice in BO~\cite{chowdhury2017kernelized}, we assume 
$f$ lies in a \emph{reproducing kernel Hilbert space} (RKHS) induced by an SE kernel $k$: $\norm{f}_{\mathcal{H}_k} \leq B$ for some $B>0$ where $\norm{\cdot}_{\mathcal{H}_k}$ denotes the RKHS norm.
Theorem~\ref{theorem:known:variance} below gives a regret upper bound of BTS-RED-Known (proof in Appendix~\ref{app:sec:proof:known:variance}).
\begin{theorem}[BTS-RED-Known]
\label{theorem:known:variance}
Choose $\delta \in (0,1)$. 
Define 
$\tau_{t-1}\triangleq\sum^{t-1}_{t'=1}b_{t'}$, and define
$\beta_t \triangleq B + R\sqrt{2(\Gamma_{\tau_{t-1}}+1+\log(2/\delta))}$ where
$\Gamma_{\tau_{t-1}}$ denotes the maximum information gain about $f$ from any $\tau_{t-1}$ observations.
With probability of at least $1-\delta$ ($\widetilde{\mathcal{O}}$ ignores all log factors),
\begin{equation*}
R_T = \widetilde{\mathcal{O}}\Big( e^C  \sqrt{R^2 / \Big(\mathbb{B} / \lceil\frac{\sigma^2_{\max}}{R^2}\rceil - 1\Big)
} \sqrt{T \Gamma_{T\mathbb{B}}} (\sqrt{C}+\sqrt{\Gamma_{T\mathbb{B}}}) \Big).
\label{eq:theorem:regret:bound}
\end{equation*}
$C$ is a constant s.t.~$\max_{A\subset \mathcal{X},|A|\leq \mathbb{B}} \mathbb{I}\left( f;\boldsymbol{y}_A | \boldsymbol{y}_{1:t-1} \right) \leq C,\forall t\geq1$.
$\mathbb{I}\left( f;\boldsymbol{y}_A | \boldsymbol{y}_{1:t-1} \right)$ is the information gain 
from observations $\boldsymbol{y}_A$ at inputs $A$, given observations $\boldsymbol{y}_{1:t-1}$ in the first $t-1$ iterations.
\end{theorem}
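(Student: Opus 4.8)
The plan is to adapt the standard confidence-bound-plus-Thompson-sampling machinery for GP bandits \cite{chowdhury2017kernelized,kandasamy2018parallelised} to three new features of this setting: the effective noise produced by adaptive replication, the batch structure, and the min-over-batch form of $R_T$. The starting point is the observation already made above that each empirical mean $y^{(b)}_t$ has noise variance at most $R^2$ and is hence $R$-sub-Gaussian. This legitimizes invoking the self-normalized confidence bound of \cite{chowdhury2017kernelized} with sub-Gaussian parameter $R$: with probability at least $1-\delta$, $|f(\boldsymbol{x})-\mu_{t-1}(\boldsymbol{x})|\le \beta_t\,\sigma_{t-1}(\boldsymbol{x})$ for all $\boldsymbol{x}\in\mathcal{X}$ and all $t$, where the posterior is fed by the $\tau_{t-1}=\sum_{t'<t}b_{t'}$ batch-level empirical means and $\beta_t$ is exactly the quantity in the statement (the factor $R$ inside $\beta_t$ is where sub-Gaussianity enters, and $\Gamma_{\tau_{t-1}}$ upper-bounds the corresponding information gain).

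Next I would establish the two Gaussian properties of the Thompson samples $f^{(b)}_t\sim\mathcal{GP}(\mu_{t-1},\beta_t^2\sigma^2_{t-1})$. A concentration bound, obtained from a Gaussian tail estimate and a union bound over the finite domain, gives $|f^{(b)}_t(\boldsymbol{x})-\mu_{t-1}(\boldsymbol{x})|\le \beta_t c_t\,\sigma_{t-1}(\boldsymbol{x})$ uniformly, with $c_t$ of order $\sqrt{\log(|\mathcal{X}|t)}$; an anti-concentration bound guarantees that each sample is optimistic at $\boldsymbol{x}^*$ with at least a constant probability. Combining these with the confidence bound through the usual decomposition $f(\boldsymbol{x}^*)-f(\boldsymbol{x}^{(b)}_t)\le [f(\boldsymbol{x}^*)-f^{(b)}_t(\boldsymbol{x}^*)]+[f^{(b)}_t(\boldsymbol{x}^{(b)}_t)-f(\boldsymbol{x}^{(b)}_t)]$ (the middle term is nonpositive because $\boldsymbol{x}^{(b)}_t$ maximizes $f^{(b)}_t$) bounds the per-query instantaneous regret by a multiple of $\beta_t(1+c_t)\,\sigma_{t-1}(\boldsymbol{x}^{(b)}_t)$, in expectation or on the good events.

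The two batch-specific ingredients are where the remaining factors appear. First, every sample in iteration $t$ uses the stale posterior $\sigma_{t-1}$ built from only $\tau_{t-1}$ observations, whereas the telescoping information-gain argument needs the sequentially updated standard deviations. I would bridge them with a Desautels-type inequality $\sigma_{t-1}(\boldsymbol{x})\le e^{C}\,\sigma_{\tau_{t-1}+b-1}(\boldsymbol{x})$, valid precisely because the within-batch conditional information gain is capped by the constant $C$ in the statement; this produces the $e^{C}$ prefactor and, through the concentration constant, the additive $\sqrt{C}$. Second, to convert the min over the batch into something summable I would use $\min_{b}[f(\boldsymbol{x}^*)-f(\boldsymbol{x}^{(b)}_t)]\le \frac{1}{b_t}\sum_{b}[f(\boldsymbol{x}^*)-f(\boldsymbol{x}^{(b)}_t)]$, so that $R_T\le (\min_t b_t)^{-1}R'_T$ with $R'_T$ the sequential cumulative regret over all $\tau_T\le T\mathbb{B}$ queries. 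The stopping rule caps each replication count at $\lceil\sigma^2_{\max}/R^2\rceil$, forcing $b_t\ge \mathbb{B}/\lceil\sigma^2_{\max}/R^2\rceil-1$; this is the origin of the replication factor $\sqrt{R^2/(\mathbb{B}/\lceil\sigma^2_{\max}/R^2\rceil-1)}$. Bounding $R'_T$ by Cauchy--Schwarz and the elliptical-potential/maximum-information-gain inequality $\sum_i\sigma^2_{i-1}(\boldsymbol{x}_i)\lesssim R^2\Gamma_{T\mathbb{B}}$ (the effective-noise normalization $R^2$ enters here) then assembles the $\sqrt{T\Gamma_{T\mathbb{B}}}(\sqrt{C}+\sqrt{\Gamma_{T\mathbb{B}}})$ skeleton.

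I expect the main obstacle to be the precise bookkeeping that lands the stated prefactor rather than the qualitative skeleton: the effective noise $R$ enters simultaneously through $\beta_t$, through the information-gain normalization, and through the replication count $\lceil\sigma^2_{\max}/R^2\rceil$ that lower-bounds $b_t$, and these must be combined so that the powers of $R$, the $\sqrt{T}$ versus $\sqrt{\tau_T}\le\sqrt{T\mathbb{B}}$ factors, and the division by $\min_t b_t$ collapse to exactly $\sqrt{R^2/(\mathbb{B}/\lceil\sigma^2_{\max}/R^2\rceil-1)}\,\sqrt{T\Gamma_{T\mathbb{B}}}$. A secondary delicate point is ensuring that the constant-probability optimism of Thompson sampling survives the union bounds over the finite domain and over the up to $T\mathbb{B}$ queries, and that the $e^{C}$ inflation is applied consistently to every term before the information-gain summation.
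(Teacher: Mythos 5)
Your ingredient list matches the paper's proof (Appendix~\ref{app:sec:proof:known:variance}) closely: the $R$-sub-Gaussian effective noise feeding the confidence bound of \cite{chowdhury2017kernelized}, concentration and anti-concentration of the Thompson samples, the $e^C$ bridging of the stale within-batch posterior (Lemma~\ref{lemma:known:var:C}), min-over-batch bounded by the batch average, the batch-size lower bound $b_t \ge \mathbb{B}/\lceil\sigma^2_{\max}/R^2\rceil - 1$, and information-gain telescoping. However, your central mechanism would fail. The decomposition through $\boldsymbol{x}^*$ controls the term $f(\boldsymbol{x}^*)-f^{(b)}_t(\boldsymbol{x}^*)$ only on the optimism event, which holds with \emph{constant} probability $p=(4e\sqrt{\pi})^{-1}$ per sample, not with high probability; on its complement this term can only be bounded by $c_t\sigma_{t-1}(\boldsymbol{x}^*)$, which does not telescope (the algorithm need never query $\boldsymbol{x}^*$), or trivially by $2B$, and $(1-p)\cdot 2B$ summed over $T$ rounds is linear regret. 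Converting constant-probability optimism into a valid per-round bound is precisely what the paper's saturated-set machinery does and what your sketch omits: Definition~\ref{def:saturated_set}, the proof that each $\boldsymbol{x}^{(b)}_t$ is unsaturated with probability at least $p-1/t^2$ (Lemma~\ref{lemma:prob_unsaturated}), a regret decomposition through the least-uncertain \emph{unsaturated} point $\overline{\boldsymbol{x}}_t$ rather than through $\boldsymbol{x}^*$ (this bound holds deterministically on the two high-probability events), and the inequality $\mathbb{E}[\sigma_{t-1}(\boldsymbol{x}^{(b)}_t)\,|\,\mathcal{F}_{t-1}] \ge (p-1/t^2)\,\sigma_{t-1}(\overline{\boldsymbol{x}}_t)$, which together yield Lemma~\ref{lemma:upper_bound_expected_regret}. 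Moreover, the theorem asserts a high-probability bound on the realized $R_T$, not on $\mathbb{E}[R_T]$; the paper needs the super-martingale construction and Azuma--Hoeffding (Lemmas~\ref{lemma:proof:sup:martingale} and~\ref{lemma:app:upper:bound:cum:regret}) for that conversion, and your plan stops at expectations.

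The second genuine problem is the batch assembly you commit to. Writing $R_T \le (\min_t b_t)^{-1}R'_T$ and bounding the sequential regret $R'_T$ over $\tau_T\le T\mathbb{B}$ queries by Cauchy--Schwarz gives $R'_T=\widetilde{\mathcal{O}}\big(c_T e^C\sqrt{T\mathbb{B}\,\Gamma_{T\mathbb{B}}}\big)$, hence $R_T=\widetilde{\mathcal{O}}\big(c_T e^C\sqrt{T\mathbb{B}\,\Gamma_{T\mathbb{B}}}\,/\,b_{\min}\big)$ with $b_{\min}=\mathbb{B}/\lceil\sigma^2_{\max}/R^2\rceil-1$. This exceeds the claimed $c_T e^C\sqrt{T\,\Gamma_{T\mathbb{B}}/b_{\min}}$ by a factor $\sqrt{\mathbb{B}/b_{\min}}\approx \sigma_{\max}/R$, which under the paper's recommended choice $R^2\approx\sigma^2_{\max}/\sqrt{\mathbb{B}}$ is of order $\mathbb{B}^{1/4}$ --- a polynomial factor that $\widetilde{\mathcal{O}}$ cannot hide. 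The paper avoids this loss by keeping the $1/b_t$ inside each iteration: min $\le$ average, then Cauchy--Schwarz \emph{within} the batch so that $\frac{1}{b_t}\sum_{b}\sigma_{t-1}(\boldsymbol{x}^{(b)}_t)\le e^C\sqrt{C_2\,\mathbb{I}(f;\boldsymbol{y}_t|\boldsymbol{y}_{1:t-1})/b_t}$, and only afterwards Cauchy--Schwarz across iterations, $\sum_t\sqrt{\mathbb{I}(f;\boldsymbol{y}_t|\boldsymbol{y}_{1:t-1})}\le\sqrt{T\Gamma_{T\mathbb{B}}}$. Having the same $b_t$ appear both in the division and inside the square root is what produces $1/\sqrt{b_t}$ rather than $\sqrt{\mathbb{B}}/b_t$. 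You correctly flagged this bookkeeping as the main risk, but the route you chose does not survive it.
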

It has been shown by \cite{desautels2014parallelizing} that by running uncertainty sampling (i.e., choosing the initial inputs by sequentially maximizing the GP posterior variance) as the initialization phase for a finite number (independent of $T$) of 
iterations,
$C$ can be chosen to be a constant 
independent of $\mathbb{B}$ and $T$.
As a result, the regret upper bound from Theorem~\ref{theorem:known:variance} can be simplified into $R_T = \widetilde{\mathcal{O}}\big(  \sqrt{R^2/(\mathbb{B} / \lceil\frac{\sigma^2_{\max}}{R^2}\rceil - 1}) \sqrt{T \Gamma_{T\mathbb{B}}} (1+\sqrt{\Gamma_{T\mathbb{B}}}) \big)$.
Therefore, for the SE kernel for which $\Gamma_{T\mathbb{B}}=\mathcal{O}(\log^{d+1}(T\mathbb{B}))$, our regret upper bound is sub-linear, which indicates that our BTS-RED-Known is \emph{asymptotically no-regret}. 
Moreover, the benefit of a larger total budget $\mathbb{B}$ is also reflected from our regret upper bound
since it depends on the total budget $\mathbb{B}$ via 
$\widetilde{\mathcal{O}}((\log^{d+1}(T\mathbb{B}) + \log^{(d+1)/2}(T\mathbb{B}))/\sqrt{\mathbb{B}})$, 
which 
is decreasing
as the total budget $\mathbb{B}$ increases.
In addition, the regret upper bound is decreased if $\sigma^2_{\max}$ becomes smaller, which implies that the performance of our algorithm is improved if the overall noise level is reduced.
Therefore, Theorem~\ref{theorem:known:variance} formalizes the impacts of the experimental properties (i.e., the total budget and the overall noise level) on the performance of BTS-RED-Known.

\textbf{Homoscedastic Noise.}
In the special case of homoscedastic noise, i.e., $\sigma^2(\boldsymbol{x})=\sigma^2_{\text{const}},\forall \boldsymbol{x}\in\mathcal{X}$, then $n_t = \lceil\sigma^2_{\text{const}}/R^2\rceil \triangleq n_{\text{const}}$ and $b_t=\lfloor\mathbb{B}/n_{\text{const}}\rfloor \triangleq b_{0},\forall t\in[T]$. 
That is, our algorithm reduces to standard (synchronous) batch TS proposed in~\cite{kandasamy2018parallelised} where the batch size is $b_0$ and every query is replicated $n_{\text{const}}$ times.
In this case, the regret upper bound becomes: $R_T = \widetilde{\mathcal{O}}(R \frac{1}{\sqrt{b_{0}}} \sqrt{T \Gamma_{Tb_{0}}} (1+\sqrt{\Gamma_{Tb_{0}}}) )$ (Appendix~\ref{app:sec:simplified:regret:constant:noise:var}).

\textbf{Theoretical Guideline on the Choice of $R^2$.} 
Theorem~\ref{theorem:known:variance} also provides an interesting insight on the choice of the effective noise variance $R^2$.
In particular, our regret upper bound 
depends on $R^2$ through the term 
$\sqrt{ R^2 / [ \mathbb{B}/(\sigma^2_{\max}/R^2+1)-1 ]}$.\footnote{To simplify the derivations, we have replaced the term $\lceil \sigma^2_{\max}/R^2 \rceil$ by its upper bound $\sigma^2_{\max}/R^2 + 1$, after which the resulting regret upper bound is still valid.}
By taking the derivative of this term w.r.t.~$R^2$, we have shown (Appendix~\ref{app:subsec:choice:of:r}) that the value of $R^2$ that minimizes this term is obtained at $R^2 = \sigma^2_{\max} (\sqrt{\mathbb{B}}+1) / (\mathbb{B}-1)$.
In other words, $R^2$ should be chosen as a fraction of $\sigma^2_{\max}$ (assuming $\mathbb{B}>4$ s.t.~$(\sqrt{\mathbb{B}}+1)/(\mathbb{B}-1) < 1$).
In this case,
increasing the total budget $\mathbb{B}$ naturally encourages more replications.
Specifically, increasing $\mathbb{B}$ reduces $(\sqrt{\mathbb{B}}+1) / (\mathbb{B}-1)$ and hence decreases the value of $R^2$, which consequently encourages the use of larger $n_t$'s (line 6 of Algo.~\ref{algo:known:variance}) and allows every selected input to be replicated more times. 
For example, when the total budget is $\mathbb{B}=16$, $R^2$ should be chosen as $R^2=\sigma^2_{\max}/3$; when $\mathbb{B}=100$, then we have 
$R^2=\sigma^2_{\max}/9$.
We will follow this theory-inspired choice of $R^2$ in our experiments in Sec.~\ref{sec:experiments} (with slight modifications).

\textbf{Improvement over Uniform Sample Allocation.}
For the naive baseline of uniform sample allocation (i.e., replicating every input a fixed number $n_0\leq\mathbb{B}$ of times), the resulting effective observation noise would be $(\sigma_{\max}/\sqrt{n_0})$-sub-Gaussian.
This would result in a regret upper bound which can be obtained by replacing the term $\sqrt{R^2 / (\mathbb{B} / \lceil\frac{\sigma^2_{\max}}{R^2}\rceil - 1)}$ (Theorem \ref{theorem:known:variance}) by $\sigma_{\max}/\sqrt{n_0}$ (for simplicity, we have ignored the non-integer conditions, i.e., the ceiling operators).
Also note that with our optimal choice of $R^2$ (the paragraph above), it can be easily verified that the term $\sqrt{R^2 / (\mathbb{B} / \lceil\frac{\sigma^2_{\max}}{R^2}\rceil - 1)}$ (Theorem \ref{theorem:known:variance}) can be simplified to $\sigma_{\max} / \sqrt{\mathbb{B}}$.
Therefore, given that $n_0\leq\mathbb{B}$, our regret upper bound (with the scaling of $\sigma_{\max} / \sqrt{\mathbb{B}}$) is guaranteed to be no worse than that of uniform sample allocation (with the scaling of $\sigma_{\max} / \sqrt{n_0}$).

\vspace{-1.2mm}
\subsection{BTS-RED with Unknown Noise Variance Function}
\vspace{-1.2mm}
\label{subsec:bts-red:unknown:noise:var}
Here we consider the more common scenario where the noise variance function $\sigma^2(\cdot)$ is unknown by extending BTS-RED-Known while preserving its theoretical guarantee.

\vspace{-1.2mm}
\subsubsection{Modeling of Noise Variance Function}
\label{subsubsec:unknown:modeling:of:noise:var}
\vspace{-1.2mm}
We use a separate GP (denoted as $\mathcal{GP}'$) to model the negative noise variance function $g(\cdot)=-\sigma^2(\cdot)$ and use it to build a high-probability upper bound $U^{\sigma^2}_t(\cdot)$ on the noise variance function $\sigma^2(\cdot)$.\footnote{Here we have modeled $-\sigma^2(\cdot)$ (instead of $\log\sigma^2(\cdot)$ as done by some previous works) because it allows us to naturally derive our theoretical guarantees, and as we show in our experiments (Sec.~\ref{sec:experiments}), it indeed allows our algorithms to achieve compelling empirical performances. We will explore modelling $\log\sigma^2(\cdot)$ in future work to see if it leads to further empirical performance gains.}
After this, we can modify the criteria for selecting $n^{(b)}_t$ (i.e., line 6 of Algo.~\ref{algo:known:variance}) to be $n^{(b)}_t=\lceil U^{\sigma^2}_t(\boldsymbol{x}^{(b)}_t)/R^2 \rceil$, which ensures that $U^{\sigma^2}_t(\boldsymbol{x}^{(b)}_t)/n^{(b)}_t \leq R^2$.
As a result, the condition of $\sigma^2(\boldsymbol{x}^{(b)}_t)/n^{(b)}_t \leq R^2$ is still satisfied (with high probability), which implies that \emph{the observed empirical mean at every queried $\boldsymbol{x}^{(b)}_t$ is still $R-$sub-Gaussian} (Sec.~\ref{subsubsec:the:bts:red:known:algorithm}) and theoretical guarantee of Theorem~\ref{theorem:known:variance} is preserved.
To construct $\mathcal{GP}'$, we use the (negated) unbiased 
empirical noise variance $\widetilde{y}^{(b)}_t$ as the noisy observation:
\begin{equation}
\widetilde{y}^{(b)}_t=-1 / (n^{(b)}_t-1) \sum\nolimits^{n^{(b)}_t}_{n=1}(y^{(b)}_{t,n}-y^{(b)}_t)^2 = g(\boldsymbol{x}^{(b)}_t) +\epsilon'
\label{eq:empirical:noise:var}
\end{equation}
where $g(\boldsymbol{x}^{(b)}_t)=-\sigma^2(\boldsymbol{x}^{(b)}_t)$ is the 
negative noise variance at $\boldsymbol{x}^{(b)}_t$, and $\epsilon'$ is the noise.
In BTS-RED-Unknown, we use pairs of $\{(\boldsymbol{x}^{(b)}_t, \widetilde{y}^{(b)}_t)\}$ to update the posterior of $\mathcal{GP}'$. 
We impose a minimum number of replications $n_{\min}\geq2$ for every queried input to ensure reliable estimations of $\widetilde{y}^{(b)}_t$.

\vspace{-1.2mm}
\subsubsection{Upper Bound on Noise Variance Function}
\label{subsubsec:upper:bound:on:noise:var:func}
\vspace{-1.2mm}
\textbf{Assumptions.}
Similar to Theorem~\ref{theorem:known:variance}, we assume that $g$ lies in an RKHS associated with an SE kernel $k'$: $\norm{g}_{\mathcal{H}_{k'}}\leq B'$ for some $B'>0$, which intuitively assumes that \emph{the (negative) noise variance varies smoothly across the domain} $\mathcal{X}$. 
We also assume that the noise $\epsilon'$ is $R'$-sub-Gaussian and justify this below by showing that $\epsilon'$ is bounded (with high probability).

\textbf{$\epsilon'$ is $R'$-sub-Gaussian.}
Since the empirical variance of a Gaussian distribution~\eqref{eq:empirical:noise:var} 
follows a Chi-squared distribution, 
we can use the concentration of Chi-squared distributions to show that with probability of $\geq 1-\alpha$, $\epsilon'$ is bounded within 
$[L_{\alpha},U_{\alpha}]$, 
where $L_{\alpha} = \sigma^2_{\min}(\chi^2_{n_{\min}-1,\alpha/2} / (n_{\min}-1) - 1), U_{\alpha} =\sigma^2_{\max}(\chi^2_{n_{\min}-1,1-\alpha/2} / (n_{\min}-1)-1)$.
Here $\chi^2_{n_{\min}-1,\eta}$ denotes $\eta^{\text{th}}$-quantile of the Chi-squared distribution with $n_{\min}-1$ degrees of freedom ($\eta=\alpha/2$ or $1-\alpha/2$).
By choosing $\alpha=\delta/(4T\mathbb{B})$ ($\delta$ is from Theorem~\ref{theorem:known:variance}), we can ensure that with probability of $\geq 1-\delta/4$, $\epsilon'$ is bounded within $[L_{\alpha},U_{\alpha}]$ for all $\boldsymbol{x}^{[b]}_t$.
In other words, with probability of $\geq 1-\delta/4$, the noise $\epsilon'$ in~\eqref{eq:empirical:noise:var} is zero-mean and bounded within $[L_{\alpha},U_{\alpha}]$, which indicates that $\epsilon'$ is $R'$-sub-Gaussian with $R'=(U_{\alpha}-L_{\alpha})/2$.
More details are given in Appendix~\ref{app:sec:confidence:bound:noise}.
Note that the value of $R'$ derived here is expected to be overly pessimistic, so, we expect smaller values of $R'$ to be applicable in practice.

\textbf{Upper Bound Construction.}
With the assumptions of $\norm{g}_{\mathcal{H}_{k'}}\leq B'$ and $\epsilon'$ is $R'$-sub-Gaussian, we can construct the upper bound $U^{\sigma^2}_t(\cdot)$.
Denote by $\Gamma'_{\tau_{t-1}}$ the maximum information gain about $g$ from any $\tau_{t-1}=\sum^{t-1}_{t'=1}b_{t'}$ observations, define $\beta'_t \triangleq B' + R'\sqrt{2(\Gamma'_{\tau_{t-1}}+1+\log(4/\delta))}$, and represent the GP posterior mean and standard deviation for $\mathcal{GP}'$ as $\mu'_{t-1}(\cdot)$ and $\sigma'_{t-1}(\cdot)$.
Then 
we have that 
\begin{equation}
|\mu'_{t-1}(\boldsymbol{x}) - g(\boldsymbol{x})| \leq \beta'_t \sigma'_{t-1}(\boldsymbol{x}), \quad \forall \boldsymbol{x}\in\mathcal{X},t\in [T]
\label{eq:confidence:bound:noise:var:func}
\end{equation}
with probability of $\geq 1-\delta/2$.
The error probabilities come from applying Theorem 2 of~\cite{chowdhury2017kernelized} ($\delta/4$) and assuming that $\epsilon'$ is $R'$-sub-Gaussian ($\delta/4$).
This implies that $-\sigma^2(\boldsymbol{x}) = g(\boldsymbol{x}) \geq \mu'_{t-1}(\boldsymbol{x}) - \beta'_t\sigma'_{t-1}(\boldsymbol{x})$, and hence $\sigma^2(\boldsymbol{x}) \leq -\mu'_{t-1}(\boldsymbol{x}) + \beta'_t \sigma'_{t-1}(\boldsymbol{x}),\forall \boldsymbol{x}\in\mathcal{X},t\in[T]$.
Therefore, we can choose the upper bound on the noise variance (Sec.~\ref{subsubsec:unknown:modeling:of:noise:var}) as $U^{\sigma^2}_t(\boldsymbol{x})=-\mu'_{t-1}(\boldsymbol{x}) + \beta'_t \sigma'_{t-1}(\boldsymbol{x})$.

\textbf{BTS-RED-Unknown Algorithm.}
To summarize, we can obtain BTS-RED-Unknown (Algo.~\ref{algo:unknown:variance}, Appendix~\ref{app:sec:bts-red-unknonw}) by modifying the selection criterion of $n_t$ (line 6 of Algo.~\ref{algo:known:variance}) to be $n^{(b)}_t=\lceil (-\mu'_{t-1}(\boldsymbol{x}^{(b)}_t) + \beta'_t \sigma'_{t-1}(\boldsymbol{x}^{(b)}_t))/R^2 \rceil$.
As a result, BTS-RED-Unknown enjoys the same regret upper bound as Theorem~\ref{theorem:known:variance} 
(after replacing $\delta$ in Theorem~\ref{theorem:known:variance} by $\delta/2$).
Intuitively, using an upper bound $U^{\sigma^2}_t(\boldsymbol{x}^{(b)}_t)$ in the selection of $n_t$ implies that if we are uncertain about the noise variance at some input location $\boldsymbol{x}^{(b)}_t$ (i.e., if $\sigma'_{t-1}(\boldsymbol{x}^{(b)}_t)$ is large), we choose to be \emph{conservative} and use a large number of replications $n_t$.

\vspace{-1.5mm}
\section{Mean-Var-BTS-RED}
\label{sec:mean:var}
\vspace{-1.5mm}
We extend BTS-RED-Unknown (Sec.~\ref{subsec:bts-red:unknown:noise:var}) 
to maximize the mean-variance objective function: 
$h_{\omega}(\boldsymbol{x})=\omega f(\boldsymbol{x}) - (1-\omega)\sigma^2(\boldsymbol{x})$, 
to introduce Mean-Var-BTS-RED (Algo.~\ref{algo:mean:var}).
In contrast to BTS-RED-Unknown, Mean-Var-BTS-RED chooses every input query $\boldsymbol{x}^{(b)}_t$ by maximizing the weighted combination of two functions sampled from, respectively, the posteriors of $\mathcal{GP}$ and $\mathcal{GP}'$ (lines 5-6 of Algo.~\ref{algo:mean:var}), while $n_t$ is chosen (line 7 of Algo.~\ref{algo:mean:var}) in the same way as BTS-RED-Unknown.
This naturally induces a preference for inputs with both large values of $f$ and small values of $\sigma^2(\cdot)$, and hence allows us to derive an upper bound on $R^{\text{MV}}_T$ (proof in Appendix~\ref{app:proof:theorem:mean:var}):
\begin{theorem}[Mean-Var-BTS-RED]
\label{theorem:mean:var}
With probability of at least $1-\delta$, 
\begin{equation*}
\begin{split}
R^{\text{MV}}_T = \widetilde{\mathcal{O}}\Big( \frac{e^C \sqrt{T}}{\sqrt{\mathbb{B}/\lceil \frac{\sigma^2_{\max}}{R^2} \rceil - 1}} \Big[ \omega R \sqrt{\Gamma_{T\mathbb{B}}}(\sqrt{\Gamma_{T\mathbb{B}}} + \sqrt{C}) + (1-\omega)R'\sqrt{\Gamma'_{T\mathbb{B}}}(\sqrt{\Gamma'_{T\mathbb{B}}} + \sqrt{C}) \Big]
\Big).
\end{split}
\end{equation*}
$C$ is a constant s.t.~$\max_{A\subset \mathcal{X},|A|\leq \mathbb{B}} \mathbb{I}\left( f;\boldsymbol{y}_A | \boldsymbol{y}_{1:t-1} \right) \leq C$, $\max_{A\subset \mathcal{X},|A|\leq \mathbb{B}} \mathbb{I}\left( g;\widetilde{\boldsymbol{y}}_A | \widetilde{\boldsymbol{y}}_{1:t-1} \right) \leq C$.
\end{theorem}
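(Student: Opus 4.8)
\emph{Proof plan.} The starting point is to exploit the linearity of the mean-variance objective $h_\omega=\omega f+(1-\omega)g$ to split the instantaneous regret at every selected query into an $f$-component and a $g$-component:
\begin{equation*}
h_\omega(\boldsymbol{x}^*_\omega) - h_\omega(\boldsymbol{x}^{(b)}_t) = \omega\big[f(\boldsymbol{x}^*_\omega) - f(\boldsymbol{x}^{(b)}_t)\big] + (1-\omega)\big[g(\boldsymbol{x}^*_\omega) - g(\boldsymbol{x}^{(b)}_t)\big].
\end{equation*}
Summing over $t$ and taking the per-batch minimum, the plan is to bound the $\omega$-weighted sum by (a factor times) the cumulative-regret bound of Theorem~\ref{theorem:known:variance} instantiated for $\mathcal{GP}$ and $f$, and the $(1-\omega)$-weighted sum by the analogous quantity instantiated for $\mathcal{GP}'$ and $g$. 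This reproduces exactly the additive $\omega(\cdots)+(1-\omega)(\cdots)$ shape of the claimed bound, with $(R,\Gamma_{T\mathbb{B}})$ governing the first term and $(R',\Gamma'_{T\mathbb{B}})$ the second.

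To carry out each half I would reuse the machinery behind Theorem~\ref{theorem:known:variance}. First I would set up two simultaneous confidence events: the GP-confidence bound for $f$ with width $\beta_t\sigma_{t-1}(\cdot)$ from~\cite{chowdhury2017kernelized}, and the already-established bound~\eqref{eq:confidence:bound:noise:var:func} for $g$ with width $\beta'_t\sigma'_{t-1}(\cdot)$, allocating the failure probability $\delta$ across these two events (and the sub-Gaussianity guarantee for $\epsilon'$ of Sec.~\ref{subsubsec:upper:bound:on:noise:var:func}) by a union bound. The observation that makes Thompson sampling go through for the \emph{combination} is that, since $f^{(b)}_t$ and $g^{(b)}_t$ are drawn independently from the two inflated posteriors, the sampled objective $h^{(b)}_t=\omega f^{(b)}_t+(1-\omega)g^{(b)}_t$ is itself Gaussian at every input, with mean $\omega\mu_{t-1}+(1-\omega)\mu'_{t-1}$ and variance $\omega^2\beta_t^2\sigma^2_{t-1}+(1-\omega)^2\beta_t'^2\sigma_{t-1}'^2$. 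Hence the usual Gaussian anti-concentration argument applies \emph{directly} to $h^{(b)}_t(\boldsymbol{x}^*_\omega)$ to obtain optimism, i.e.~$h^{(b)}_t(\boldsymbol{x}^*_\omega)\ge h_\omega(\boldsymbol{x}^*_\omega)$ with at least constant probability, without requiring $f^{(b)}_t$ and $g^{(b)}_t$ to be optimistic separately.

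Combining optimism at $\boldsymbol{x}^*_\omega$ with the two confidence bounds at the maximiser $\boldsymbol{x}^{(b)}_t=\arg\max h^{(b)}_t$ then bounds each per-query regret by a weighted sum $\omega\,\beta_t(c_t+1)\sigma_{t-1}(\boldsymbol{x}^{(b)}_t)+(1-\omega)\,\beta'_t(c_t+1)\sigma'_{t-1}(\boldsymbol{x}^{(b)}_t)$ of posterior standard deviations, with $c_t$ the Gaussian-concentration constant of the samples and the constant-probability optimism handled through the batch exactly as in Theorem~\ref{theorem:known:variance}. Two batch-specific steps then follow identically: (i) because within an iteration the posteriors are \emph{stale} (not updated across the batch), I would pass from the stale $\sigma_{t-1},\sigma'_{t-1}$ to the ``fresh'' variances entering the information-gain sums at the cost of the multiplicative factor $e^C$, using the control of the posterior-variance ratio by the per-batch information gain from~\cite{desautels2014parallelizing} (which is precisely why $C$ is required to bound the information gain of \emph{both} $f$ and $g$); and (ii) since the regret records $\min_{b\in[b_t]}$ over a batch of size $b_t\ge \mathbb{B}/\lceil\sigma^2_{\max}/R^2\rceil-1$ (each replication count being at most $\lceil\sigma^2_{\max}/R^2\rceil$), the averaging over the batch yields the $1/\sqrt{\mathbb{B}/\lceil\sigma^2_{\max}/R^2\rceil-1}$ prefactor.

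It remains to sum over $t$. Applying Cauchy--Schwarz together with the standard bounds $\sum_t\sigma^2_{t-1}(\boldsymbol{x}^{(b)}_t)=\mathcal{O}(\Gamma_{T\mathbb{B}})$ and $\sum_t\sigma_{t-1}'^2(\boldsymbol{x}^{(b)}_t)=\mathcal{O}(\Gamma'_{T\mathbb{B}})$ --- valid because at most $T\mathbb{B}$ observations are collected in total --- and using $\beta_t=\widetilde{\mathcal{O}}(\sqrt{\Gamma_{T\mathbb{B}}})$, $\beta'_t=\widetilde{\mathcal{O}}(\sqrt{\Gamma'_{T\mathbb{B}}})$ produces the two terms $\omega R\sqrt{\Gamma_{T\mathbb{B}}}(\sqrt{\Gamma_{T\mathbb{B}}}+\sqrt{C})$ and $(1-\omega)R'\sqrt{\Gamma'_{T\mathbb{B}}}(\sqrt{\Gamma'_{T\mathbb{B}}}+\sqrt{C})$, giving the stated bound after absorbing log factors into $\widetilde{\mathcal{O}}$. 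The step I expect to be the main obstacle is the optimism argument for the combined sample: one must verify that the variance of $h^{(b)}_t(\boldsymbol{x}^*_\omega)$ is large enough relative to the \emph{combined} confidence width $\omega\beta_t\sigma_{t-1}+(1-\omega)\beta'_t\sigma'_{t-1}$ so that the constant anti-concentration probability holds uniformly in $\omega$; secondarily, the failure-probability budget must be coordinated carefully across the $f$-confidence, $g$-confidence, and $\epsilon'$-sub-Gaussianity events so that everything holds jointly with probability $\ge 1-\delta$.
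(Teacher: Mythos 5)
Your proposal is correct, and most of its skeleton coincides with the paper's proof in Appendix~\ref{app:proof:theorem:mean:var}: two simultaneous confidence events for $f$ and $g$ (with the $\epsilon'$ sub-Gaussianity folded into the latter), a saturated-set Thompson-sampling analysis applied to the \emph{combined} objective $h_\omega$, the $e^C$ staleness factor with $C$ bounding the per-batch information gain of both $f$ and $g$ (Lemma~\ref{lemma:mean:var:C}), the batch-size lower bound $b_t \geq \mathbb{B}/\lceil \sigma^2_{\max}/R^2\rceil - 1$, and Cauchy--Schwarz plus the chain rule to reach $\Gamma_{T\mathbb{B}}$ and $\Gamma'_{T\mathbb{B}}$. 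The genuine difference is the optimism step. The paper (Lemma~\ref{lemma:prob_unsaturated:mean:var}, via Lemmas~\ref{lemma:uniform_lower_bound} and~\ref{lemma:uniform_lower_bound:mean:var}) lower-bounds $\mathbb{P}\big(h^{(b)}_t(\boldsymbol{x}^*_\omega) > h_\omega(\boldsymbol{x}^*_\omega)\big)$ by requiring \emph{both} samples to be optimistic simultaneously and invoking independence of the two draws, obtaining $p^2$ with $p = 1/(4e\sqrt{\pi})$; you instead use that same independence to conclude that $h^{(b)}_t(\boldsymbol{x}^*_\omega)$ is itself Gaussian with standard deviation $s = \big(\omega^2\beta_t^2\sigma^2_{t-1} + (1-\omega)^2\beta_t'^2\sigma_{t-1}'^2\big)^{1/2}$ and apply anti-concentration once. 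The check you flag as the potential obstacle does go through uniformly in $\omega$: the combined width $w = \omega\beta_t\sigma_{t-1} + (1-\omega)\beta'_t\sigma'_{t-1}$ satisfies $w^2 \leq 2s^2$ by AM--GM, so the optimism probability is at least $\mathbb{P}(Z > \sqrt{2})$ for a standard normal $Z$ --- a universal constant strictly larger than the paper's $p^2$, which only improves the $(1+28/p^2)$-type factor in Lemma~\ref{lemma:upper_bound_expected_regret:mean:var} and leaves the $\widetilde{\mathcal{O}}$ statement unchanged. Two caveats. First, your opening framing --- bounding the $\omega$-weighted regret by ``Theorem~\ref{theorem:known:variance} instantiated for $f$'' --- cannot be taken literally, since $\boldsymbol{x}^*_\omega$ need not maximize $f$ (that component of the regret can even be negative) and queries maximize $h^{(b)}_t$, not $f^{(b)}_t$; your detailed plan correctly abandons this and performs the $\omega$-split at the level of posterior standard deviations inside a single joint analysis, which is exactly what the paper does. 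Second, you leave implicit the super-martingale/Azuma--Hoeffding step that converts the conditional expected-regret bound into the stated high-probability cumulative bound; it consumes its own share of the failure probability (the paper allots $\delta/3$ each to the two confidence events and to Azuma), so the budget coordination you mention must include it.
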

\begin{algorithm}[h]
\begin{algorithmic}[1]
	\FOR{$t=1,2,\ldots, T$}
	    \STATE $b=0,n^{(0)}_t=0$
	    \WHILE{$\sum^b_{b'=0}n^{(b')}_t < \mathbb{B}$}
	    \STATE $b\leftarrow b+1$
	    \STATE Sample $f^{(b)}_t$ from $\mathcal{GP}(\mu_{t-1}(\cdot),\beta_t^2\sigma^2_{t-1}(\cdot,\cdot))$, and $g^{(b)}_t$ from $\mathcal{GP}'(\mu'_{t-1}(\cdot),{\beta'_t}^2{\sigma'_{t-1}}^2(\cdot,\cdot))$
	    \STATE $\boldsymbol{x}^{(b)}_t={\arg\max}_{\boldsymbol{x}\in\mathcal{X}}[\omega f^{(b)}_t(\boldsymbol{x}) + (1-\omega)g^{(b)}_t(\boldsymbol{x})]$
	    \STATE $n^{(b)}_t=\lceil (-\mu'_{t-1}(\boldsymbol{x}^{(b)}_t) + \beta'_t \sigma'_{t-1}(\boldsymbol{x}^{(b)}_t))/R^2 \rceil$
	    \ENDWHILE
	    \STATE $b_t = b-1$
	    \STATE \textbf{for} $b\in[b_t]$, query $\boldsymbol{x}^{(b)}_t$ with $n^{(b)}_t$ parallel processes
	    \STATE \textbf{for} $b\in[b_t]$, observe $\{y^{(b)}_{t,n}\}_{n\in[n^{(b)}_t]}$. Calculate their mean $y^{(b)}_t$ and (negated) variance $\widetilde{y}^{(b)}_t$~\eqref{eq:empirical:noise:var}
	    \STATE Use $\{(\boldsymbol{x}^{(b)}_t,y^{(b)}_t)\}_{b\in[b_t]}$ to update posterior $\mathcal{GP}$, $\{(\boldsymbol{x}^{(b)}_t,\widetilde{y}^{(b)}_t)\}_{b\in[b_t]}$ to update posterior $\mathcal{GP}'$
    \ENDFOR
\end{algorithmic}
\caption{Mean-Var-BTS-RED.}
\label{algo:mean:var}
\end{algorithm}
\vspace{-2mm}
Note that $\Gamma_{T\mathbb{B}}$ and $\Gamma'_{T\mathbb{B}}$ may differ since the SE kernels $k$ and $k'$, which are used to model $f$ and $g$ respectively, may be different. 
Similar to Theorem~\ref{theorem:known:variance}, if we run uncertainty sampling for a finite number (independent of $T$) of initial iterations 
using either $k$ or $k'$ (depending on whose lengthscale is smaller),
then $C$ can be chosen to be a constant independent of $\mathbb{B}$ and $T$. Refer to Lemma~\ref{lemma:mean:var:C} (Appendix~\ref{app:proof:theorem:mean:var}) for more details.
As a result, 
the regret upper bound in Theorem~\ref{theorem:mean:var} is also sub-linear since both $k$ and $k'$ are SE kernels and hence $\Gamma_{T\mathbb{B}}=\mathcal{O}(\log^{d+1}(T))$ and $\Gamma'_{T\mathbb{B}}=\mathcal{O}(\log^{d+1}(T))$.
The regret upper bound can be viewed as a weighted combination of the regrets associated with $f$ and $g$.
Intuitively, if $\omega$ is larger (i.e., if we place more emphasis on maximizing $f$ than $g$), then a larger proportion of the regrets is incurred due to our attempt to maximize the function $f$.

\vspace{-1.8mm}
\section{Experiments}
\label{sec:experiments}
\vspace{-1.8mm}
For BTS-RED-Known and BTS-RED-Unknown which only aim to maximize the objective function $f$, we set $n_{\max}=\mathbb{B}/2$ in the first $T/2$ iterations and $n_{\max}=\mathbb{B}$ subsequently (see Sec.~\ref{subsubsec:the:bts:red:known:algorithm} for more details), and set $n_{\max}=\mathbb{B}$ in all iterations for Mean-Var-BTS-RED.
We set $n_{\min}=2$ unless specified otherwise, however, it is recommended to make $n_{\min}$ larger in experiments where the overall noise variance is large (e.g., we let $n_{\min}=5$ in Sec.~\ref{subsec:exp:farm}).
We use random search to select the initial inputs instead of the uncertainty sampling initialization method indicated by our theoretical results (Sec.~\ref{subsec:known:var:theoretical:analysis}) because previous work \cite{kandasamy2018parallelised} and our empirical results show that they lead to similar performances (Fig.~\ref{fig:compare:US} in App.~\ref{app:sec:exp:synth}).
We choose the effective noise variance $R^2$ by following our theoretical guideline in Sec.~\ref{subsec:known:var:theoretical:analysis}, 
i.e., $R^2 = \sigma^2_{\max} (\sqrt{\mathbb{B}}+1)/(\mathbb{B}-1)$ which minimizes the regret upper bound in Theorem~\ref{theorem:known:variance}.\footnote{For simplicity, we also follow this guideline from Sec.~\ref{subsec:known:var:theoretical:analysis} to choose $R^2$ for Mean-Var-BTS-RED.}
However, in practice, this choice may not be optimal because we derived it by minimizing an upper bound which is potentially loose (e.g., we have ignored all log factors). 
So, we introduce a tunable parameter $\kappa > 0$ and choose $R^2$ as $R^2 = \kappa \sigma^2_{\max} (\sqrt{\mathbb{B}}+1)/(\mathbb{B}-1)$.
As a result, we both enjoy the flexibility of tuning our preference for the overall number of replications 
(i.e., a smaller $\kappa$ leads to larger $n_t$'s in general) 
and preserve the ability to automatically adapt to the total budget (via $\mathbb{B}$) and the overall noise level (via $\sigma^2_{\max}$).
When the noise variance is unknown (i.e., $\sigma^2_{\max}$ is unknown), we approximate $\sigma^2_{\max}$ by the maximum observed empirical noise variance and update our approximation after every iteration.
To demonstrate the robustness of our methods, we only use two values of $\kappa=0.2$ and $\kappa=0.3$ in all experiments.
Of note, our methods with $\kappa=0.3$ perform the best in almost all experiments (i.e., green curves in all figures), and $\kappa=0.2$ also consistently performs well.

Following the common practice of BO~\cite{contal2013parallel,eriksson2019scalable,kandasamy2018parallelised,letham2020re,nava2022diversified}, we plot the (batch) simple regret or the best observed function value up to an iteration.
In all experiments, we compare 
with the most natural baseline of batch TS with a fixed number of replications.
For mean optimization problems (i.e., maximize $f$), we also compare with standard sequential BO algorithms such as GP-UCB and GP-TS, but they are significantly outperformed by both our algorithms and batch TS which are able to exploit batch evaluations (Secs.~\ref{sec:synth:exp} and~\ref{subsec:exp:farm}).
Therefore, we do not expect existing \emph{sequential} algorithms to achieve comparable performances to our algorithms due to their inability to exploit batch evaluations.
For mean-variance optimization, we additionally compare with the recently introduced Risk-Averse Heteroscedastic BO (RAHBO)~\cite{makarova2021risk} (Sec.~\ref{sec:related:works}), which is the state-of-the-art method for risk-averse BO with replications.
Some experimental details are postponed to Appendix~\ref{app:sec:exp}.

\vspace{-1.5mm}
\subsection{Synthetic Experiments}
\label{sec:synth:exp}
\vspace{-1.5mm}
We sample two functions from two different GPs with the SE kernel (defined on a discrete 1-D domain within $[0,1]$) and use them as 
$f(\cdot)$ and 
$\sigma^2(\cdot)$, respectively.
We use $\mathbb{B}=50$.
\begin{figure}
	\centering
	\begin{tabular}{cccc}
		\hspace{-4mm} \includegraphics[width=0.22\linewidth]{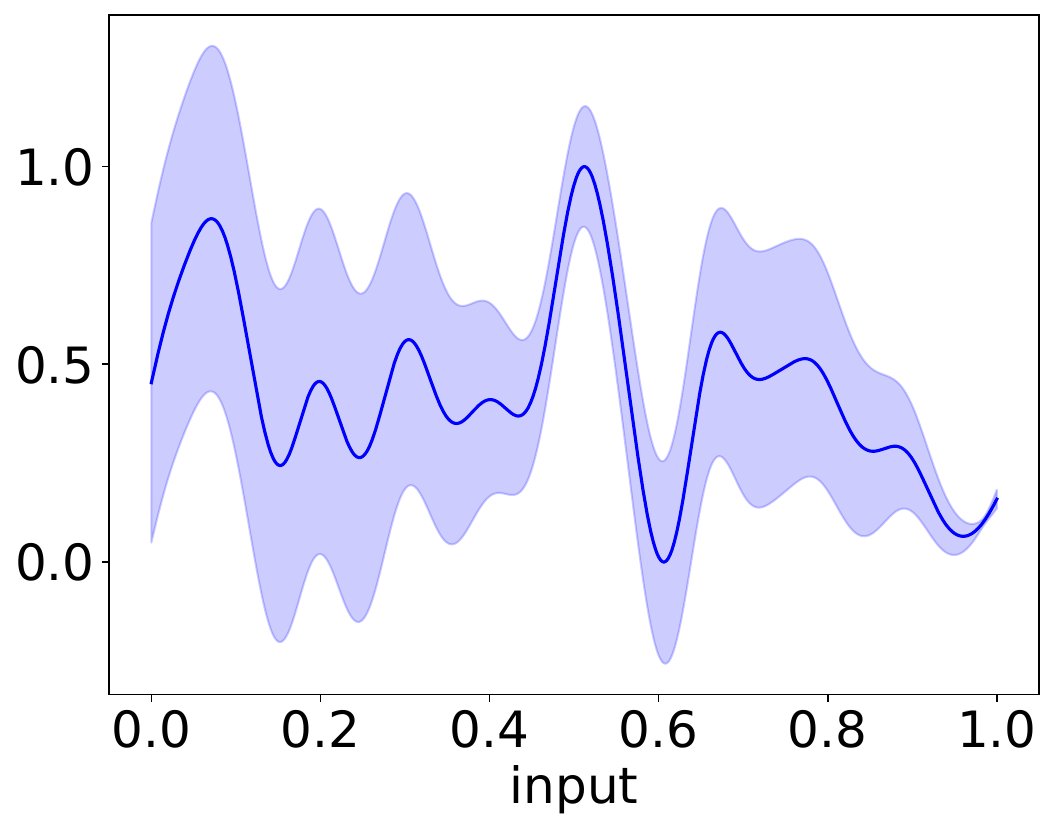}& \hspace{-4mm}
		\includegraphics[width=0.22\linewidth]{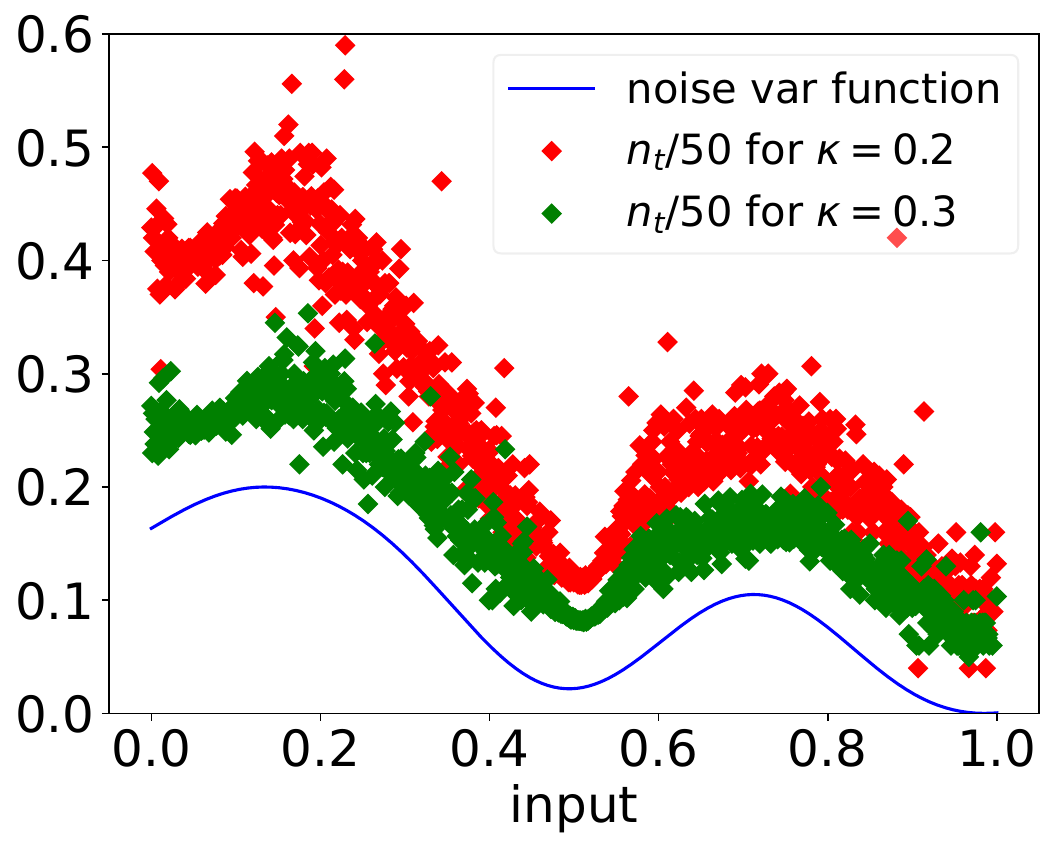}& \hspace{-4mm}
		\includegraphics[width=0.23\linewidth]{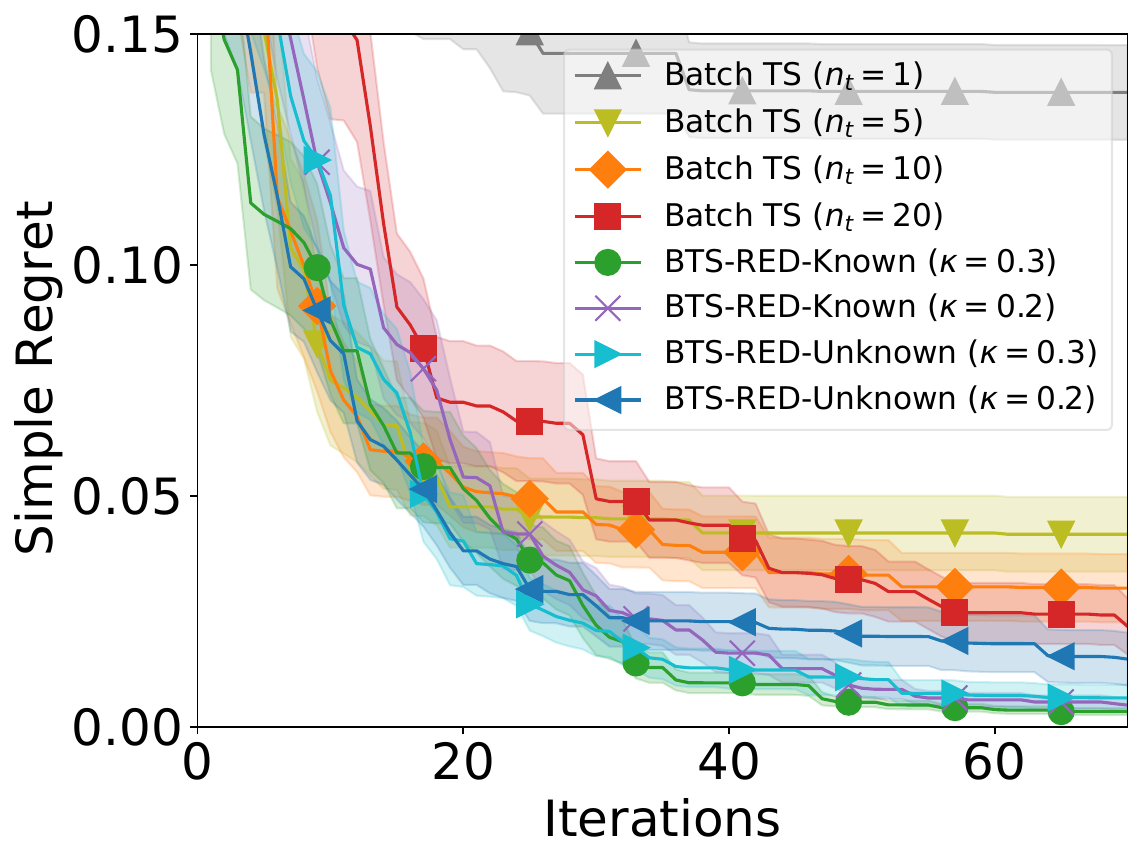}& \hspace{-4mm}
		\includegraphics[width=0.23\linewidth]{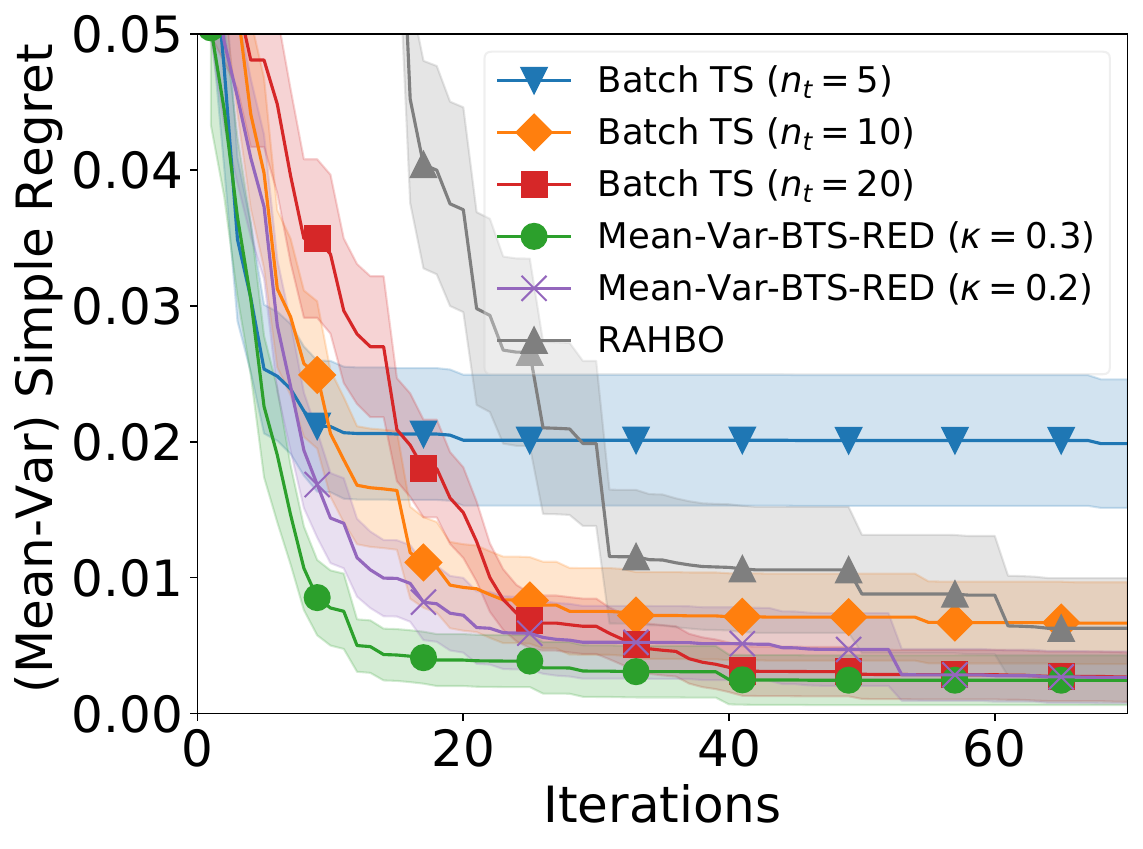}\\
        {(a)} & {(b)} & {(c)} & {(d)}
	\end{tabular}
	\vspace{-2.4mm}
	\caption{(a) Synthetic function for mean optimization (Sec.~\ref{sec:synth:exp}). 
	(b) Average number of replications $n_t$ 
 for
 BTS-RED-Unknown.
	Results for (c) mean and (d) mean-variance optimization.
	}
	\label{fig:synth:synth:func}
\vspace{-6.5mm}
\end{figure}

\textbf{Mean Optimization.}
The mean and noise variance functions used here are visualized in Fig.~\ref{fig:synth:synth:func}a.
This synthetic experiment is used to simulate real-world scenarios where practitioners are \emph{risk-neutral} and hence only aim to select an input with a large mean function value.
After every iteration (batch) $t$, an algorithm reports the selected input with the largest empirical mean from its observation history, and we evaluate the \emph{simple regret} at iteration $t$ as the difference between the objective function values at the global maximum $\boldsymbol{x}^*$ and at the reported input.
To demonstrate the consistency of our performance, we also tested an alternative reporting criteria which reports the input with the larger 
LCB
value in every iteration, and the results (Fig.~\ref{fig:synth:diff:kapp:and:ucb:as:metric}b in Appendix~\ref{app:sec:exp:synth}) are consistent with our main results (Fig.~\ref{fig:synth:synth:func}c).
Fig.~\ref{fig:synth:synth:func}b plots the average $n_t$ (vertical axis) chosen by BTS-RED-Unknown for every queried input (horizontal axis), which shows that larger $n_t$'s are selected for inputs with larger noise variances in general and that a smaller $\kappa=0.2$ indeed increases our preference for larger $n_t$'s.

The results (simple regrets)
are shown in Fig.~\ref{fig:synth:synth:func}c.
As can be seen from 
the figure, 
for Batch TS with a fixed $n_t$, smaller values of $n_t$ such as $n_t=5$ usually lead to faster convergence initially due to the ability to quickly explore more unique inputs, however, their performances deteriorate significantly in the long run due to inaccurate estimations;
in contrast, larger $n_t$'s such as $n_t=20$ result in slower convergence initially yet lead to better performances (than small fixed $n_t$'s) in later stages.
Of note, Batch TS with $n_t=1$ (gray curve) represents standard batch TS ($\mathbb{B}=50$) without replications~\cite{kandasamy2018parallelised}, which underperforms significantly and hence highlights the importance of replications in experiments with large noise variance.
Moreover, our BTS-RED-Known and BTS-RED-Unknown 
(especially with $\kappa=0.3$)
consistently outperform Batch TS with fixed $n_t$.
We also demonstrate our robustness against $\kappa$ in this experiment by showing that our performances are consistent for a wide range of $\kappa$'s (Fig.~\ref{fig:synth:diff:kapp:and:ucb:as:metric}a in App.~\ref{app:sec:exp:synth}).
In addition, we show that sequential BO algorithms (i.e., GP-TS, GP-UCB, and GP-UCB with heteroscedastic GP) which cannot exploit batch evaluations fail to achieve comparable performances to batch TS, BTS-RED and BTS-RED-Unknown (Fig.~\ref{app:fig:synth:add:sequential} in App.~\ref{app:sec:exp:synth}).

\textbf{Mean-variance Optimization.}
Here we evaluate our Mean-Var-BTS-RED.
We simulate this scenario with the synthetic function in Fig.~\ref{fig:synth:synth:func:mean_var}a (App.~\ref{app:sec:exp:synth}), for which the global maximums of the mean and mean-variance ($\omega=0.3$) objective functions are different (Fig.~\ref{fig:synth:synth:func:mean_var}b).
After every iteration (batch) 
$t$, we report the selected input with the largest empirical mean-variance value (i.e., weighted combination of the empirical mean and variance), and evaluate the \emph{mean-variance simple regret} at iteration $t$ as the difference between the values of the mean-variance objective function $h_{\omega}$ at the the global maximum $\boldsymbol{x}^*_{\omega}$ and at the reported input.
The results (Fig.~\ref{fig:synth:synth:func}d) show that our Mean-Var-BTS-RED (again especially with $\kappa=0.3$) outperforms other baselines.
Since RAHBO is sequential and uses a fixed number of replications, we use $\mathbb{B}=50$ replications for every query 
for a fair comparison. RAHBO underperforms here which is likely due to its inability to leverage batch evaluations.

\vspace{-1.8mm}
\subsection{Real-world Experiments on Precision Agriculture}
\label{subsec:exp:farm}
\vspace{-1.8mm}
Plant biologists often need to optimize the growing conditions of plants (e.g., the amount of different nutrients) to increase their yield.
The common practice of manually tuning one nutrient
at a time is considerably inefficient and hence calls for the use of the sample-efficient method of BO.
Unfortunately, plant growths are usually (a) time-consuming and (b) associated with large and heteroscedastic noise.
So, \emph{according to plant biologists, in real lab experiments,} (a) \emph{multiple growing conditions are usually tested in parallel} and (b) \emph{every condition is replicated multiple times to get a reliable outcome}~\cite{kyveryga2018farm}.
This naturally induces a trade-off between evaluating more unique growing conditions vs. replicating every condition more times, and is hence an ideal application for our algorithms.
We tune the pH value (in $[2.5, 6.5]$) and ammonium concentration (denoted as NH3, in $[0, 30000]$ uM).
in order to \emph{maximize the leaf area and minimize the tipburn area} after harvest. 
We perform \emph{real lab experiments} using the input conditions from a regular grid within the 2-D domain, and then use the collected data to learn two separate heteroscedastic GPs 
for, respectively, leaf area and tipburn area.
Each learned GP can output the predicted mean and variance (for leaf area or tipburn area) at every input in the 2-D domain, and can hence be used as the \emph{groundtruth} mean $f(\cdot)$ and noise variance $\sigma^2(\cdot)$ functions.
We perform two sets of experiments, with the goal of maximizing (a) the leaf area and (b) a weighted combination of the leaf area ($\times0.8$) and negative tipburn area ($\times0.2$). For both experiments, we run BTS-RED-Unknown and Mean-Var-BTS-RED to maximize the mean and mean-variance objectives ($\omega=0.975$), respectively.
We set $\mathbb{B}=50$, $n_{\min}=5$ and $n_{\max}=50$.
\begin{figure}
	\centering
	\begin{tabular}{cccc}
		\hspace{-6mm} \includegraphics[width=0.23\linewidth]{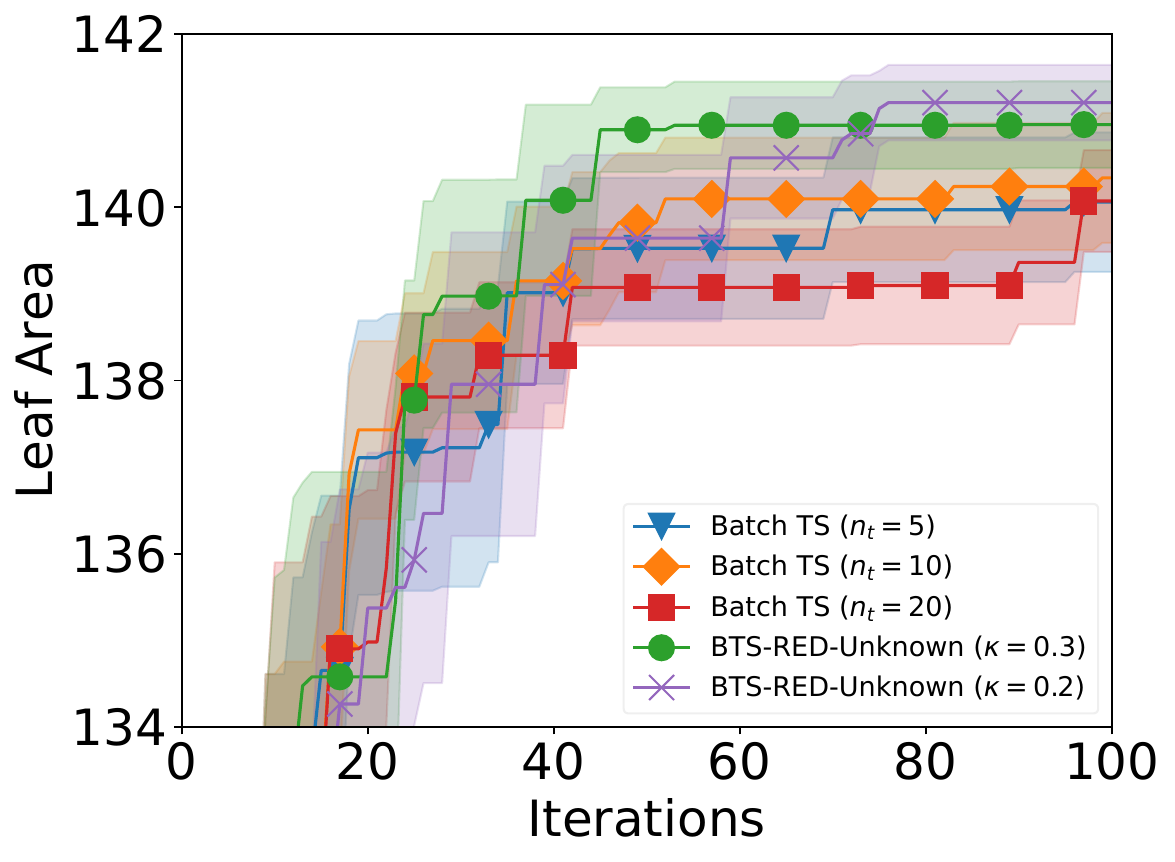}& \hspace{-6.3mm}
		\includegraphics[width=0.23\linewidth]{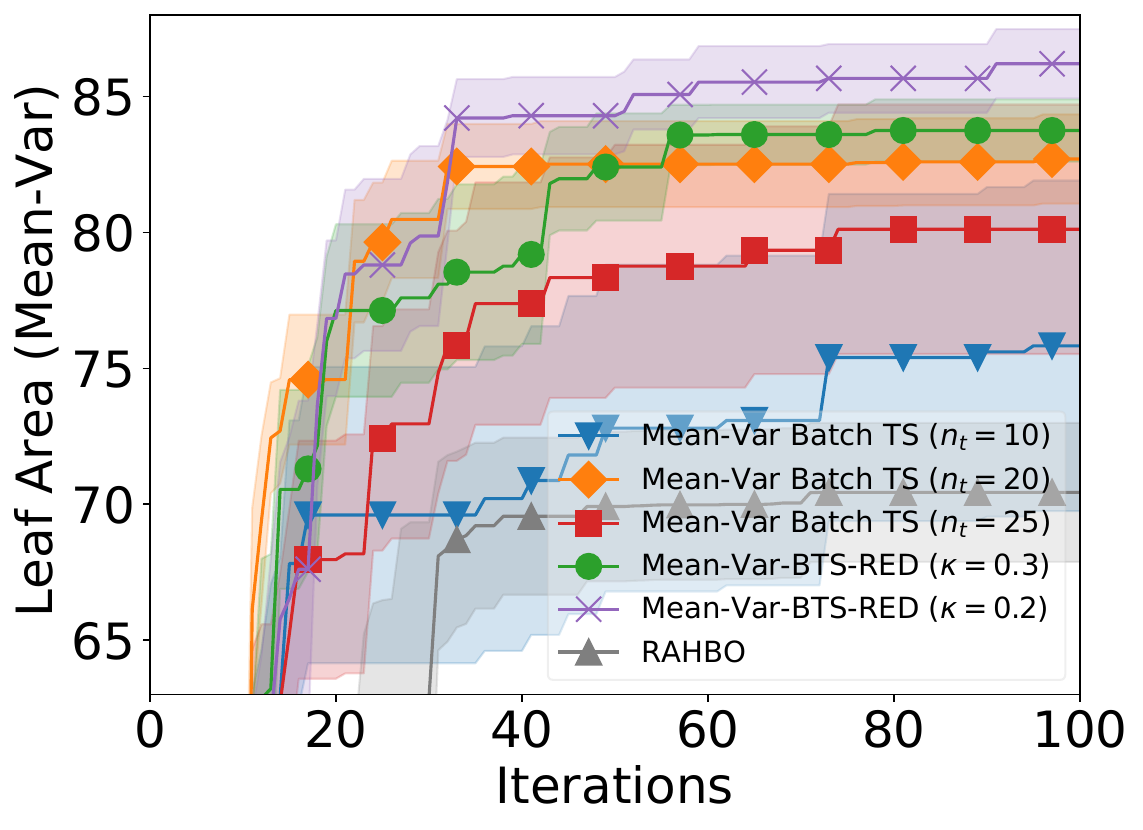}& \hspace{-6.3mm}
		 \includegraphics[width=0.23\linewidth]{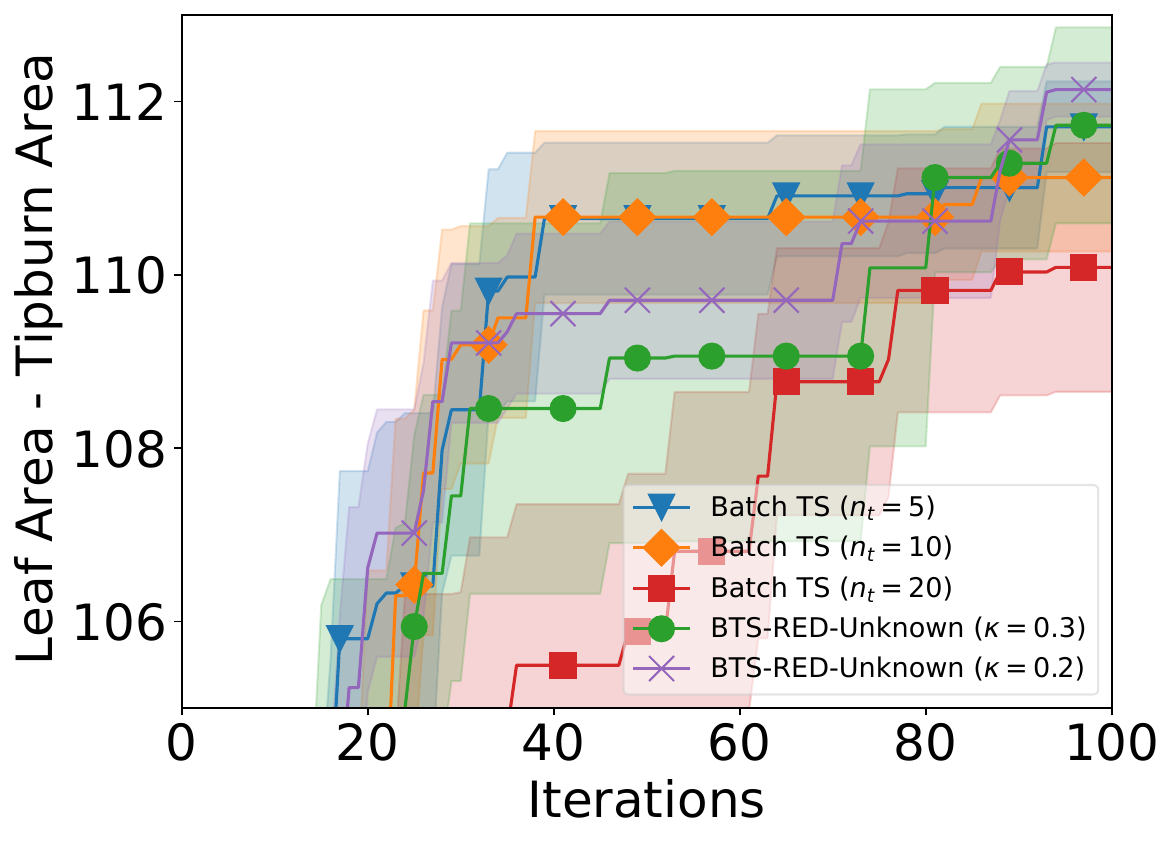}& \hspace{-6mm}
		\includegraphics[width=0.23\linewidth]{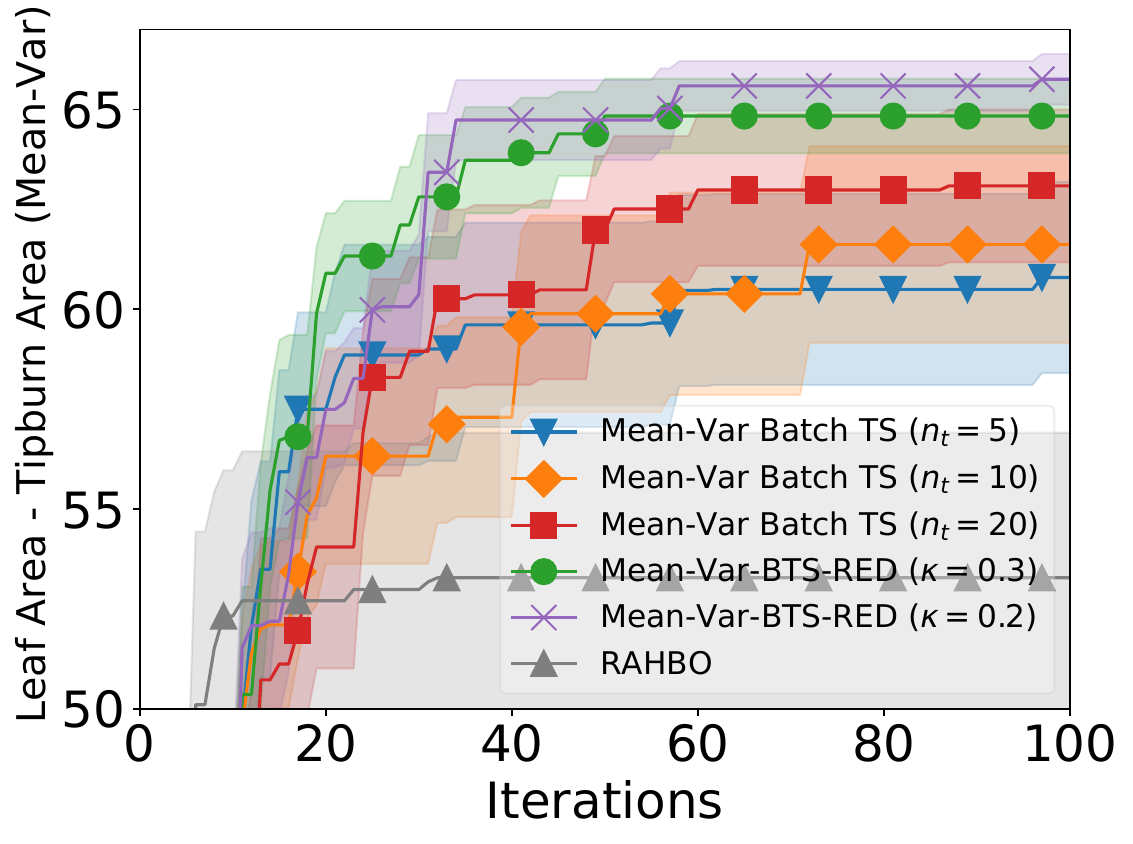}\\
        {(a)} & {(b)} & {(c)} & {(d)}	
	\end{tabular}
	\vspace{-2mm}
	\caption{
	(a) Mean and (b) mean-variance optimization for the leaf area.
	(c) Mean and (d) mean-variance optimization for the weighted combination of leaf area and negative tipburn area.
    }
	\label{fig:exp:farm}
	\vspace{-4mm}
\end{figure}
\begin{figure}
	\centering
	\begin{tabular}{cccc}
		\hspace{-4mm} \includegraphics[width=0.23\linewidth]{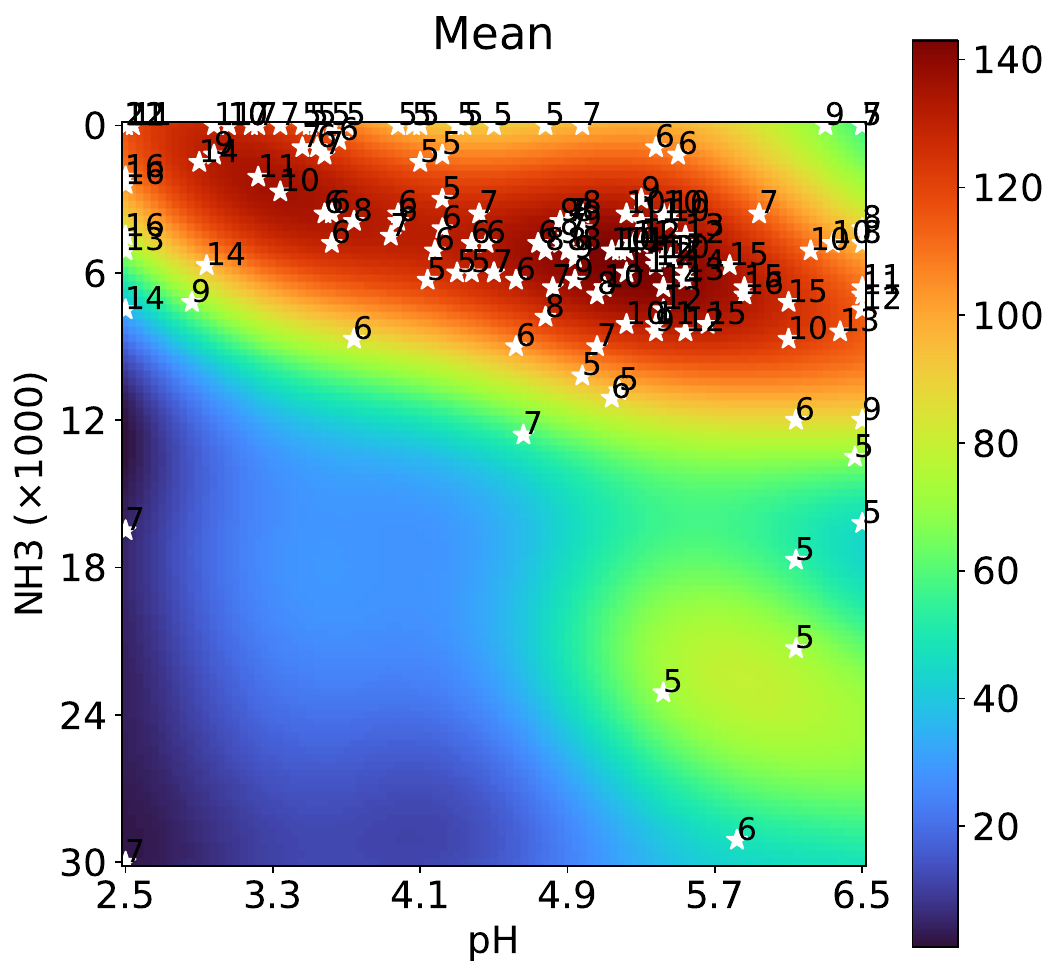}& \hspace{-4mm}
		\includegraphics[width=0.23\linewidth]{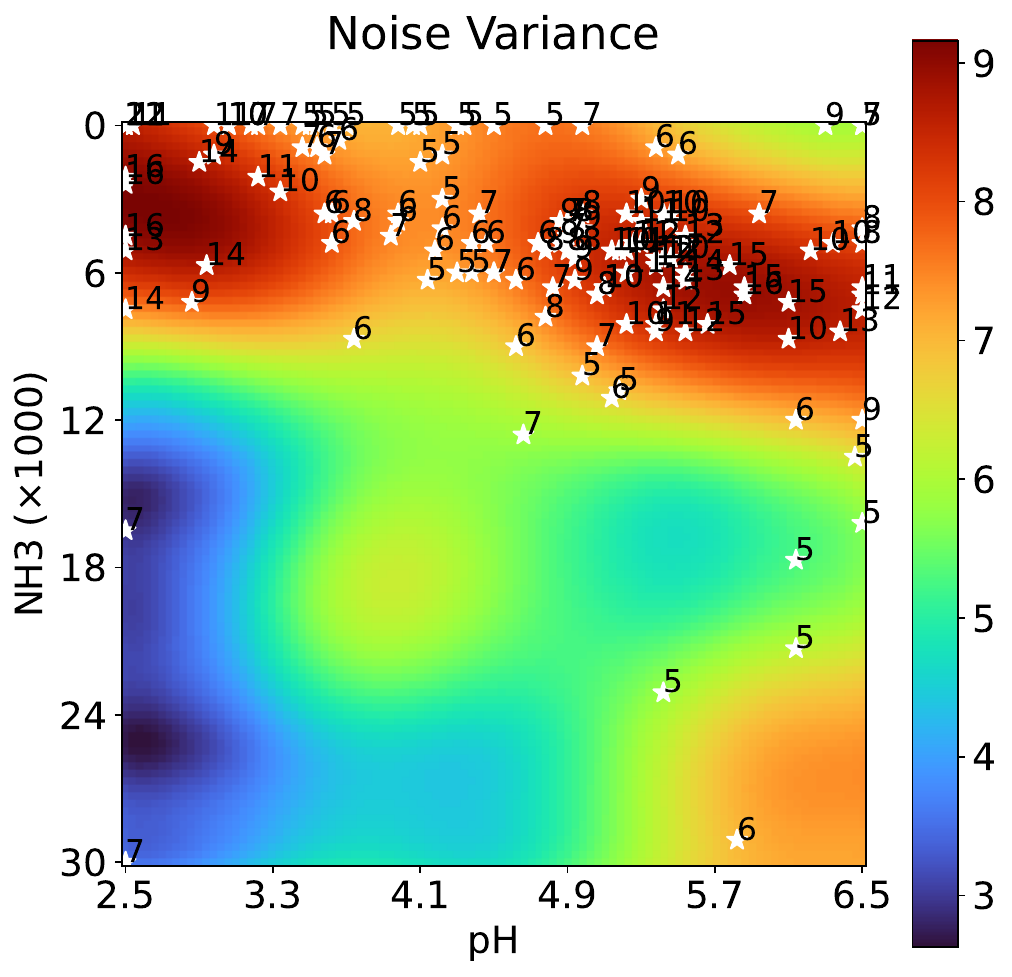}& \hspace{-4mm}
		\includegraphics[width=0.23\linewidth]{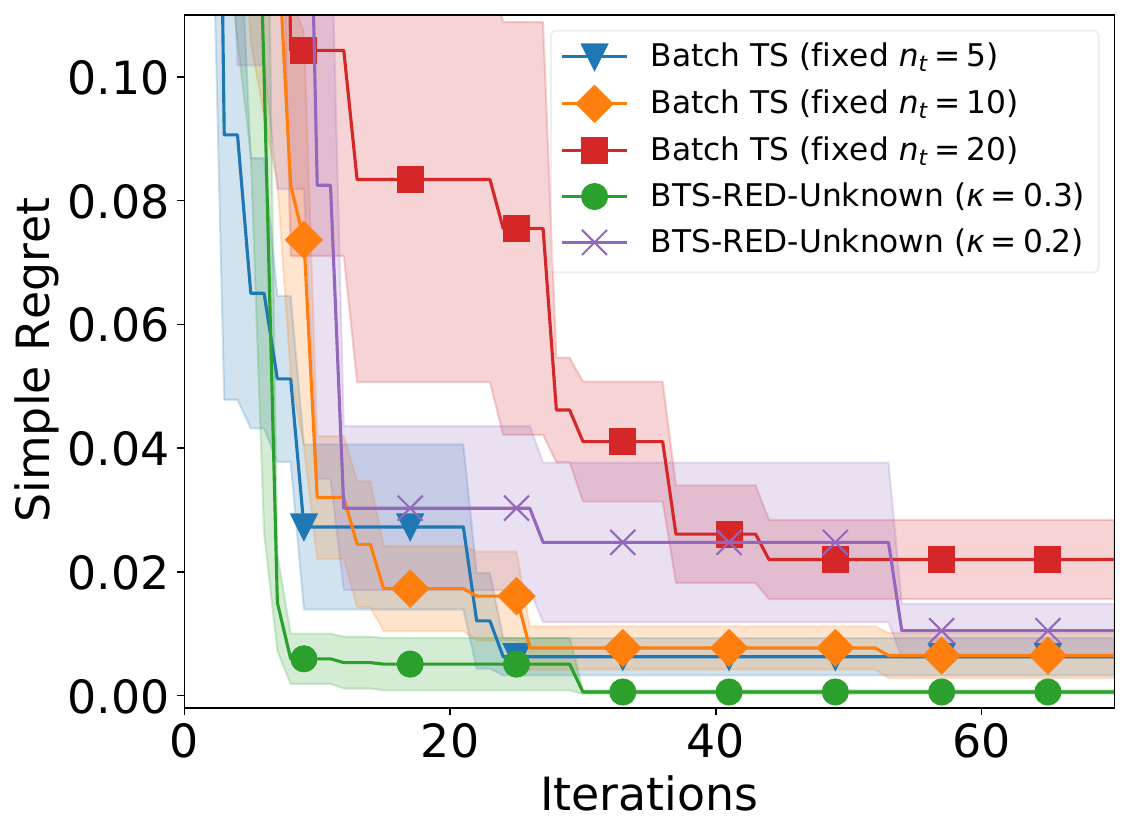}& \hspace{-4mm}
		\includegraphics[width=0.23\linewidth]{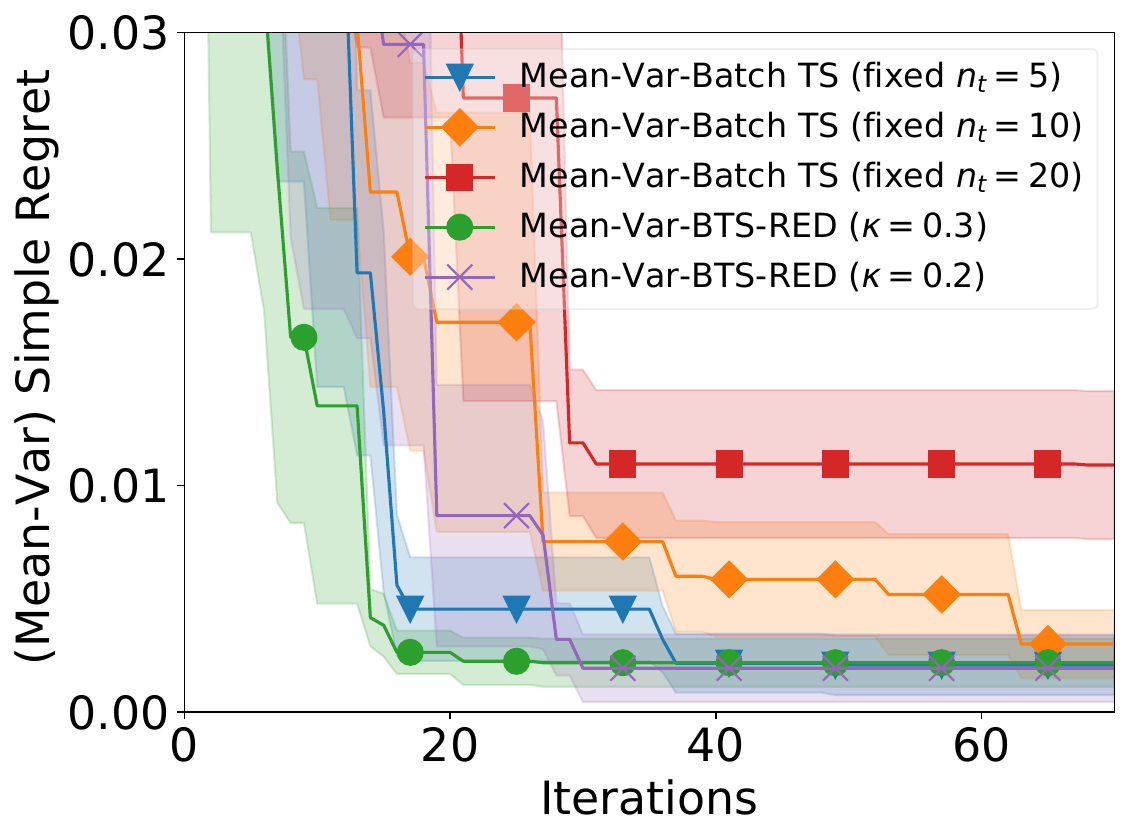}\\
        {(a)} & {(b)} & {(c)} & {(d)}
	\end{tabular}
	\vspace{-2mm}
	\caption{
	(a) Mean and (b) noise variance functions for leaf area, with some selected queries (stars) and their $n_t$'s. 
	(c) Mean and (d) mean-variance optimization for hyper.~tuning of SVM (Sec.~\ref{subsec:exp:automl}).
    }
	\label{fig:exp:farm:viz}
	\vspace{-6.6mm}
\end{figure}

Fig.~\ref{fig:exp:farm} shows the results for maximizing the leaf area (a,b) and weighted combination of leaf area and negative tipburn area (c,d).
Our BTS-RED-Unknown and Mean-Var-BTS-RED with $\kappa=0.2$ and $\kappa=0.3$ consistently outperform Batch TS, as well as RAHBO in Figs.~\ref{fig:exp:farm}b and d.
For mean optimization, we also compare with sequential BO methods (Fig.~\ref{app:fig:farm:add:sequential} in Appendix~\ref{app:sec:exp:farm}), which again are unable to perform comparably with other algorithms that exploit batch evaluations.
Figs.~\ref{fig:exp:farm:viz}a and b visualize the 
groundtruth mean and noise variance functions 
for the leaf area, including
the locations of some queried inputs (the selected inputs after every $4$ iterations) 
and their corresponding $n_t$'s.
Similarly, Figs.~\ref{fig:exp:farm:viz:mean:var}a and b (Appendix~\ref{app:sec:exp:farm}) show the queried inputs and the $n_t$'s of Mean-Var-BTS-RED ($\omega=0.975$), illustrated on heat maps of the mean-variance objective (a) and noise variance functions (b).
These figures demonstrate that most of our input queries fall into regions with large (either mean or mean-variance) objective function values (Figs.~\ref{fig:exp:farm:viz}a and~\ref{fig:exp:farm:viz:mean:var}a) and that $n_t$ is in general larger at those input locations with larger noise variance (Figs.~\ref{fig:exp:farm:viz}b and~\ref{fig:exp:farm:viz:mean:var}b).
We have included GIF animations for Figs.~\ref{fig:exp:farm:viz} and~\ref{fig:exp:farm:viz:mean:var} in the supplementary material.
Our results here showcase the capability of our algorithms to improve the efficiency of real-world experimental design problems.

\vspace{-2mm}
\subsection{Real-World Experiments on AutoML}
\label{subsec:exp:automl}
\vspace{-2mm}
Reproducibility is an important desiderata in AutoML problems such as hyperparameter tuning~\cite{hutter2019automated}, because the performance of a hyperparameter configuration may vary due to a number of factors such as different datasets, 
parameter initializations, etc.~For example, 
some
practitioners may prefer hyperparameter configurations that consistently produce well-performing ML models 
for different datasets.
We adopt the EMNIST dataset which is widely used in 
multi-task learning~\cite{cohen2017emnist,dai2021differentially}. 
EMNIST consists of images of hand-written characters from different individuals, and each individual corresponds to a separate image classification \emph{task}.
Here we tune two SVM hyperparameters: the penalty and RBF kernel parameters, both within $[0.0001,2]$.
We firstly construct a uniform 2-D grid of the two hyeprparameters and then evaluate every input on the grid using $100$ tasks (i.e., image classification for $100$ different individuals) to record the observed mean and variance as the groundtruth mean and variance. 
Refer to Figs.~\ref{fig:exp:automl:visualize}a, b and c (Appendix~\ref{app:sec:exp:automl}) for the constructed mean, variance and mean-variance ($\omega=0.2$) functions.~Fig.~\ref{fig:exp:farm:viz}c and d plot the results ($\mathbb{B}=50$) for mean (c) and mean-variance (d) optimization.
Our BTS-RED-Unknown and Mean-Var-BTS-RED with both $\kappa=0.2$ and $0.3$ perform competitively (again especially $\kappa=0.3$), 
which shows their potential to improve the efficiency and reproducibility of AutoML.
RAHBO underperforms significantly (hence omitted from Fig.~\ref{fig:exp:farm:viz}d), which is likely due to the 
small noise variance (Fig.~\ref{fig:exp:automl:visualize}b) which favors methods with small $n_t$'s.
Specifically, methods with small $n_t$'s can obtain reliable estimations (due to small noise variance) while enjoying the advantage of evaluating a large number $b_t$ of unique inputs in every iteration. This makes RAHBO unfavorable since it is a sequential algorithm with $b_t=1$.

\textbf{Experiments Using Different Budgets $\mathbb{B}$.}
Here we test the performances of our algorithms with different budgets (i.e., different from the $\mathbb{B}=50$ used in the main experiments above) using the AutoML experiment.
The results (Fig.~\ref{fig:exp:differnt:budgets} in App.~\ref{app:sec:exp:automl}) show that the performance advantages of our algorithms (again especially with $\kappa=0.3$) are still consistent with a larger or smaller budget.

\textbf{Additional Experiments with Higher-Dimensional Inputs.}
To further verify the practicality of our proposed algorithms, here we adopt two additional experiments with higher-dimensional continuous input domains.
Specifically, we tune $d=12$ and $d=14$ parameters of a controller for a Lunar-Lander task and a robot pushing task, respectively, and both experiments have widely used by previous works on high-dimensional BO \cite{dai2022sample,eriksson2019scalable} (more details in App.~\ref{app:exp:higher:dim}).
In both experiments, the heteroscedastic noises arise from random environemntal factors.
The results (Fig.~\ref{fig:exp:higher:dim} in App.~\ref{app:exp:higher:dim}) show that our algorithms, again especially with $\kappa=0.3$, still consistently achieve compelling performances.

\vspace{-2.2mm}
\section{Related Works}
\label{sec:related:works}
\vspace{-2.2mm}
BO has been extended to the batch setting in recent years~\cite{chowdhury2019batch,contal2013parallel,desautels2014parallelizing,gonzalez2016batch,nguyen2016budgeted,verma2022bayesian,wu2016parallel}.
The work of \cite{kandasamy2018parallelised} proposed a simple batch TS method by exploiting the inherent randomness of TS. Interestingly, as we discussed in Sec.~\ref{subsec:known:var:theoretical:analysis}, the method from~\cite{kandasamy2018parallelised} is equivalent to a reduced version of our BTS-RED-Known with homoscedastic noise, and our Theorem~\ref{theorem:known:variance} provides 
a
theoretical guarantee on its frequentist regret (in contrast to the Bayesian regret analyzed in \cite{kandasamy2018parallelised}).
The work of \cite{binois2019replication} 
aimed to adaptively choose whether to explore a new query or to replicate a previous query. 
However, their method requires additional heuristic techniques to achieve replications and hence has no theoretical guarantees, 
in stark contrast to our simple and principled way for replication selection (Sec.~\ref{sec:known:noise:var}).
Moreover, their method does not support batch evaluations, and is unable to tackle risk-averse optimization.
Recently,~\cite{van2021scalable} proposed to 
select a batch of queries while balancing exploring new queries and replicating existing ones. 
However, unlike our simple and principled algorithms, their method requires complicated heuristic procedures 
for query/replication selection and batch construction,
and hence does not have theoretical guarantees.
Moreover, 
their method also only focuses on 
standard
mean optimization and cannot be easily extended for risk-averse optimization.

The work of \cite{griffiths2021achieving} used a heteroscedastic GP~\cite{kersting2007most} as the surrogate model for risk-averse optimization.
The works of~\cite{cakmak2020bayesian,iwazaki2021mean,nguyen2021optimizing,nguyen2021value,tay2022efficient} considered risk-averse BO, however, these works require the ability to observe and select an environmental variable, which is usually either not explicitly defined or uncontrollable in practice (e.g., our experiments in Sec.~\ref{sec:experiments}).
The recent work of~\cite{makarova2021risk} modified BO to maximize the mean-variance objective and derived theoretical guarantees using results from \cite{kirschner2018information}. Their method uses a heteroscedastic GP as the surrogate model and employs another (homoscedastic) GP to model the observation noise variance, in which the second GP is learned by replicating every query for a fixed predetermined number of times.
Importantly, all of these works on risk-averse BO have focused only on the sequential setting without support for batch evaluations.
Replicating the selected inputs in BO multiple times has also been adopted by the recent works of \cite{calandriello2022scaling,dai2023quantum}, which have shown that replication can lead to comparable or better theoretical and empirical performances of BO.

\vspace{-2.2mm}
\section{Conclusion}
\label{sec:conclusion}
\vspace{-2.2mm}
We have introduced the BTS-RED framework, which can trade-off between evaluating more unique conditions vs.~replicating each condition more times
and can perform risk-averse optimization.
We derive theoretical guarantees for our methods to show that they are 
no-regret, and verify their empirical effectiveness 
in real-world precision agriculture and AutoML experiments.
A potential limitation is that we use a heuristic (rather than principled) technique to handle unused budgets in an iteration (last paragraph of Sec.~\ref{subsubsec:the:bts:red:known:algorithm}).
Another interesting future work is to incorporate our technique of using an adaptive number of replications (depending on the noise variance) into other batch BO algorithms \cite{daxberger2017distributed,gonzalez2016batch} to further improve their performances.
Moreover, it is also interesting to combine our method with the recent line of work on neural bandits \cite{dai2022sample,dai2022federated}, which may expand the application of our method to more AI4Science problems.

\section*{Acknowledgements and Disclosure of Funding}
This research/project is supported by A*STAR under its RIE$2020$ Advanced Manufacturing and Engineering (AME) Programmatic Funds (Award A$20$H$6$b$0151$) and its RIE$2020$ Advanced Manufacturing and Engineering (AME) Industry Alignment Fund – Pre Positioning (IAF-PP) (Award A$19$E$4$a$0101$).

\bibliography{batch_mean_var_bo}
\bibliographystyle{abbrv}

\newpage
\appendix
\onecolumn

\section{Expressions of GP Posterior}
\label{app:sec:gp:posterior}
Following the notations in the main text (Sec.~\ref{sec:background}), we index every queried input $\boldsymbol{x}^{[b]}_t$ and observed output $y^{[b]}_t$ via an iteration index $t$ and a batch index $b$.
Define $\mathcal{I}_{t-1}$ the collection of indices of the queried inputs in the first $t-1$ iterations: $\mathcal{I}_{t-1}=\{(t',b)\}_{t'\in[t-1],b\in[b_{t'}]}$.
Note that according to our notations in the main text, the cardinality of $\mathcal{I}_{t-1}$ is $|\mathcal{I}_{t-1}| = \tau_{t-1} = \sum^{t-1}_{t'=1}b_{t'}$.
Then, the GP posterior for the objective function $f$ in iteration $t$ can be represented as $\mathcal{GP}(\mu_{t-1}(\cdot),\sigma^2_{t-1}(\cdot,\cdot))$, where
\begin{equation}
\begin{split}
    \mu_{t-1}(\boldsymbol{x}) &\triangleq \boldsymbol{k}_{t-1}(\boldsymbol{x})^\top(\boldsymbol{K}_{t-1}+\lambda\mathbf{I})^{-1}\boldsymbol{y}_{t-1},\\
    \sigma_{t-1}^2(\boldsymbol{x},\boldsymbol{x}') &\triangleq k(\boldsymbol{x},\boldsymbol{x}')-\boldsymbol{k}_{t-1}(\boldsymbol{x})^\top(\boldsymbol{K}_{t-1}+\lambda\mathbf{I})^{-1}\boldsymbol{k}_{t-1}(\boldsymbol{x}'),
\end{split}
\label{gp_posterior}
\end{equation}
in which $\boldsymbol{k}_{t-1}(\boldsymbol{x})\triangleq [k(\boldsymbol{x}, \boldsymbol{x}^{[b]}_{t'})]^{\top}_{(t',b) \in \mathcal{I}_{t-1}}$.
$\boldsymbol{y}_{t-1}\triangleq (y^{[b]}_{t'})^{\top}_{(t',b) \in \mathcal{I}_{t-1}}$ in which $y^{[b]}_{t'}=(1/n^{[b]}_{t'})\sum^{n^{[b]}_{t'}}_{n=1}y^{[b]}_{{t'},n}$ represents the empirical mean at the input $\boldsymbol{x}^{[b]}_{t'}$ calculated using the $n^{[b]}_{t'}$ replications. $\mathbf{K}_{t-1}\triangleq (k(\mathbf{x}^{[b]}_{t'}, \mathbf{x}^{[b']}_{t''}))_{(t',b) \in \mathcal{I}_{t-1}, (t'',b') \in \mathcal{I}_{t-1}}$ and 
$\lambda>0$
is a regularization parameter and will need to be set to $\lambda=1+2/T$ in order for our theoretical results to hold~\cite{chowdhury2017kernelized}.

\section{Proof of Theorem~\ref{theorem:known:variance}}
\label{app:sec:proof:known:variance}
Denote by $\tau_{t-1}$ the total number of observations (input-output pairs) up to and including iteration $t-1$: $\tau_{t-1}\triangleq\sum^{t-1}_{t'=1}b_{t'}$. This immediately implies that $t-1\leq \tau_{t-1}\leq \mathbb{B} (t-1)$.
We use $\mathcal{F}_{t-1}$ to denote the history of all $\tau_{t-1}$ observations up to iteration $t-1$.
Denote by $b_t$ the batch size in iteration $t$. Note that conditioned on $\mathcal{F}_{t-1}$, $b_t$ is a random variable, which is in contrast with standard batch BO in which the batch size is usually fixed.
Here we use $\mu_{t-1}(\boldsymbol{x})$ and $\sigma_{t-1}(\boldsymbol{x})$ to denote the GP posterior mean and standard deviation conditioned on \emph{all} $\tau_{t-1}$ observations up to (and including) iteration $t-1$.
Moreover, denote by $\sigma_{t-1,b'}(\boldsymbol{x})$ the GP posterior standard deviation after \emph{additionally} conditioning on the first $b'=0,\ldots,b_t-1$ selected inputs in iteration $t$. Note that $\sigma_{t-1,0}(\boldsymbol{x})=\sigma_{t-1}(\boldsymbol{x})$ according to our definitions.
Define $\beta_t \triangleq B + R\sqrt{2(\Gamma_{\tau_{t-1}}+1+\log(2/\delta))}$ and $c_t \triangleq \beta_t(1+\sqrt{2\log(\mathbb{B}|\mathcal{X}|t^2)})$.

\begin{lemma}
\label{lemma:uniform_bound}
Let $\delta \in (0, 1)$. Define $E^f(t)$ as the event that $|\mu_{t-1}(\boldsymbol{x}) - f(\boldsymbol{x})| \leq \beta_t \sigma_{t-1}(\boldsymbol{x})$ for all $\boldsymbol{x}\in \mathcal{X}$.
We have that $\mathbb{P}\left[E^f(t)\right] \geq 1 - \delta / 2$ for all $t\geq 1$.
\end{lemma}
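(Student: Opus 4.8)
The plan is to establish the high-probability confidence bound of Lemma~\ref{lemma:uniform_bound} by invoking the standard RKHS confidence bound result for GP regression, namely Theorem~2 of~\cite{chowdhury2017kernelized}, adapted to our setting where observations are empirical means rather than single noisy evaluations. First I would recall the two structural assumptions that make this result applicable: the target function $f$ lies in the RKHS induced by the SE kernel $k$ with $\norm{f}_{\mathcal{H}_k}\leq B$, and crucially, the effective observation noise is $R$-sub-Gaussian. The latter is exactly the property argued in Sec.~\ref{subsubsec:the:bts:red:known:algorithm}: because we set $n^{(b)}_t=\lceil \sigma^2(\boldsymbol{x}^{(b)}_t)/R^2\rceil$, the empirical mean $y^{(b)}_t$ has noise variance $\sigma^2(\boldsymbol{x}^{(b)}_t)/n^{(b)}_t \leq R^2$, so each averaged observation entering the GP posterior is $R$-sub-Gaussian.

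The core step is then a direct application. Theorem~2 of~\cite{chowdhury2017kernelized} states that, conditioned on $\tau_{t-1}$ observations with $R$-sub-Gaussian noise, with probability at least $1-\delta'$ one has $|\mu_{t-1}(\boldsymbol{x}) - f(\boldsymbol{x})| \leq \big(B + R\sqrt{2(\Gamma_{\tau_{t-1}}+1+\log(1/\delta'))}\big)\,\sigma_{t-1}(\boldsymbol{x})$ simultaneously for all $\boldsymbol{x}\in\mathcal{X}$. Setting $\delta' = \delta/2$ recovers precisely our definition $\beta_t = B + R\sqrt{2(\Gamma_{\tau_{t-1}}+1+\log(2/\delta))}$ and yields exactly the event $E^f(t)$ with the claimed probability $\mathbb{P}[E^f(t)] \geq 1-\delta/2$. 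The bound holds for each fixed $t\geq 1$ because $\beta_t$ is indexed by $t$ through the information-gain term $\Gamma_{\tau_{t-1}}$, so the confidence width automatically adapts to the number of observations accumulated by iteration $t$.

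Two points require care. First, $\tau_{t-1}$ and the batch sizes $b_{t'}$ are themselves random variables (the batch size depends on the realized noise variances through the stopping rule on line~3 of Algo.~\ref{algo:known:variance}), so I would verify that the martingale structure underpinning~\cite{chowdhury2017kernelized} still holds; this is fine because the inequality $t-1\leq\tau_{t-1}\leq \mathbb{B}(t-1)$ ensures $\tau_{t-1}$ is a well-defined stopping-time-measurable quantity and the self-normalized concentration argument is agnostic to how the query points were adaptively selected, as long as each is $\mathcal{F}$-measurable and the noise is conditionally $R$-sub-Gaussian. Second, I would note that using the regularization parameter $\lambda=1+2/T$ (Appendix~\ref{app:sec:gp:posterior}) is what makes the~\cite{chowdhury2017kernelized} bound valid with these constants.

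The main obstacle is the measurability and adaptivity subtlety: because replications and batch composition are chosen adaptively based on past observations, one must confirm that the effective-noise sequence remains a valid conditionally $R$-sub-Gaussian martingale difference sequence so that the self-normalized bound applies unchanged. Once that is established, the rest is a routine substitution into the cited theorem, and the lemma follows for all $t\geq 1$ at confidence $1-\delta/2$.
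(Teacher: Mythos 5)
Your proposal is correct and takes exactly the same route as the paper: the paper's entire proof is the one-line observation that the lemma is a consequence of Theorem~2 of~\cite{chowdhury2017kernelized}, which is precisely the substitution (with $\delta' = \delta/2$) that you carry out. Your additional checks --- that the averaged observations are $R$-sub-Gaussian by the choice of $n^{(b)}_t$, that the self-normalized bound tolerates adaptively chosen queries and random batch sizes, and that $\lambda = 1+2/T$ is required --- are exactly the (unstated) conditions the paper relies on, so your write-up is a more careful version of the same argument.
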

\noindent Lemma~\ref{lemma:uniform_bound} is a consequence of Theorem 2 of the work of~\cite{chowdhury2017kernelized}.

\begin{lemma}
\label{lemma:uniform_bound_t}
Define $E^{f_t}(t)$ as the event: $|f^{[b]}_t(\boldsymbol{x}) - \mu_{t-1}(\boldsymbol{x})| \leq \beta_t \sqrt{2\log(\mathbb{B}|\mathcal{X}|t^2)} \sigma_{t-1}(\boldsymbol{x})$, $\forall \boldsymbol{x}\in\mathcal{X},\forall b\in[b_t]$.
We have that $\mathbb{P}\left[E^{f_t}(t) | \mathcal{F}_{t-1}\right] \geq 1 - 1 / t^2$ for any possible filtration $\mathcal{F}_{t-1}$.
\end{lemma}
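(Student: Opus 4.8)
The plan is to exploit the fact that, conditioned on the history $\mathcal{F}_{t-1}$, the posterior functions $\mu_{t-1}(\cdot)$ and $\sigma_{t-1}(\cdot)$ together with the scalar $\beta_t$ are all deterministic, so the only remaining randomness is the Thompson sampling step (line 5 of Algo.~\ref{algo:known:variance}). Since each $f^{[b]}_t$ is drawn from $\mathcal{GP}(\mu_{t-1}(\cdot),\beta_t^2\sigma^2_{t-1}(\cdot,\cdot))$, its marginal at any fixed $\boldsymbol{x}$ satisfies $f^{[b]}_t(\boldsymbol{x}) \sim \mathcal{N}(\mu_{t-1}(\boldsymbol{x}), \beta_t^2\sigma^2_{t-1}(\boldsymbol{x}))$, so that $Z \triangleq (f^{[b]}_t(\boldsymbol{x}) - \mu_{t-1}(\boldsymbol{x}))/(\beta_t\sigma_{t-1}(\boldsymbol{x}))$ is a standard normal variable. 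First I would apply the Gaussian tail bound $\mathbb{P}(|Z|>a)\le e^{-a^2/2}$ with the specific choice $a = \sqrt{2\log(\mathbb{B}|\mathcal{X}|t^2)}$, which gives $\mathbb{P}[\,|f^{[b]}_t(\boldsymbol{x}) - \mu_{t-1}(\boldsymbol{x})| > \beta_t\,a\,\sigma_{t-1}(\boldsymbol{x})\mid\mathcal{F}_{t-1}] \le 1/(\mathbb{B}|\mathcal{X}|t^2)$ for each fixed pair $(b,\boldsymbol{x})$.

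The remaining work is a union bound over the two indices $\boldsymbol{x}$ and $b$. Since $\mathcal{X}$ is finite there are exactly $|\mathcal{X}|$ choices for $\boldsymbol{x}$. The number of batch members is controlled by observing that each selected query consumes at least one unit of budget, i.e.\ $n^{(b)}_t\ge 1$, so the while loop of Algo.~\ref{algo:known:variance} terminates after at most $\mathbb{B}$ iterations and hence $b_t\le\mathbb{B}$ with probability one. Therefore the event $\{\exists\, b\in[b_t],\,\exists\,\boldsymbol{x}\in\mathcal{X}:\ |f^{[b]}_t(\boldsymbol{x}) - \mu_{t-1}(\boldsymbol{x})| > \beta_t\,a\,\sigma_{t-1}(\boldsymbol{x})\}$ is contained in the event obtained by ranging $b$ over the fixed index set $[\mathbb{B}]$, and the union bound over the at-most-$\mathbb{B}|\mathcal{X}|$ pairs gives a total failure probability of at most $\mathbb{B}|\mathcal{X}|\cdot 1/(\mathbb{B}|\mathcal{X}|t^2)=1/t^2$. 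Taking complements yields $\mathbb{P}[E^{f_t}(t)\mid\mathcal{F}_{t-1}]\ge 1-1/t^2$ for every realization of $\mathcal{F}_{t-1}$, which is exactly the claim.

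The one genuinely delicate point, and the step I expect to need the most care, is that $b_t$ is itself a random quantity determined by the very samples $\{f^{[b]}_t\}$ we are trying to control, so a naive union bound over the random index set $[b_t]$ is not immediately justified. The clean way around this is precisely the deterministic bound $b_t\le\mathbb{B}$: one can imagine drawing a fixed pool of $\mathbb{B}$ i.i.d.\ posterior samples up front and letting the algorithm consume only the first $b_t$ of them, so the bad event for each of the $\mathbb{B}$ potential samples has probability at most $1/(\mathbb{B}|\mathcal{X}|t^2)$ independently of where the loop happens to stop. Everything else is the standard Gaussian-tail-plus-finite-union-bound argument and involves no further subtlety.
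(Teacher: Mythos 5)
Your proof is correct and takes essentially the same route as the paper's: a per-point Gaussian tail bound for $f^{[b]}_t(\boldsymbol{x})\sim\mathcal{N}(\mu_{t-1}(\boldsymbol{x}),\beta_t^2\sigma^2_{t-1}(\boldsymbol{x}))$ conditioned on $\mathcal{F}_{t-1}$ (the paper invokes Lemma B4 of \cite{hoffman2013exploiting}, which is exactly your bound $\mathbb{P}(|Z|>a)\le e^{-a^2/2}$ with $a=\sqrt{2\log(\mathbb{B}|\mathcal{X}|t^2)}$), followed by a union bound over the $|\mathcal{X}|$ inputs and at most $\mathbb{B}$ batch indices using $b_t\le\mathbb{B}$. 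If anything, your treatment of the delicate point that $b_t$ is random and determined by the samples themselves (via the fixed pool of $\mathbb{B}$ pre-drawn samples) is more careful than the paper's, which simply applies the union bound over $b\in[b_t]$ and notes $b_t\le\mathbb{B}$.
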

\begin{proof}
According to Lemma B4 of~\cite{hoffman2013exploiting}, in iteration $t$, for a particular $b$ and $\boldsymbol{x}$, we have that
\begin{equation}
|f^{[b]}_t(\boldsymbol{x}) - \mu_{t-1}(\boldsymbol{x})| \leq \beta_t \sqrt{2\log(1/\delta)} \sigma_{t-1}(\boldsymbol{x}),
\end{equation}
with probability of $\geq 1-\delta$.
Replacing $\delta$ by $\delta/(\mathbb{B} |\mathcal{X}|)$ and taking a union bound over all $\boldsymbol{x}\in\mathcal{X}$ and all $b\in[b_t]$ gives us:
\begin{equation}
|f^{[b]}_t(\boldsymbol{x}) - \mu_{t-1}(\boldsymbol{x})| \leq \beta_t \sqrt{2\log(\mathbb{B} |\mathcal{X}|/\delta)} \sigma_{t-1}(\boldsymbol{x}), \qquad \forall \boldsymbol{x}\in\mathcal{X},b\in[b_t],
\end{equation}
which holds with probability of $\geq 1 - \frac{\delta}{\mathbb{B}|\mathcal{X}|} \times |\mathcal{X}| b_t \geq 1 - \delta$, because $b_t\leq \mathbb{B}$. Further replacing $\delta$ by $1/t^2$ completes the proof.
\end{proof}

Next, we define the set of \emph{saturated points}.
\begin{definition}
\label{def:saturated_set}
Define the set of saturated points at iteration $t$ as
\[
S_t = \{ \boldsymbol{x} \in \mathcal{X} : \Delta(\boldsymbol{x}) > c_t \sigma_{t-1}(\boldsymbol{x}) \},
\]
in which $\Delta(\boldsymbol{x}) = f(\boldsymbol{x}^*) - f(\boldsymbol{x})$ and $\boldsymbol{x}^* \in \arg\max_{\boldsymbol{x}\in \mathcal{X}}f(\boldsymbol{x})$.
\end{definition}

The next auxiliary lemma will be needed shortly to lower-bound the probability that an unsaturated point is selected.
\begin{lemma}
\label{lemma:uniform_lower_bound}
For any filtration $\mathcal{F}_{t-1}$, conditioned on the events $E^f(t)$, we have that $\forall \boldsymbol{x}\in \mathcal{X},b\in[b_t]$,
\begin{equation}
\mathbb{P}\left(f^{[b]}_t(\boldsymbol{x}) > f(\boldsymbol{x}) | \mathcal{F}_{t-1}\right) \geq p,
\label{eq:tmp_1}
\end{equation}
in which $p=\frac{1}{4e\sqrt{\pi}}$. 
\end{lemma}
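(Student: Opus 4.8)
The plan is to reduce the claim to a one-sided Gaussian anti-concentration bound evaluated at a single, arbitrary point. First I would fix any $\boldsymbol{x}\in\mathcal{X}$ and any $b\in[b_t]$ and recall that the Thompson sample $f^{[b]}_t$ is drawn from $\mathcal{GP}(\mu_{t-1}(\cdot),\beta_t^2\sigma_{t-1}^2(\cdot,\cdot))$ (line 5 of Algo.~\ref{algo:known:variance}). Consequently, conditioned on the filtration $\mathcal{F}_{t-1}$, the marginal law of the sample value $f^{[b]}_t(\boldsymbol{x})$ at that point is exactly $\mathcal{N}(\mu_{t-1}(\boldsymbol{x}),\beta_t^2\sigma_{t-1}^2(\boldsymbol{x}))$. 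Since the event $E^f(t)$ from Lemma~\ref{lemma:uniform_bound} is a statement about $\mu_{t-1}$, $\sigma_{t-1}$ and the fixed (deterministic) function $f$, it is $\mathcal{F}_{t-1}$-measurable, so additionally conditioning on $E^f(t)$ does not alter this Gaussian marginal.

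Next I would use the confidence bound guaranteed by $E^f(t)$ to eliminate the unknown value $f(\boldsymbol{x})$. On $E^f(t)$ we have $f(\boldsymbol{x})\le \mu_{t-1}(\boldsymbol{x})+\beta_t\sigma_{t-1}(\boldsymbol{x})$, hence
\[
\mathbb{P}\!\left(f^{[b]}_t(\boldsymbol{x}) > f(\boldsymbol{x}) \,\middle|\, \mathcal{F}_{t-1}\right)
\ge
\mathbb{P}\!\left(f^{[b]}_t(\boldsymbol{x}) > \mu_{t-1}(\boldsymbol{x})+\beta_t\sigma_{t-1}(\boldsymbol{x}) \,\middle|\, \mathcal{F}_{t-1}\right).
\]
Standardizing via $Z \triangleq \bigl(f^{[b]}_t(\boldsymbol{x})-\mu_{t-1}(\boldsymbol{x})\bigr)/\bigl(\beta_t\sigma_{t-1}(\boldsymbol{x})\bigr)\sim\mathcal{N}(0,1)$ turns the right-hand side into $\mathbb{P}(Z>1)$, which is a universal constant independent of $t$, $\boldsymbol{x}$ and $b$.

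Finally I would invoke the standard Gaussian anti-concentration inequality $\mathbb{P}(Z>a)\ge \frac{1}{4\sqrt{\pi}}e^{-a^2}$ for $a\ge 0$ (the same tail bound underlying the Thompson-sampling analyses of~\cite{chowdhury2017kernelized,kandasamy2018parallelised}), and evaluate it at $a=1$ to obtain $\mathbb{P}(Z>1)\ge \frac{1}{4\sqrt{\pi}}e^{-1}=\frac{1}{4e\sqrt{\pi}}=p$. Because $\boldsymbol{x}$ and $b$ were arbitrary and $p$ does not depend on them, this yields the bound uniformly over all $\boldsymbol{x}\in\mathcal{X}$ and $b\in[b_t]$, as required.

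The hard part here is not the analysis but the measure-theoretic bookkeeping: I must be careful that conditioning on $\mathcal{F}_{t-1}$ makes $\mu_{t-1}$ and $\sigma_{t-1}$ fixed constants (so the standardization is legitimate), and that restricting further to the $\mathcal{F}_{t-1}$-measurable event $E^f(t)$ leaves the conditional law of the sample unchanged. I would also note the degenerate case $\sigma_{t-1}(\boldsymbol{x})=0$ does not occur, since the regularizer $\lambda>0$ (Appendix~\ref{app:sec:gp:posterior}) keeps the GP posterior variance strictly positive, so the denominator in $Z$ is always nonzero and the reduction to $\mathbb{P}(Z>1)$ is well defined.
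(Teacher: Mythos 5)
Your proposal is correct and follows essentially the same route as the paper's proof: both use the event $E^f(t)$ to upper-bound $f(\boldsymbol{x})$ by $\mu_{t-1}(\boldsymbol{x})+\beta_t\sigma_{t-1}(\boldsymbol{x})$, standardize the Gaussian Thompson sample $f^{[b]}_t(\boldsymbol{x})\sim\mathcal{N}(\mu_{t-1}(\boldsymbol{x}),\beta_t^2\sigma_{t-1}^2(\boldsymbol{x}))$, and apply the Gaussian anti-concentration bound at $a=1$ to obtain $p=\frac{1}{4e\sqrt{\pi}}$. The only differences are cosmetic (you bound before standardizing, the paper standardizes first and uses the absolute-value form of $E^f(t)$), plus your added measure-theoretic remarks, which are sound but not needed beyond what the paper implicitly assumes.
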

\begin{proof}
For any $b\in[b_t]$, we have that
\begin{equation}
\begin{split}
\mathbb{P}\left(f^{[b]}_t(\boldsymbol{x}) > f(\boldsymbol{x}) | \mathcal{F}_{t-1}\right) &= 
\mathbb{P}\left(\frac{f^{[b]}_t(\boldsymbol{x})-\mu_{t-1}(\boldsymbol{x})}{\beta_t\sigma_{t-1}(\boldsymbol{x})} > \frac{f(\boldsymbol{x})-\mu_{t-1}(\boldsymbol{x})}{\beta_t\sigma_{t-1}(\boldsymbol{x})} \Big| \mathcal{F}_{t-1}\right)\\
&\geq \mathbb{P}\left(\frac{f^{[b]}_t(\boldsymbol{x})-\mu_{t-1}(\boldsymbol{x})}{\beta_t\sigma_{t-1}(\boldsymbol{x})} > \frac{|f(\boldsymbol{x})-\mu_{t-1}(\boldsymbol{x})|}{\beta_t\sigma_{t-1}(\boldsymbol{x})} \Big| \mathcal{F}_{t-1}\right)\\
&\geq \mathbb{P}\left(\frac{f^{[b]}_t(\boldsymbol{x})-\mu_{t-1}(\boldsymbol{x})}{\beta_t\sigma_{t-1}(\boldsymbol{x})} > 1 \Big| \mathcal{F}_{t-1}\right)\\
&\geq \frac{e^{-1}}{4\sqrt{\pi}}.
\end{split}
\end{equation}
The second last inequality results from Lemma~\ref{lemma:uniform_bound}, and the last inequality follows because $f^{[b]}_t(\boldsymbol{x})$ follows a Gaussian distribution because $f^{[b]}_t\sim\mathcal{GP}(\mu_{t-1}(\cdot),\beta_t^2\sigma^2_{t-1}(\cdot))$.
Lastly, since all $f^{[b]}_t$'s are sampled in the same way: $f^{[b]}_t\sim\mathcal{GP}(\mu_{t-1}(\cdot),\beta_t^2\sigma^2_{t-1}(\cdot))$, the proof above holds for all $b\in[b_t]$.
\end{proof}

The next lemma shows that the probability that an unsaturated input is selected can be lower-bounded.
\begin{lemma}
\label{lemma:prob_unsaturated}
For any filtration $\mathcal{F}_{t-1}$, conditioned on the event $E^f(t)$, we have that 
\[
\mathbb{P}\left(\boldsymbol{x}^{[b]}_t \in \mathcal{X}\setminus S_t, | \mathcal{F}_{t-1} \right) \geq p - 1/t^2, \qquad \forall b\in[b_t].
\]
\end{lemma}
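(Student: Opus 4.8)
The plan is to follow the standard Thompson-sampling argument and reduce the claim to a single deterministic event inclusion together with the two probabilistic estimates already established. Concretely, I would show that on the intersection of the good events $E^f(t)$ (Lemma~\ref{lemma:uniform_bound}) and $E^{f_t}(t)$ (Lemma~\ref{lemma:uniform_bound_t}), whenever the sampled function overshoots the truth at the optimum, i.e.\ $f^{[b]}_t(\boldsymbol{x}^*) > f(\boldsymbol{x}^*)$, the maximizer $\boldsymbol{x}^{[b]}_t = \arg\max_{\boldsymbol{x}\in\mathcal{X}} f^{[b]}_t(\boldsymbol{x})$ is necessarily unsaturated. The probability lower bound then falls out by combining the overshoot probability from Lemma~\ref{lemma:uniform_lower_bound} with the failure probability of $E^{f_t}(t)$.

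The heart of the proof is this deterministic step. First note that $\boldsymbol{x}^*$ is always unsaturated, since $\Delta(\boldsymbol{x}^*) = 0 \leq c_t \sigma_{t-1}(\boldsymbol{x}^*)$, so $\boldsymbol{x}^* \in \mathcal{X} \setminus S_t$. Next, for any saturated point $\boldsymbol{x} \in S_t$, I would bound the sampled value by inserting $\mu_{t-1}(\boldsymbol{x})$ and applying the two confidence events: $E^{f_t}(t)$ gives $f^{[b]}_t(\boldsymbol{x}) - \mu_{t-1}(\boldsymbol{x}) \leq \beta_t\sqrt{2\log(\mathbb{B}|\mathcal{X}|t^2)}\,\sigma_{t-1}(\boldsymbol{x})$ and $E^f(t)$ gives $\mu_{t-1}(\boldsymbol{x}) - f(\boldsymbol{x}) \leq \beta_t\sigma_{t-1}(\boldsymbol{x})$, so that $f^{[b]}_t(\boldsymbol{x}) \leq f(\boldsymbol{x}) + c_t\sigma_{t-1}(\boldsymbol{x})$ by the definition of $c_t$. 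Because $\boldsymbol{x}$ is saturated, $\Delta(\boldsymbol{x}) > c_t\sigma_{t-1}(\boldsymbol{x})$, equivalently $f(\boldsymbol{x}) + c_t\sigma_{t-1}(\boldsymbol{x}) < f(\boldsymbol{x}^*)$, and hence $f^{[b]}_t(\boldsymbol{x}) < f(\boldsymbol{x}^*)$ for every saturated $\boldsymbol{x}$. On the overshoot event we also have $f^{[b]}_t(\boldsymbol{x}^*) > f(\boldsymbol{x}^*)$, so $\boldsymbol{x}^*$ attains a strictly larger sampled value than any saturated point; since $\boldsymbol{x}^{[b]}_t$ maximizes $f^{[b]}_t$, it cannot lie in $S_t$. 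This establishes the inclusion $\{f^{[b]}_t(\boldsymbol{x}^*) > f(\boldsymbol{x}^*)\} \cap E^f(t) \cap E^{f_t}(t) \subseteq \{\boldsymbol{x}^{[b]}_t \in \mathcal{X}\setminus S_t\}$.

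For the probability bookkeeping, conditioning on $\mathcal{F}_{t-1}$ and on $E^f(t)$, the inclusion yields $\mathbb{P}(\boldsymbol{x}^{[b]}_t \in \mathcal{X}\setminus S_t \mid \mathcal{F}_{t-1}) \geq \mathbb{P}(f^{[b]}_t(\boldsymbol{x}^*) > f(\boldsymbol{x}^*),\, E^{f_t}(t) \mid \mathcal{F}_{t-1}) \geq \mathbb{P}(f^{[b]}_t(\boldsymbol{x}^*) > f(\boldsymbol{x}^*) \mid \mathcal{F}_{t-1}) - \mathbb{P}(\overline{E^{f_t}(t)} \mid \mathcal{F}_{t-1})$. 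Applying Lemma~\ref{lemma:uniform_lower_bound} with $\boldsymbol{x} = \boldsymbol{x}^*$ to lower-bound the first term by $p$, and Lemma~\ref{lemma:uniform_bound_t} to upper-bound the second term by $1/t^2$, gives the claimed bound $p - 1/t^2$, and since all $f^{[b]}_t$ are sampled identically the argument is uniform over $b \in [b_t]$.

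The main obstacle I anticipate is not any single inequality but the careful chaining of conditional probabilities: the result is stated conditionally on $E^f(t)$, Lemma~\ref{lemma:uniform_lower_bound} is itself conditional on $E^f(t)$, whereas Lemma~\ref{lemma:uniform_bound_t} is conditional only on $\mathcal{F}_{t-1}$, so I must use that $E^f(t)$ is $\mathcal{F}_{t-1}$-measurable (it depends only on $f$ and the posterior quantities $\mu_{t-1},\sigma_{t-1}$) to apply all three estimates under the same conditioning consistently. The algebraic matching of the two confidence widths to reproduce exactly $c_t = \beta_t(1 + \sqrt{2\log(\mathbb{B}|\mathcal{X}|t^2)})$ is then routine.
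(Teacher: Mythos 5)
Your proposal is correct and follows essentially the same route as the paper's proof: both arguments rest on the observation that $\boldsymbol{x}^*$ is always unsaturated, use $E^f(t)$ and $E^{f_t}(t)$ to show every saturated point's sampled value is at most $f(\boldsymbol{x}^*)$, reduce the claim to the overshoot event $f^{[b]}_t(\boldsymbol{x}^*) > f(\boldsymbol{x}^*)$, and then invoke Lemma~\ref{lemma:uniform_lower_bound} and Lemma~\ref{lemma:uniform_bound_t} to obtain $p - 1/t^2$. Your explicit event-inclusion phrasing and the remark on the $\mathcal{F}_{t-1}$-measurability of $E^f(t)$ are only presentational refinements of the paper's chained-inequality version, not a different argument.
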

\begin{proof}
For every $b\in[b_t]$,
\begin{equation}
\mathbb{P}\left(\boldsymbol{x}^{[b]}_t \in \mathcal{X}\setminus S_t | \mathcal{F}_{t-1} \right) \geq \mathbb{P}\left( f^{[b]}_t(\boldsymbol{x}^*) > f^{[b]}_t(\boldsymbol{x}),\forall \boldsymbol{x} \in S_t | \mathcal{F}_{t-1} \right),
\label{eq:lower_bound_prob_unsaturated}
\end{equation}
which holds $\forall b\in[b_t]$. The validity of the inequality above can be seen by noting that $\boldsymbol{x}^*$ is \emph{always unsaturated}, because $\Delta(\boldsymbol{x}^*) = f(\boldsymbol{x}^*) - f(\boldsymbol{x}^*)=0 < c_t\sigma_{t-1}(\boldsymbol{x})$. As a result, if the event on the right hand side holds (i.e., if $f^{[b]}_t(\boldsymbol{x}^*) > f^{[b]}_t(\boldsymbol{x}),\forall \boldsymbol{x} \in S_t$), then the event on the left hand side is guaranteed to hold because $\boldsymbol{x}^{[b]}_t$ is selected by $\boldsymbol{x}^{[b]}_t={\arg\max}_{\boldsymbol{x}\in\mathcal{X}}f^{[b]}_t(\boldsymbol{x})$ which ensures that an unsaturated input will be selected.

Next, we assume that both events $E^f(t)$ and $E^{f_t}(t)$ are true, which allows us to derive an upper bound on $f^{[b]}_t(\boldsymbol{x})$ for all $\boldsymbol{x}\in S_t$ and for all $b\in[b_t]$:
\begin{equation}
\begin{split}
    f^{[b]}_t(\boldsymbol{x}) \leq f(\boldsymbol{x}) + c_t\sigma_{t-1}(\boldsymbol{x}) \leq f(\boldsymbol{x}) + \Delta(\boldsymbol{x})=f(\boldsymbol{x}) + f(\boldsymbol{x}^*) - f(\boldsymbol{x}) = f(\boldsymbol{x}^*),
\end{split}
\label{eq:bound_ftx_ftstar}
\end{equation}
where the first inequality follows from Lemma~\ref{lemma:uniform_bound} and Lemma~\ref{lemma:uniform_bound_t}, and the second inequality results from Definition~\ref{def:saturated_set}.
Therefore,~\eqref{eq:bound_ftx_ftstar} implies that for every $b\in[b_t]$, if both both events $E^f(t)$ and $E^{f_t}(t)$ hold, we have that
\begin{equation}
    \mathbb{P}\left( f^{[b]}_t(\boldsymbol{x}^*) > f^{[b]}_t(\boldsymbol{x}),\forall \boldsymbol{x} \in S_t | \mathcal{F}_{t-1} \right) \geq \mathbb{P}\left( f^{[b]}_t(\boldsymbol{x}^*) > f(\boldsymbol{x}^*) | \mathcal{F}_{t-1} \right).
\end{equation}

Next, conditioning only on the event $E^f(t)$, for every $b\in[b_t]$, we can show that 
\begin{equation}
\begin{split}
    \mathbb{P}\left(\boldsymbol{x}^{[b]}_t \in \mathcal{X}\setminus S_t | \mathcal{F}_{t-1} \right) &\geq 
    \mathbb{P}\left( f^{[b]}_t(\boldsymbol{x}^*) > f^{[b]}_t(\boldsymbol{x}),\forall \boldsymbol{x} \in S_t | \mathcal{F}_{t-1} \right)\\
    &\stackrel{(a)}{\geq} \mathbb{P}\left( f^{[b]}_t(\boldsymbol{x}^*) > f(\boldsymbol{x}^*) | \mathcal{F}_{t-1} \right) - \mathbb{P}\left(\overline{E^{f_t}(t)} | \mathcal{F}_{t-1}\right)\\
    &\stackrel{(b)}{\geq} p - 1 / t^2,
\end{split}
\label{eq:unsaturated_prob_plugin_1}
\end{equation}
which holds for all $b\in[b_t]$.
\end{proof}

Next, we use the following Lemma to connect the GP posterior standard deviation given all observations in the first $t-1$ iterations (i.e., $\sigma_{t-1}(\cdot)$) with the conditional information gain from the selected input queries in the $t^{\text{th}}$ iteration (batch).
\begin{lemma}
\label{lemma:known:var:C}
Define $C_2 = \frac{2}{\log(1+\lambda^{-1})}$.
Denote all $\tau_{t-1}$ observations from iterations (batches) $1$ to $t-1$ as $\boldsymbol{y}_{1:t-1}$, and the $b_t$ observations in the $t^{\text{th}}$ batch as $\boldsymbol{y}_t$. Then we have that
\[
\sum^{b_t}_{b=1}\sigma_{t-1}(\boldsymbol{x}^{[b]}_t) \leq e^C \sqrt{C_2 b_t \mathbb{I}(f;\boldsymbol{y}_t | \boldsymbol{y}_{1:t-1})}.
\]
\end{lemma}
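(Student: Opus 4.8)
The plan is to follow the standard information-theoretic argument for batch BO, carefully separating the ``outer'' posterior standard deviation $\sigma_{t-1}(\cdot)$, which conditions only on the $\tau_{t-1}$ observations from the first $t-1$ batches, from the ``inner'' quantities $\sigma_{t-1,b-1}(\cdot)$, which additionally condition on the $b-1$ inputs already selected within the current batch. The proof has three moving parts: a staleness bound relating $\sigma_{t-1}$ to $\sigma_{t-1,b-1}$, an application of Cauchy--Schwarz, and the translation of a sum of sequential posterior variances into the batch conditional information gain.

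First I would establish the staleness bound $\sigma_{t-1}(\boldsymbol{x}^{[b]}_t) \leq e^C \sigma_{t-1,b-1}(\boldsymbol{x}^{[b]}_t)$ for every $b \in [b_t]$. This follows from the key lemma of \cite{desautels2014parallelizing}, which bounds the ratio of a GP posterior standard deviation conditioned on fewer points to that conditioned on more points by the exponential of the conditional information gain of the extra points: writing $A = \{\boldsymbol{x}^{[1]}_t,\ldots,\boldsymbol{x}^{[b-1]}_t\}$, one has $\sigma_{t-1}(\boldsymbol{x})/\sigma_{t-1,b-1}(\boldsymbol{x}) \leq \exp(\mathbb{I}(f;\boldsymbol{y}_A \mid \boldsymbol{y}_{1:t-1}))$. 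Since $|A| \leq b_t - 1 < \mathbb{B}$, the defining property of $C$ gives $\mathbb{I}(f;\boldsymbol{y}_A \mid \boldsymbol{y}_{1:t-1}) \leq C$, yielding the factor $e^C$. The role of this step is to account for the fact that TS selects the entire batch before collecting any within-batch observation, paying only a constant multiplicative price.

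Next I would sum over the batch and apply Cauchy--Schwarz:
\begin{equation*}
\sum^{b_t}_{b=1}\sigma_{t-1}(\boldsymbol{x}^{[b]}_t) \leq e^C \sum^{b_t}_{b=1}\sigma_{t-1,b-1}(\boldsymbol{x}^{[b]}_t) \leq e^C \sqrt{b_t \sum^{b_t}_{b=1}\sigma^2_{t-1,b-1}(\boldsymbol{x}^{[b]}_t)}.
\end{equation*}
The final ingredient is the elementary inequality $s \leq \tfrac{C_2}{2}\log(1+\lambda^{-1}s)$ valid for $s \in [0,1]$, which holds because $C_2 = 2/\log(1+\lambda^{-1})$ and the concavity of $\log(1+\lambda^{-1}s)$ places the normalized curve above the chord through its endpoints $(0,0)$ and $(1,1)$. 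Applying this with $s = \sigma^2_{t-1,b-1}(\boldsymbol{x}^{[b]}_t) \leq k(\boldsymbol{x},\boldsymbol{x}) \leq 1$ (true for the SE kernel) and summing gives $\sum_b \sigma^2_{t-1,b-1}(\boldsymbol{x}^{[b]}_t) \leq \tfrac{C_2}{2}\sum_b \log(1+\lambda^{-1}\sigma^2_{t-1,b-1}(\boldsymbol{x}^{[b]}_t)) = C_2\,\mathbb{I}(f;\boldsymbol{y}_t \mid \boldsymbol{y}_{1:t-1})$, where the last equality is the standard chain-rule decomposition of the batch conditional mutual information into a telescoping sum of per-step log-variance terms. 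Substituting into the Cauchy--Schwarz bound completes the proof.

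The main obstacle is the staleness step. Everything else is mechanical, but that first inequality is precisely what makes batch TS tractable: because all $b_t$ inputs are drawn from the same posterior $\mathcal{GP}(\mu_{t-1}(\cdot),\beta_t^2\sigma^2_{t-1}(\cdot,\cdot))$ without intervening observations, one cannot directly express $\sum_b \sigma_{t-1}(\boldsymbol{x}^{[b]}_t)$ as a telescoping information gain. The ratio bound of \cite{desautels2014parallelizing} rescues this by showing the ``fresh'' variances exceed the sequential ones by at most $e^C$, and it is crucial that $C$ is defined as a bound on the information gain of \emph{any} subset of size at most $\mathbb{B}$, so that the constant is uniform over $t$ and over the random batch size $b_t$.
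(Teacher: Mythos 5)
Your proof is correct and follows essentially the same route as the paper's: the staleness ratio bound $\sigma_{t-1}/\sigma_{t-1,b-1} \leq e^C$ (which the paper derives via the identity $\sigma_{t-1}(\boldsymbol{x})/\sigma_{t-1,b-1}(\boldsymbol{x}) = \exp(\mathbb{I}(f(\boldsymbol{x});\boldsymbol{y}_{t,1:b-1}|\boldsymbol{y}_{1:t-1}))$ and the defining property of $C$, exactly the technique of \cite{desautels2014parallelizing} you invoke), then Cauchy--Schwarz, then the inequality $s \leq \frac{C_2}{2}\log(1+\lambda^{-1}s)$ for $s \in [0,1]$, and finally the chain-rule identity equating the sum of log-variance terms with $\mathbb{I}(f;\boldsymbol{y}_t|\boldsymbol{y}_{1:t-1})$. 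Your justifications of the concavity step and of $\sigma^2_{t-1,b-1} \leq 1$ for the SE kernel are details the paper leaves implicit, but the argument is the same.
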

\begin{proof}
Note that as has been described in the main text, the constant $C$ is chosen such that: 
\begin{equation}
\max_{A\subset \mathcal{X},|A|\leq \mathbb{B}} \mathbb{I}\left( f;\boldsymbol{y}_A | \boldsymbol{y}_{1:t-1} \right) \leq C,\forall t\geq1.
\end{equation}

\noindent Denote by $\boldsymbol{y}_{t,1:b-1}$ the first $b-1$ observations within the $t^{\text{th}}$ batch, then for $b>1$,
\begin{equation}
\begin{split}
\frac{\sigma_{t-1}(\boldsymbol{x})}{\sigma_{t-1,b-1}(\boldsymbol{x})} &= \exp\left(\mathbb{I}(f(\boldsymbol{x});\boldsymbol{y}_{t,1:b-1} |\boldsymbol{y}_{1:t-1})\right)\\
&\leq \exp\left(\mathbb{I}(f;\boldsymbol{y}_{t,1:b-1} |\boldsymbol{y}_{1:t-1})\right) \\
&\leq  \exp\left(\max_{A\subset\mathcal{X},|A|\leq b-1}\mathbb{I}(f;\boldsymbol{y}_{A} |\boldsymbol{y}_{1:t-1})\right)\\
&\leq  \exp\left(\max_{A\subset\mathcal{X},|A|\leq \mathbb{B}}\mathbb{I}(f;\boldsymbol{y}_{A} |\boldsymbol{y}_{1:t-1})\right)\\
&\leq \exp(C).
\end{split}
\end{equation}
\noindent Also note that when $b=1$, $\sigma_{t-1}(\boldsymbol{x})/\sigma_{t-1,b-1}(\boldsymbol{x})=1 \leq \exp(C)$.
Therefore, we have that
\begin{equation}
\begin{split}
\sum^{b_t}_{b=1}\sigma_{t-1}(\boldsymbol{x}^{[b]}_t)&\leq
\sum^{b_t}_{b=1}e^C\sigma_{t-1,b-1}(\boldsymbol{x}^{[b]}_t) \leq e^C\sqrt{b_t \sum^{b_t}_{b=1}\sigma^2_{t-1,b-1}(\boldsymbol{x}^{[b]}_t)}\\
&\leq e^C\sqrt{b_t \sum^{b_t}_{b=1} \frac{1}{\log(1+\lambda^{-1})}\log\left(1+\lambda^{-1}\sigma^2_{t-1,b-1}(\boldsymbol{x}^{[b]}_t)\right)}\\
&=e^C\sqrt{C_2 b_t \frac{1}{2}\sum^{b_t}_{b=1}\log\left(1+\lambda^{-1}\sigma^2_{t-1,b-1}(\boldsymbol{x}^{[b]}_t)\right)}\\
&=e^C\sqrt{C_2 b_t \mathbb{I}\left( f;\boldsymbol{y}_t | \boldsymbol{y}_{1:t-1} \right)}.
\end{split}
\end{equation}
The second inequality makes use of the Cauchy–Schwarz inequality, and the last equality follows from the definition of information gain.
\end{proof}

The next Lemma gives an upper bound on the expected batch regret in iteration $t$: $\min_{b\in[b_t]}r^{[b]}_t=\min_{b\in[b_t]} (f(\boldsymbol{x}^*) - f(\boldsymbol{x}^{[b]}_t))$.
\begin{lemma}
\label{lemma:upper_bound_expected_regret}
For any filtration $\mathcal{F}_{t-1}$, conditioned on the event $E^{f}(t)$, we have that 
\[
\mathbb{E}\left[\min_{b\in[b_t]} r^{[b]}_t \Big|\mathcal{F}_{t-1}\right] \leq c_t e^C\left(1 + \frac{2}{p-1/t^2}\right) \mathbb{E}\left[\sqrt{\frac{1}{b_t}C_2 \mathbb{I}\left( f;\boldsymbol{y}_t | \boldsymbol{y}_{1:t-1} \right)} \Big| \mathcal{F}_{t-1}\right] + \frac{2B}{t^2},
\]
in which $r^{[b]}_t=f(\boldsymbol{x}^*) - f(\boldsymbol{x}^{[b]}_t)$.
\end{lemma}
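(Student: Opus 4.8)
The plan is to adapt the standard Thompson-sampling regret argument (in the style of Lemmas~\ref{lemma:uniform_bound}--\ref{lemma:prob_unsaturated}) to a batch of \emph{random} size $b_t$, combining the per-query confidence bounds with the information-gain estimate of Lemma~\ref{lemma:known:var:C}, and exploiting $\min_{b\in[b_t]}(\cdot)\le\frac{1}{b_t}\sum_{b}(\cdot)$ to expose the $1/b_t$ batch gain that appears inside the claimed bound.

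First I would peel off the failure of $E^{f_t}(t)$: split $\mathbb{E}[\min_b r^{[b]}_t\mid\mathcal{F}_{t-1}]$ into its restriction to $E^{f_t}(t)$ and to the complement $\overline{E^{f_t}(t)}$. On the complement I bound $\min_b r^{[b]}_t\le 2B$ (since the SE kernel gives $\|f\|_\infty\le\|f\|_{\mathcal{H}_k}\le B$) and use $\mathbb{P}[\overline{E^{f_t}(t)}\mid\mathcal{F}_{t-1}]\le 1/t^2$ from Lemma~\ref{lemma:uniform_bound_t}; this produces the additive $2B/t^2$ remainder. On $E^f(t)\cap E^{f_t}(t)$ I then introduce the \emph{anchor point} $\bar{\boldsymbol{x}}_t\triangleq\arg\min_{\boldsymbol{x}\in\mathcal{X}\setminus S_t}\sigma_{t-1}(\boldsymbol{x})$, the unsaturated point of least posterior standard deviation (well-defined because $\boldsymbol{x}^*$ is always unsaturated, and $\mathcal{F}_{t-1}$-measurable). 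For every $b\in[b_t]$ I derive $r^{[b]}_t\le 2c_t\sigma_{t-1}(\bar{\boldsymbol{x}}_t)+c_t\sigma_{t-1}(\boldsymbol{x}^{[b]}_t)$ by writing $r^{[b]}_t=\Delta(\bar{\boldsymbol{x}}_t)+(f(\bar{\boldsymbol{x}}_t)-f(\boldsymbol{x}^{[b]}_t))$, bounding $\Delta(\bar{\boldsymbol{x}}_t)\le c_t\sigma_{t-1}(\bar{\boldsymbol{x}}_t)$ via Definition~\ref{def:saturated_set}, and controlling the remaining difference with the two confidence events (which give $|f^{[b]}_t-f|\le c_t\sigma_{t-1}$) together with the optimality $f^{[b]}_t(\boldsymbol{x}^{[b]}_t)\ge f^{[b]}_t(\bar{\boldsymbol{x}}_t)$.

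Next I apply $\min_b\le\frac{1}{b_t}\sum_b$ to obtain $\min_b r^{[b]}_t\le 2c_t\sigma_{t-1}(\bar{\boldsymbol{x}}_t)+\frac{c_t}{b_t}\sum_{b=1}^{b_t}\sigma_{t-1}(\boldsymbol{x}^{[b]}_t)$, and invoke Lemma~\ref{lemma:known:var:C} pathwise to convert the averaged-std term into $c_t e^C\sqrt{\frac{1}{b_t}C_2\,\mathbb{I}(f;\boldsymbol{y}_t\mid\boldsymbol{y}_{1:t-1})}$; this contributes the coefficient $1$ in the target bound. It remains to relate the constant anchor term $2c_t\sigma_{t-1}(\bar{\boldsymbol{x}}_t)$ to the same information-gain quantity, which is where the factor $2/(p-1/t^2)$ arises. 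Using Lemma~\ref{lemma:prob_unsaturated}, each query is unsaturated with conditional probability at least $p-1/t^2$, and on that event $\sigma_{t-1}(\boldsymbol{x}^{[b]}_t)\ge\sigma_{t-1}(\bar{\boldsymbol{x}}_t)$, so the goal is to establish $\sigma_{t-1}(\bar{\boldsymbol{x}}_t)\le\frac{1}{p-1/t^2}\,\mathbb{E}[\frac{1}{b_t}\sum_{b}\sigma_{t-1}(\boldsymbol{x}^{[b]}_t)\mid\mathcal{F}_{t-1}]$, after which Lemma~\ref{lemma:known:var:C} again yields the $e^C\sqrt{\cdot}$ form.

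The hard part is precisely this last step, because $b_t$ is random and is correlated with the unsaturated indicators through the replication counts $n^{[b]}_t=\lceil\sigma^2(\boldsymbol{x}^{[b]}_t)/R^2\rceil$, so the inequality is not immediate once the normalization $1/b_t$ lives inside the expectation. My plan to make it rigorous is to exploit the stopping-time structure: the event $\{b_t\ge b-1\}$ that the $b$-th query is \emph{attempted} is measurable with respect to $\boldsymbol{x}^{[1]}_t,\dots,\boldsymbol{x}^{[b-1]}_t$ and is therefore independent of $\boldsymbol{x}^{[b]}_t$ given $\mathcal{F}_{t-1}$, which makes $b_t+1$ a stopping time for the i.i.d.\ sequence of sampled maximizers. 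A Wald-type summation over $b$ then lower-bounds $\mathbb{E}[\sum_{b\le b_t}\sigma_{t-1}(\boldsymbol{x}^{[b]}_t)\mathbbm{1}\{\boldsymbol{x}^{[b]}_t\in\mathcal{X}\setminus S_t\}\mid\mathcal{F}_{t-1}]$ by $(p-1/t^2)\,\sigma_{t-1}(\bar{\boldsymbol{x}}_t)\,\mathbb{E}[b_t\mid\mathcal{F}_{t-1}]$; the delicate residual point is transferring this bound from the linear factor $\mathbb{E}[b_t]$ to the nonlinear in-expectation $1/b_t$ normalization appearing in the statement, for which I would bound the fraction of unsaturated queries and handle the discarded terminal query separately. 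Collecting the selected-query contribution, the anchor contribution with its $2/(p-1/t^2)$ factor, and the $2B/t^2$ remainder then gives the stated inequality.
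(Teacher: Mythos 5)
Your proposal retraces the paper's proof of Lemma~\ref{lemma:upper_bound_expected_regret} step for step: the same split over $E^{f_t}(t)$ and its complement (yielding the additive $2B/t^2$ via Lemma~\ref{lemma:uniform_bound_t} and $\|f\|_\infty\le B$), the same anchor point $\overline{\boldsymbol{x}}_t={\arg\min}_{\boldsymbol{x}\in\mathcal{X}\setminus S_t}\sigma_{t-1}(\boldsymbol{x})$, the same per-query bound $r^{[b]}_t\le 2c_t\sigma_{t-1}(\overline{\boldsymbol{x}}_t)+c_t\sigma_{t-1}(\boldsymbol{x}^{[b]}_t)$ on the good events, the same $\min_b\le\frac{1}{b_t}\sum_b$ relaxation, and the same invocation of Lemma~\ref{lemma:known:var:C} to produce the $e^C\sqrt{\frac{1}{b_t}C_2\,\mathbb{I}(f;\boldsymbol{y}_t|\boldsymbol{y}_{1:t-1})}$ term. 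Up to and including these steps your argument is correct and coincides with the paper's.

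The divergence is confined to how the anchor term $2c_t\sigma_{t-1}(\overline{\boldsymbol{x}}_t)$ is absorbed, and here you have made the step harder than the paper does. The paper dispatches it with a single substitution: by~\eqref{eq:app:lower:bound:x:bar} (Lemma~\ref{lemma:prob_unsaturated} combined with $\sigma_{t-1}(\boldsymbol{x}^{[b]}_t)\ge\sigma_{t-1}(\overline{\boldsymbol{x}}_t)$ on the unsaturation event) it has $\sigma_{t-1}(\overline{\boldsymbol{x}}_t)\,(p-1/t^2)\le\mathbb{E}\big[\sigma_{t-1}(\boldsymbol{x}^{[b]}_t)\,\big|\,\mathcal{F}_{t-1}\big]$, and it then replaces $2c_t\sigma_{t-1}(\overline{\boldsymbol{x}}_t)$ by $\frac{2c_t}{p-1/t^2}\sigma_{t-1}(\boldsymbol{x}^{[b]}_t)$ \emph{termwise inside} $\mathbb{E}\big[\frac{1}{b_t}\sum_b(\cdot)\,\big|\,\mathcal{F}_{t-1}\big]$, with no stopping-time or Wald machinery whatsoever. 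Your instinct that this is the delicate point is sound: what is really needed is $\mathbb{E}\big[\frac{1}{b_t}\sum_b\sigma_{t-1}(\boldsymbol{x}^{[b]}_t)\,\big|\,\mathcal{F}_{t-1}\big]\ge(p-1/t^2)\,\sigma_{t-1}(\overline{\boldsymbol{x}}_t)$, and the per-$b$ bound does not hand this over automatically because the normalization $1/b_t$ is random and correlated with the summands through $n^{[b]}_t=\lceil\sigma^2(\boldsymbol{x}^{[b]}_t)/R^2\rceil$. But the paper does not perform the repair you sketch; it simply substitutes. So, read as a reconstruction of the paper's argument, your proposal is the same route and complete at the paper's level of rigor; read as the strictly rigorous version you are aiming for, it is unfinished at exactly the point you flag — passing from a Wald-type bound, which controls $\mathbb{E}\big[\sum_{b\le b_t}\sigma_{t-1}(\boldsymbol{x}^{[b]}_t)\mathbbm{1}\{\boldsymbol{x}^{[b]}_t\notin S_t\}\,\big|\,\mathcal{F}_{t-1}\big]$ in terms of $\mathbb{E}[b_t|\mathcal{F}_{t-1}]$, to the $1/b_t$-normalized average the lemma statement requires (the two are not interchangeable, since $1/b_t$ and the sum are dependent) — and that residual gap cannot be closed by citing the paper, because the paper's own proof never confronts it.
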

\begin{proof}
To begin with, we define $\overline{\boldsymbol{x}}_t$ as the unsaturated input at iteration $t$ (after the first $t-1$ iterations) with the smallest (posterior) standard deviation:
\begin{equation}
\overline{\boldsymbol{x}}_t \triangleq {\arg\min}_{\boldsymbol{x}\in\mathcal{X}\setminus S_t}\sigma_{t-1}(\boldsymbol{x}).
\end{equation}
Following this definition, for any $\mathcal{F}_{t-1}$ such that $E^{f}(t)$ is true, $\forall b\in[b_t]$, we have that 
\begin{equation}
\begin{split}
\mathbb{E}\left[\sigma_{t-1}(\boldsymbol{x}^{[b]}_t) | \mathcal{F}_{t-1}\right] &\geq \mathbb{E}\left[\sigma_{t-1}(\boldsymbol{x}^{[b]}_t) | \mathcal{F}_{t-1}, \boldsymbol{x}^{[b]}_t \in \mathcal{X} \setminus S_t\right]\mathbb{P}\left(\boldsymbol{x}^{[b]}_t \in \mathcal{X} \setminus S_t | \mathcal{F}_{t-1}\right)\\
&\geq \sigma_{t-1}(\overline{\boldsymbol{x}}_t)(p-1/t^2),
\end{split}
\label{eq:app:lower:bound:x:bar}
\end{equation}
Now we condition on both events $E^{f}(t)$ and $E^{f_t}(t)$, and analyze the instantaneous regret as:
\begin{equation}
\begin{split}
\min_{b\in[b_t]}r^{[b]}_t&\leq\frac{1}{b_t}\sum^{b_t}_{b=1} r^{[b]}_t=\frac{1}{b_t}\sum^{b_t}_{b=1} \Delta(\boldsymbol{x}^{[b]}_t)=\frac{1}{b_t}\sum^{b_t}_{b=1}\left[f(\boldsymbol{x}^*) - f(\overline{\boldsymbol{x}}_t) + f(\overline{\boldsymbol{x}}_t) - f(\boldsymbol{x}^{[b]}_t)\right]\\
&\leq \frac{1}{b_t}\sum^{b_t}_{b=1}\left[\Delta(\overline{\boldsymbol{x}}_t) + f^{[b]}_t(\overline{\boldsymbol{x}}_t) + c_t\sigma_{t-1}(\overline{\boldsymbol{x}}_t) - f^{[b]}_t(\boldsymbol{x}^{[b]}_t) + c_t\sigma_{t-1}(\boldsymbol{x}^{[b]}_t)\right]\\
&\leq \frac{1}{b_t}\sum^{b_t}_{b=1} \left[c_t\sigma_{t-1}(\overline{\boldsymbol{x}}_t) + c_t\sigma_{t-1}(\overline{\boldsymbol{x}}_t) + c_t\sigma_{t-1}(\boldsymbol{x}^{[b]}_t) + f^{[b]}_t(\overline{\boldsymbol{x}}_t) - f^{[b]}_t(\boldsymbol{x}^{[b]}_t)\right]\\
&\leq \frac{1}{b_t}\sum^{b_t}_{b=1} \left[c_t(2\sigma_{t-1}(\overline{\boldsymbol{x}}_t) + \sigma_{t-1}(\boldsymbol{x}^{[b]}_t))\right],
\end{split}
\end{equation}
in which the second inequality results from Lemma~\ref{lemma:uniform_bound} and Lemma~\ref{lemma:uniform_bound_t}, the third inequality follows since $\overline{\boldsymbol{x}}_t$ is unsaturated, and the last inequality follows from the policy in which $\boldsymbol{x}^{[b]}_t$ is selected, i.e., $\boldsymbol{x}^{[b]}_t={\arg\max}_{\boldsymbol{x}\in\mathcal{X}}f^{[b]}_t(\boldsymbol{x})$.

Now we separately consider the two cases where the event $E^{f_t}(t)$ is true and false:
\begin{equation}
\begin{split}
\mathbb{E}\left[\min_{b\in[b_t]} r^{[b]}_t \Big| \mathcal{F}_{t-1}\right]
&\leq \mathbb{E}\left[\frac{1}{b_t}\sum^{b_t}_{b=1} \left[ c_t(2\sigma_{t-1}(\overline{\boldsymbol{x}}_t) + \sigma_{t-1}(\boldsymbol{x}^{[b]}_t)) \right] \Big| \mathcal{F}_{t-1}\right] + 2B \mathbb{P}\left[\overline{E^{f_t}(t)} | \mathcal{F}_{t-1} \right]\\
& \leq \mathbb{E}\left[ \frac{1}{b_t}\sum^{b_t}_{b=1} \left[ \frac{2c_t}{p-1/t^2}\sigma_{t-1}(\boldsymbol{x}^{[b]}_t) + c_t \sigma_{t-1}(\boldsymbol{x}^{[b]}_t) \right] \Big| \mathcal{F}_{t-1}\right] + \frac{2B}{t^2}\\
& \leq c_t\left(1 + \frac{2}{p-1/t^2}\right) \mathbb{E}\left[\frac{1}{b_t} \sum^{b_t}_{b=1} \sigma_{t-1}(\boldsymbol{x}^{[b]}_t) \Big| \mathcal{F}_{t-1}\right] + \frac{2B}{t^2}\\
& \leq c_t e^C\left(1 + \frac{2}{p-1/t^2}\right) \mathbb{E}\left[\frac{1}{b_t} \sqrt{C_2 b_t \mathbb{I}\left( f;\boldsymbol{y}_t | \boldsymbol{y}_{1:t-1} \right)} \Big| \mathcal{F}_{t-1}\right] + \frac{2B}{t^2}\\
& \leq c_t e^C\left(1 + \frac{10}{p}\right) \mathbb{E}\left[\sqrt{\frac{1}{b_t}C_2 \mathbb{I}\left( f;\boldsymbol{y}_t | \boldsymbol{y}_{1:t-1} \right)} \Big| \mathcal{F}_{t-1}\right] + \frac{2B}{t^2},
\end{split}
\label{eq:expected_inst_regret}
\end{equation}
where the second inequality follows from equation~\eqref{eq:app:lower:bound:x:bar}, the fourth inequality results from Lemma~\ref{lemma:known:var:C}, and the last inequality follows since $\frac{2}{p-1/t^2} \leq \frac{10}{p}$.

\end{proof}

\begin{definition}
Define $Y_0=0$, and for all $t=1,\ldots,T$,
\[
\overline{r}_t=\mathbb{I}\{E^{f}(t)\} \min_{b\in[b_t]}r^{[b]}_t,
\]
\[
X_t = \overline{r}_t - c_t e^C\left(1 + \frac{10}{p}\right) \sqrt{\frac{1}{b_t}C_2 \mathbb{I}\left( f;\boldsymbol{y}_t | \boldsymbol{y}_{1:t-1} \right)} - \frac{2B}{t^2}
\]
\[
Y_t=\sum^t_{s=1}X_s.
\]
\end{definition}

\begin{lemma}
\label{lemma:proof:sup:martingale}
Conditioned on Lemma~\ref{lemma:upper_bound_expected_regret} (i.e., with probability of $\geq 1 - \delta/2$), $(Y_t:t=0,\ldots,T)$ is a super-martingale with respect to the filtration $\mathcal{F}_t$.
\end{lemma}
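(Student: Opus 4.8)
The plan is to verify the three defining properties of a super-martingale with respect to $(\mathcal{F}_t)$: adaptedness, integrability, and the one-step conditional inequality $\mathbb{E}[Y_t\mid\mathcal{F}_{t-1}]\le Y_{t-1}$. Adaptedness and integrability are routine. Each increment $X_t$ is a deterministic function of quantities revealed by the end of iteration $t$ (the instantaneous regrets $r^{[b]}_t$, the random batch size $b_t$, and the conditional information gain $\mathbb{I}(f;\boldsymbol{y}_t\mid\boldsymbol{y}_{1:t-1})$), so $Y_t$ is $\mathcal{F}_t$-measurable; moreover every term is bounded (the regret satisfies $\min_{b\in[b_t]}r^{[b]}_t\le 2B$, as already used in Lemma~\ref{lemma:upper_bound_expected_regret}, the conditional information gain is at most $C$ by the choice of $C$, and $b_t\ge 1$), so $\mathbb{E}[|Y_t|]<\infty$. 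Since $Y_{t-1}$ is $\mathcal{F}_{t-1}$-measurable and $Y_t=Y_{t-1}+X_t$, it then suffices to show $\mathbb{E}[X_t\mid\mathcal{F}_{t-1}]\le 0$ for every $t$.

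The decisive observation is that the event $E^f(t)$ is $\mathcal{F}_{t-1}$-measurable, because it is defined purely through the posterior quantities $\mu_{t-1}(\cdot),\sigma_{t-1}(\cdot)$ and the constant $\beta_t$ (which depends only on $\tau_{t-1}$), together with the fixed function $f$; all of these are determined by the first $t-1$ batches. Hence, conditioned on any realization of $\mathcal{F}_{t-1}$, the indicator $\mathbb{I}\{E^f(t)\}$ is a constant, and I would split the argument into two cases. If $E^f(t)$ fails under $\mathcal{F}_{t-1}$, then $\overline{r}_t=0$, so $X_t$ equals minus the (non-negative) compensator term minus $2B/t^2$; thus $X_t\le 0$ pointwise and $\mathbb{E}[X_t\mid\mathcal{F}_{t-1}]\le 0$ trivially.

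If instead $E^f(t)$ holds under $\mathcal{F}_{t-1}$, then $\overline{r}_t=\min_{b\in[b_t]}r^{[b]}_t$, and I would invoke Lemma~\ref{lemma:upper_bound_expected_regret} directly, since its hypothesis of conditioning on $E^f(t)$ is exactly met. Using $\tfrac{2}{p-1/t^2}\le\tfrac{10}{p}$ to relax the constant, the lemma gives
\[
\mathbb{E}\Big[\min_{b\in[b_t]}r^{[b]}_t \,\Big|\, \mathcal{F}_{t-1}\Big]
\le c_t e^C\Big(1+\tfrac{10}{p}\Big)\,\mathbb{E}\Big[\sqrt{\tfrac{1}{b_t}C_2\,\mathbb{I}(f;\boldsymbol{y}_t\mid\boldsymbol{y}_{1:t-1})}\,\Big|\,\mathcal{F}_{t-1}\Big]+\frac{2B}{t^2},
\]
which, after subtracting the last two terms appearing in the definition of $X_t$, yields precisely $\mathbb{E}[X_t\mid\mathcal{F}_{t-1}]\le 0$. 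Combining the two cases gives $\mathbb{E}[Y_t\mid\mathcal{F}_{t-1}]=Y_{t-1}+\mathbb{E}[X_t\mid\mathcal{F}_{t-1}]\le Y_{t-1}$, which is the claimed super-martingale property.

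I expect the main subtlety to be the correct bookkeeping of the indicator $\mathbb{I}\{E^f(t)\}$ in conjunction with the \emph{random} batch size $b_t$. The indicator is exactly what makes the one-step inequality hold for every filtration rather than only on a high-probability event: it annihilates the increment on those histories where the confidence bound underlying Lemma~\ref{lemma:upper_bound_expected_regret} could be violated, while on the remaining histories the lemma supplies the required bound; the probability $\ge 1-\delta/2$ then enters only through the event $E^f(t)$ on which $\overline{r}_t$ agrees with the true regret. Care is also needed because, unlike in standard batch BO, $b_t$ is not fixed, so the compensator must be written as the conditional expectation $\mathbb{E}[\,\cdot\mid\mathcal{F}_{t-1}]$ of the $b_t$-dependent quantity rather than a deterministic value, ensuring it matches the form produced by Lemma~\ref{lemma:upper_bound_expected_regret}.
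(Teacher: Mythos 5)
Your proposal is correct and follows essentially the same route as the paper's proof: both reduce the super-martingale property to $\mathbb{E}[X_t\mid\mathcal{F}_{t-1}]\le 0$ and establish it by the same case split on $E^f(t)$, invoking Lemma~\ref{lemma:upper_bound_expected_regret} when the event holds and noting that $\overline{r}_t=0$ makes the inequality trivial otherwise. Your additional bookkeeping (the $\mathcal{F}_{t-1}$-measurability of $E^f(t)$, integrability, and the relaxation $\frac{2}{p-1/t^2}\le\frac{10}{p}$ matching the constant in the definition of $X_t$) only makes explicit what the paper leaves implicit.
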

\begin{proof}

\begin{equation}
\begin{split}
\mathbb{E}[Y_t - Y_{t-1} | \mathcal{F}_{t-1}] &= \mathbb{E}[X_t | \mathcal{F}_{t-1}]\\
&= \mathbb{E}[\overline{r}_t - c_t e^C\left(1 + \frac{10}{p}\right) \sqrt{\frac{1}{b_t}C_2 \mathbb{I}\left( f;\boldsymbol{y}_t | \boldsymbol{y}_{1:t-1} \right)} - \frac{2B}{t^2} | \mathcal{F}_{t-1}]\\
&= \mathbb{E}[\overline{r}_t | \mathcal{F}_{t-1} ] - \mathbb{E}[ c_t e^C\left(1 + \frac{10}{p}\right) \sqrt{\frac{1}{b_t}C_2 \mathbb{I}\left( f;\boldsymbol{y}_t | \boldsymbol{y}_{1:t-1} \right)} + \frac{2B}{t^2} | \mathcal{F}_{t-1}] \leq 0.
\end{split}
\end{equation}
When the event $E^{f}(t)$ holds, then $\overline{r}_t= \min_{b\in[b_t]}r^{[b]}_t$ and the inequality follows from Lemma~\ref{lemma:upper_bound_expected_regret}; when $E^{f}(t)$ does not hold, $\overline{r}_t=0$ and hence the inequality trivially holds.

\end{proof}

\begin{lemma}
\label{lemma:app:upper:bound:cum:regret}
Define $C_0\triangleq\frac{1}{\mathbb{B} / \lceil\frac{\sigma^2_{\max}}{R^2}\rceil - 1}$.
Given $\delta \in (0, 1)$, then with probability of at least $1 - \delta$,
\begin{equation*}
\begin{split}
R_T\leq e^C\left(1 + \frac{10}{p}\right)c_T \sqrt{C_0 C_2\Gamma_T T} + \frac{B\pi^2}{3} + \left(4B +  c_T e^C \left(1 + \frac{10}{p}\right) \sqrt{C C_0 C_2}\right) \sqrt{2\log(2/\delta)T}.
\end{split}
\end{equation*}
\end{lemma}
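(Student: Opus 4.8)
The plan is to feed the super-martingale $(Y_t)$ from Lemma~\ref{lemma:proof:sup:martingale} into the Azuma--Hoeffding inequality and then unfold its definition to isolate $R_T$. First I would observe that on the event $\bigcap_{t=1}^T E^f(t)$, which by Lemma~\ref{lemma:uniform_bound} and a union bound over $t$ holds with probability at least $1-\delta/2$, the truncated regret satisfies $\overline{r}_t = \min_{b\in[b_t]} r^{[b]}_t$, so that $\sum_{t=1}^T \overline{r}_t = R_T$. Rearranging $Y_T = \sum_{t=1}^T X_t$ then gives $R_T = Y_T + e^C(1+\tfrac{10}{p})\sum_{t=1}^T c_t\sqrt{\tfrac{1}{b_t}C_2\,\mathbb{I}(f;\boldsymbol{y}_t\mid\boldsymbol{y}_{1:t-1})} + \sum_{t=1}^T \tfrac{2B}{t^2}$, and it remains to bound the three terms on the right.

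Second, to apply Azuma--Hoeffding I need a uniform bound on the martingale increments $|X_t| = |Y_t - Y_{t-1}|$. Here I would bound each constituent of $X_t$ separately: the RKHS assumption $\norm{f}_{\mathcal{H}_k}\le B$ gives $|f(\boldsymbol{x})|\le B$ and hence $\overline{r}_t \le 2B$; the tail term is trivially $\tfrac{2B}{t^2}\le 2B$; and the information-gain term is controlled using $\mathbb{I}(f;\boldsymbol{y}_t\mid\boldsymbol{y}_{1:t-1})\le C$ together with the monotonicity $c_t\le c_T$. The algorithm-specific ingredient is a deterministic lower bound on the (random) batch size: since in BTS-RED-Known every $n^{(b)}_t=\lceil\sigma^2(\boldsymbol{x}^{(b)}_t)/R^2\rceil\le\lceil\sigma^2_{\max}/R^2\rceil$ and the budget is only exhausted after $b_t+1$ queries, we have $(b_t+1)\lceil\sigma^2_{\max}/R^2\rceil\ge\mathbb{B}$, i.e.\ $1/b_t\le C_0$. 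Combining these via the triangle inequality yields $|X_t|\le 4B + c_T e^C(1+\tfrac{10}{p})\sqrt{C C_0 C_2}$, matching the coefficient of the last term; Azuma--Hoeffding (failure probability $\delta/2$) then gives $Y_T\le \big(4B + c_T e^C(1+\tfrac{10}{p})\sqrt{C C_0 C_2}\big)\sqrt{2\log(2/\delta)\,T}$.

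Third, I would bound the deterministic information-gain sum. Pulling out $c_t\le c_T$ and $1/b_t\le C_0$ and applying Cauchy--Schwarz over $t$ gives $\sum_{t=1}^T\sqrt{\tfrac{1}{b_t}\mathbb{I}_t}\le\sqrt{C_0}\sqrt{T\sum_{t=1}^T\mathbb{I}(f;\boldsymbol{y}_t\mid\boldsymbol{y}_{1:t-1})}$, and the chain rule of mutual information telescopes the conditional information gains into the total information gain from all $\tau_T\le T\mathbb{B}$ observations, which is at most $\Gamma_T$. This produces the leading term $e^C(1+\tfrac{10}{p})c_T\sqrt{C_0 C_2\Gamma_T T}$, while the tail series contributes $\sum_{t\ge1}\tfrac{2B}{t^2}=\tfrac{B\pi^2}{3}$. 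A final union bound over the two failure events—the confidence event $E^f$ holding uniformly and the Azuma deviation—each of probability $\delta/2$, delivers the stated bound with probability at least $1-\delta$.

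The main obstacle, and the step that genuinely uses the structure of BTS-RED, is the deterministic lower bound $b_t\ge\mathbb{B}/\lceil\sigma^2_{\max}/R^2\rceil-1$: unlike standard batch BO, $b_t$ is a random variable determined by the adaptively chosen replication counts, and it is precisely this bound that converts the budget $\mathbb{B}$ into the $1/b_t\le C_0$ factor driving the improvement of the regret with larger budgets. The remaining work is bookkeeping—ensuring $c_t$ is monotone in $t$ so that $c_T$ can be pulled out of both the Azuma increment bound and the Cauchy--Schwarz sum, and partitioning the total failure probability $\delta$ correctly between the confidence event and the martingale concentration.
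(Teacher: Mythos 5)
Your proposal is correct and follows essentially the same route as the paper's proof: Azuma--Hoeffding applied to the super-martingale $(Y_t)$ of Lemma~\ref{lemma:proof:sup:martingale}, increment bounds via $\overline{r}_t\le 2B$, $\mathbb{I}(f;\boldsymbol{y}_t\mid\boldsymbol{y}_{1:t-1})\le C$, and the deterministic batch-size bound $b_t\ge \mathbb{B}/\lceil\sigma^2_{\max}/R^2\rceil-1$, followed by Cauchy--Schwarz plus the chain rule to get $\sqrt{T\Gamma_{T\mathbb{B}}}$, the tail sum $B\pi^2/3$, and a final $\delta/2+\delta/2$ split. The one imprecision is your claim that $\bigcap_{t=1}^{T}E^f(t)$ holds with probability $1-\delta/2$ ``by a union bound over $t$'': a union bound would degrade this to $1-T\delta/2$; instead, the uniformity over $t$ is already built into Lemma~\ref{lemma:uniform_bound} (the self-normalized confidence bound of Theorem 2 of~\cite{chowdhury2017kernelized} holds simultaneously for all $t$, with $\beta_t$ growing accordingly), which is exactly how the paper invokes it.
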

\begin{proof}
To begin with, let's derive a lower bound on $b_t$. Firstly, note that $n_t\leq \lceil\frac{\sigma^2_{\max}}{R^2}\rceil$. This implies that $b_t \geq \mathbb{B} / \lceil\frac{\sigma^2_{\max}}{R^2}\rceil - 1$.
Next, we need to derive an upper bound on $|Y_t - Y_{t-1}|$ which will be used when we apply the Azuma-Hoeffding's inequality:
\begin{equation}
\begin{split}
|Y_t - Y_{t-1}| &= |X_t| \leq |\overline{r}_t| + c_t e^C\left(1 + \frac{10}{p}\right) \sqrt{\frac{1}{b_t}C_2 \mathbb{I}\left( f;\boldsymbol{y}_t | \boldsymbol{y}_{1:t-1} \right)} + \frac{2B}{t^2}\\
&\leq 2B +  c_t e^C \left(1 + \frac{10}{p}\right) \sqrt{\frac{C C_2}{b_t}} + 2B\\
&\leq 4B +  c_t e^C \left(1 + \frac{10}{p}\right) \sqrt{\frac{C C_2}{b_t}},
\end{split}
\end{equation}
where the second inequality follows because
\[
\mathbb{I}\left( f;\boldsymbol{y}_t | \boldsymbol{y}_{1:t-1} \right) \leq \max_{A\subset \mathcal{X},|A|\leq b_t} \mathbb{I}\left( f;\boldsymbol{y}_A | \boldsymbol{y}_{1:t-1} \right) \leq \max_{A\subset \mathcal{X},|A|\leq \mathbb{B}} \mathbb{I}\left( f;\boldsymbol{y}_A | \boldsymbol{y}_{1:t-1} \right) \leq C,\forall t\geq1.
\]
Next, we will need the following result to connect the sum of condition information gains to the maximum information gain:
\begin{equation}
\begin{split}
\sum^T_{t=1} \sqrt{ \mathbb{I}\left( f;\boldsymbol{y}_t | \boldsymbol{y}_{1:t-1} \right)} \leq \sqrt{T \sum^T_{t=1}\mathbb{I}\left( f;\boldsymbol{y}_t | \boldsymbol{y}_{1:t-1} \right)}=\sqrt{T \mathbb{I}\left( f;\boldsymbol{y}_{1:T} \right)}\leq \sqrt{T\Gamma_{T\mathbb{B}}},
\end{split}
\label{eq:app:conditional:infogain:with:max:infogain}
\end{equation}
in which the first inequality results from the Cauchy–Schwarz inequality, the second inequality follows from the chain rule of conditional information gain, and the last inequality makes use of the fact that $\tau_T \leq T\mathbb{B}$.
Subsequently, we are ready to upper-bound the batch cumulative regret:
\begin{equation}
\begin{split}
\sum^T_{t=1}\overline{r}_t &\leq \sum^T_{t=1}c_t e^C\left(1 + \frac{10}{p}\right) \sqrt{\frac{1}{b_t}C_2 \mathbb{I}\left( f;\boldsymbol{y}_t | \boldsymbol{y}_{1:t-1} \right)} + \sum^T_{t=1}\frac{2B}{t^2} +\\
&\qquad \sqrt{2\log(2/\delta)\sum^T_{t=1}\left(4B +  c_t e^C \left(1 + \frac{10}{p}\right) \sqrt{\frac{C C_2}{b_t}}\right)^2}\\
&\leq e^C\left(1 + \frac{10}{p}\right)c_T \sqrt{\frac{C_2}{\mathbb{B} / \lceil\frac{\sigma^2_{\max}}{R^2}\rceil - 1}} \sum^T_{t=1} \sqrt{ \mathbb{I}\left( f;\boldsymbol{y}_t | \boldsymbol{y}_{1:t-1} \right)} + \frac{B\pi^2}{3} +\\
&\qquad \left(4B +  c_T e^C \left(1 + \frac{10}{p}\right) \sqrt{\frac{C C_2}{\mathbb{B} / \lceil\frac{\sigma^2_{\max}}{R^2}\rceil - 1}}\right) \sqrt{2\log(2/\delta)T}\\
&\leq e^C\left(1 + \frac{10}{p}\right)c_T \sqrt{C_0 C_2}\sqrt{T\Gamma_{T \mathbb{B}}} + \frac{B\pi^2}{3} + \left(4B +  c_T e^C \left(1 + \frac{10}{p}\right) \sqrt{C C_0 C_2}\right) \sqrt{2\log(2/\delta)T}.
\end{split}
\label{eq:app:upper:bound:inst:regret:bar}
\end{equation}
The first inequality results from the Azuma-Hoeffding's inequality, the second inequality makes use of the lower bound on $b_t$: $b_t \geq \mathbb{B} / \lceil\frac{\sigma^2_{\max}}{R^2}\rceil - 1$, and the last inequality follows from equation~\eqref{eq:app:conditional:infogain:with:max:infogain}.
Equation~\eqref{eq:app:upper:bound:inst:regret:bar} holds with probability of $\geq 1-\delta/2$ according to the Azuma-Hoeffding's inequality. Then, we also have that $r_t = \overline{r}_t,\forall t\geq 1$ with probability of $\geq 1-\delta/2$ according to Lemma~\ref{lemma:uniform_bound}. This completes the proof.

\end{proof}

Note that $c_T = \widetilde{\mathcal{O}}(R\sqrt{\Gamma_{T\mathbb{B}}})$, which allows us to simplify the regret upper bound into an asymptotic expression:
\begin{equation}
R_T = \widetilde{\mathcal{O}}\left( e^C R \frac{1}{\sqrt{\mathbb{B} / \lceil\frac{\sigma^2_{\max}}{R^2}\rceil - 1}} \sqrt{T \Gamma_{T\mathbb{B}}} \left(\sqrt{C}+\sqrt{\Gamma_{T\mathbb{B}}}\right) \right).
\end{equation}
\subsection{Simplified Regret Upper Bound for Constant Noise Variance}
\label{app:sec:simplified:regret:constant:noise:var}
In the special case where the noise variance is fixed throughout the entire domain, i.e., when $\sigma^2(\boldsymbol{x})=\sigma^2_{\text{const}},\forall \boldsymbol{x}\in\mathcal{X}$, we have that $n_t = \lceil\sigma^2_{\text{const}}/R^2\rceil=n_{\text{const}}$ and $b_t=\lfloor \mathbb{B}/n_{\text{const}} \rfloor=b_{0},\forall t\in[T]$. 
In this case, instead of making use of the lower bound on $b_t$ (i.e., $b_t \geq \mathbb{B} / \lceil\frac{\sigma^2_{\max}}{R^2}\rceil - 1$) as done in the proof of Lemma~\ref{lemma:app:upper:bound:cum:regret}, we can simply replace $b_t$ with $b_0$ in the proof. As a result, the term of $\frac{1}{\sqrt{\mathbb{B} / \lceil\frac{\sigma^2_{\max}}{R^2}\rceil - 1}}$ in the regret upper bound proved in Lemma~\ref{lemma:app:upper:bound:cum:regret} can be simply replaced by $\frac{1}{\sqrt{b_{0}}}$, and hence the regret upper bound can be further simplified as:
\begin{equation}
R_T = \widetilde{\mathcal{O}}\left( e^C  R \frac{1}{\sqrt{b_{0}}} \sqrt{T \Gamma_{Tb_{0}}} \left(\sqrt{C}+\sqrt{\Gamma_{Tb_{0}}}\right) \right).
\end{equation}
As another special case where $\sigma^2(\boldsymbol{x})=\sigma^2_{\text{const}}=R^2$, every input $\boldsymbol{x}$ will be evaluated only once (i.e., $n_{\text{const}}=1$) and $b_0=\mathbb{B}$. In this case, our algorithm reduces to the standard batch TS with a batch size of $\mathbb{B}$, and the regret upper bound becomes
\begin{equation}
R_T = \widetilde{\mathcal{O}}\left( e^C R \frac{1}{\sqrt{\mathbb{B}}} \sqrt{T \Gamma_{T\mathbb{B}}} \left(\sqrt{C}+\sqrt{\Gamma_{T\mathbb{B}}}\right) \right).
\end{equation}

\section{Choice of $R^2$ by Minimizing the Regret Upper Bound in Theorem~\ref{theorem:known:variance}}
\label{app:subsec:choice:of:r}
The regret bound in Theorem~\ref{theorem:known:variance} depends on the parameter $R$ through the term 
$g\triangleq\sqrt{\frac{R^2}{\mathbb{B}/(\sigma^2_{\max}/R^2+1)-1}}$, in which we have replaced the term $\lceil \sigma^2_{\max}/R^2 \rceil$ by $\sigma^2_{\max}/R^2 + 1$ such that the resulting regret upper bound is still valid and the subsequent derivations become simplified. Taking the derivative of $g^2$ w.r.t. $R^2$ gives us
\begin{equation}
\frac{\mathrm{d}g^2}{\mathrm{d}R^2} = \frac{(\mathbb{B}-1)(R^2)^2 - 2\sigma^2_{\max}R^2 - \sigma^4_{\max}}{\left[ ((\mathbb{B}-1) R^2 - \sigma^2_{\max}) \right]^2}.
\end{equation}
Setting the above derivative to $0$, we have that the value of $R^2$ that minimizes $g^2$ and $g$ (hence minimizing the regret upper bound) is obtained at
\begin{equation}
R^2 = \sigma^2_{\max}\frac{\sqrt{\mathbb{B}}+1}{\mathbb{B}-1}.
\end{equation}

\section{Upper Bound on Sequential Cumulative Regret}
\label{app:upper:bound:sequential:cumu:regret}

\section{Confidence Bound for the Noise Variance Function}
\label{app:sec:confidence:bound:noise}
Here, as discussed in Sec.~\ref{subsec:bts-red:unknown:noise:var} in the main text, we leverage the concentration of the Chi-squared distribution to show that we can interpret the unbiased estimate of the noise variance in equation~\eqref{eq:empirical:noise:var} as a noisy observation corrupted by a sub-Gaussian noise.
For every queried input, we use the unbiased empirical variance~\eqref{eq:empirical:noise:var} as the observation to update the GP posterior for the noise variance.
First of all, when an input $\boldsymbol{x}^{[b]}_t$ is queried, denote the unbiased empirical variance as $\widetilde{\sigma^2}(\boldsymbol{x}^{[b]}_t)=\sigma^2(\boldsymbol{x}^{[b]}_t) + \epsilon'$, and we will show next that $\epsilon'$ is (with high probability) a sub-Gaussian noise.
Denote as $n_t$ the number replications that we use to query $\boldsymbol{x}^{[b]}_t$.
Since the unbiased empirical variance of a Gaussian random variable follows a Chi-squared distribution, we have that $\widetilde{\sigma^2}(\boldsymbol{x}^{[b]}_t)\in\left[ \frac{\sigma^2(\boldsymbol{x}^{[b]}_t)\chi^2_{n_t-1,\alpha/2}}{n_t-1}, \frac{\sigma^2(\boldsymbol{x}^{[b]}_t)\chi^2_{n_t-1,1-\alpha/2}}{n_t-1} \right]$ with probability of $\geq 1-\alpha$, in which $\chi^2_{n_t-1,\alpha/2}$ ($\chi^2_{n_t-1,1-\alpha/2}$) denotes the $(\alpha/2)^{\text{th}}$-quantile ($(1-\alpha/2)^{\text{th}}$-quantile) of the Chi-square distribution with $n_t-1$ degrees of freedom.
This allows us to show that 
\begin{equation}
\begin{split}
\epsilon' &\in \left[ \sigma^2(\boldsymbol{x}^{[b]}_t)\left(\frac{\chi^2_{n_t-1,\alpha/2}}{n_t-1}-1\right), \sigma^2(\boldsymbol{x}^{[b]}_t)\left(\frac{\chi^2_{n_t-1,1-\alpha/2}}{n_t-1}-1\right) \right] \\
&\in \left[ \sigma^2_{\min}\left(\frac{\chi^2_{n_{\min}-1,\alpha/2}}{n_{\min}-1}-1\right), \sigma^2_{\max}\left(\frac{\chi^2_{n_{\min}-1,1-\alpha/2}}{n_{\min}-1}-1\right) \right]\triangleq\left[L_{\alpha},U_{\alpha}\right],
\end{split}
\label{eq:noise:var:confidence:interval}
\end{equation}
which holds with probability $\geq 1-\alpha$.
$n_{\min}$ is the minimum number of repetitions we impose on every queried $\boldsymbol{x}_t$.
Note that the discussion above on the unbiasedness and boundedness of the noise $\epsilon'$ still holds after we negate the noise variance (i.e., $g(\cdot)=-\sigma^2(\cdot)$) to analyze $\widetilde{y}^{[b]}_t=g(\boldsymbol{x}^{[b]}_t)+\epsilon'$. Then the discussion in the main text (Sec.~\ref{subsec:bts-red:unknown:noise:var}) follows.

\section{The BTS-RED-Unknown Algorithm}
\label{app:sec:bts-red-unknonw}
Our BTS-RED-Unknown algorithm is shown in Algorithm~\ref{algo:unknown:variance}.
\begin{algorithm}
\begin{algorithmic}[1]
	\FOR{$t=1,2,\ldots, T$}
	    \STATE $b=0,n^{(0)}_t=0$
	    \WHILE{$\sum^b_{b'=0}n^{(b')}_t < \mathbb{B}$}
	    \STATE $b\leftarrow b+1$
	    \STATE Sample $f^{(b)}_t$ from $\mathcal{GP}(\mu_{t-1}(\cdot),\beta_t^2\sigma^2_{t-1}(\cdot,\cdot))$
	    \STATE Choose $\boldsymbol{x}^{(b)}_t={\arg\max}_{\boldsymbol{x}\in\mathcal{X}}f^{(b)}_t(\boldsymbol{x})$
	    \STATE Choose $n^{(b)}_t=\lceil (-\mu'_{t-1}(\boldsymbol{x}^{(b)}_t) + \beta'_t \sigma'_{t-1}(\boldsymbol{x}^{(b)}_t))/R^2 \rceil$
	    \ENDWHILE
	    \STATE $b_t = b-1$
        \STATE \textbf{for} $b\in[b_t]$, query $\boldsymbol{x}^{(b)}_t$ with $n^{(b)}_t$ parallel processes
        \STATE \textbf{for} $b\in[b_t]$, observe $\{y^{(b)}_{t,n}\}_{n\in[n^{(b)}_t]}$, then calculate $y^{(b)}_t=(1/n^{(b)}_t)\sum^{n^{(b)}_t}_{n=1}y^{(b)}_{t,n}$ and $\widetilde{y}^{(b)}_t=-1/(n^{(b)}_t-1)\sum^{n^{(b)}_t}_{n=1}\left(y^{(b)}_{t,n}-y^{(b)}_t\right)^2$
        \STATE Use $\{(\boldsymbol{x}^{(b)}_t,y^{(b)}_t)\}_{b\in[b_t]}$ to update posterior of $\mathcal{GP}$
	    \STATE Use $\{(\boldsymbol{x}^{(b)}_t,\widetilde{y}^{(b)}_t)\}_{b\in[b_t]}$ to update posterior of $\mathcal{GP}'$
    \ENDFOR
\end{algorithmic}
\caption{BTS-RED-Unknown.}
\label{algo:unknown:variance}
\end{algorithm}

\section{Proof of Theorem~\ref{theorem:mean:var}}
\label{app:proof:theorem:mean:var}
Our proof here follows a similar structure to the proof in Appendix~\ref{app:sec:proof:known:variance}, but non-trivial changes need to be made to account for the change of objective function here. 
Here, for a given $\omega$, we attempt to upper-bound the mean-variance cumulative regret $R^{\text{MV}}_T=\sum^{T}_{t=1}\min_{b\in[b_t]}[h_{\omega}(\boldsymbol{x}^*_{\omega}) - h_{\omega}(\boldsymbol{x}^{[b]}_t)]$, where $h_{\omega}(\boldsymbol{x})=\omega f(\boldsymbol{x}) + (1-\omega)g(\boldsymbol{x})=\omega f(\boldsymbol{x}) - (1-\omega)\sigma^2(\boldsymbol{x})$ and $\boldsymbol{x}^*_{\omega}\in{\arg\max}_{\boldsymbol{x}\in\mathcal{X}}\left[\omega f(\boldsymbol{x}) + (1-\omega)g(\boldsymbol{x})\right]$.
Throughout the analysis here, we assumed a pre-defined $\omega$ and hence omit any dependence on $\omega$ for simplicity.
That is, we use $\boldsymbol{x}^*$ in place of $\boldsymbol{x}^*_{\omega}$ and use $h(\cdot)$ to represent $h_{\omega}(\cdot)$.

Here, same as the main text, we use $g:\mathcal{X}\rightarrow \mathbb{R}^-$ to denote the function $-\sigma^2:\mathcal{X}\rightarrow \mathbb{R}^-$.
We use $\mu_{t-1}$ and $\sigma_{t-1}$ to denote the GP posterior mean and standard deviation of the GP for $f$ conditioned on all $\tau_{t-1}$ observations up to (and including) iteration $t-1$, and use $\mu'_{t-1}$ and $\sigma'_{t-1}$ to denote the GP posterior mean and standard deviation of the GP for the negative noise variance $-\sigma^2$.
Denote $h^{[b]}_t(\boldsymbol{x}) = \omega f^{[b]}_t(\boldsymbol{x}) + (1-\omega) g^{[b]}_t(\boldsymbol{x})$, such that $\boldsymbol{x}^{[b]}_t \in {\arg\max}_{\boldsymbol{x}\in\mathcal{X}}h^{[b]}_t$.
Denote by $\mathcal{F}'_{t-1}$ the history of observed pairs of input and empirical noise variance up to iteration $t-1$.

Define $\beta_t \triangleq B + R\sqrt{2(\Gamma_{\tau_{t-1}}+1+\log(3/\delta))}$ and $c_t \triangleq \beta_t(1+\sqrt{2\log(2\mathbb{B}|\mathcal{X}|t^2)})$.
Also define $\beta'_t \triangleq B' + R'\sqrt{2(\Gamma'_{\tau_{t-1}}+1+\log(3/\delta))}$ and $c'_t \triangleq \beta'_t(1+\sqrt{2\log(2\mathbb{B}|\mathcal{X}|t^2)})$.
\begin{lemma}
\label{lemma:uniform_bound:sigma}
Let $\delta \in (0, 1)$. Define $E^g(t)$ as the event that $|\mu'_{t-1}(\boldsymbol{x}) - g(\boldsymbol{x})| \leq \beta'_t \sigma'_{t-1}(\boldsymbol{x})$ for all $\boldsymbol{x}\in \mathcal{X}$.
We have that $\mathbb{P}\left[E^g(t)\right] \geq 1 - \delta / 3$ for all $t\geq 1$.
\end{lemma}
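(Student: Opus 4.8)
The plan is to establish Lemma~\ref{lemma:uniform_bound:sigma} as the exact analog of Lemma~\ref{lemma:uniform_bound}, now applied to the second GP $\mathcal{GP}'$ that models $g=-\sigma^2$ rather than to the GP for $f$. Since $\beta'_t \triangleq B' + R'\sqrt{2(\Gamma'_{\tau_{t-1}}+1+\log(3/\delta))}$ has precisely the form of the confidence-width coefficient in Theorem 2 of \cite{chowdhury2017kernelized} with failure probability $\delta/3$, the whole argument reduces to checking that the two hypotheses of that theorem hold for the $g$-regression problem: (i) the target lies in the RKHS of the kernel $k'$ with bounded norm, and (ii) the observation noise is conditionally zero-mean and $R'$-sub-Gaussian with respect to the natural filtration.

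For hypothesis (i) I would simply invoke the assumption stated in Sec.~\ref{subsubsec:upper:bound:on:noise:var:func}, namely $\norm{g}_{\mathcal{H}_{k'}}\leq B'$. For hypothesis (ii) I would use the observation model $\widetilde{y}^{[b]}_t = g(\boldsymbol{x}^{[b]}_t) + \epsilon'$ from~\eqref{eq:empirical:noise:var}: because the unbiased empirical variance is an unbiased estimator of $\sigma^2(\boldsymbol{x}^{[b]}_t)$, the noise $\epsilon'$ is zero-mean, and because each $\widetilde{y}^{[b]}_t$ is computed from fresh independent replications at $\boldsymbol{x}^{[b]}_t$, $\epsilon'$ is conditionally independent of the past given the query location. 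The sub-Gaussianity constant $R'=(U_\alpha-L_\alpha)/2$ is supplied by the Chi-squared concentration argument of Appendix~\ref{app:sec:confidence:bound:noise}, which shows that $\epsilon'$ lies in $[L_\alpha,U_\alpha]$ and is therefore bounded, hence sub-Gaussian. With (i) and (ii) in place, applying Theorem 2 of \cite{chowdhury2017kernelized} to $\mathcal{GP}'$ yields $|\mu'_{t-1}(\boldsymbol{x}) - g(\boldsymbol{x})| \leq \beta'_t \sigma'_{t-1}(\boldsymbol{x})$ uniformly over $\boldsymbol{x}\in\mathcal{X}$ and $t\in[T]$ with probability at least $1-\delta/3$, which is exactly the claim.

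The place I expect to need the most care is the bookkeeping of failure probabilities. Unlike the $f$-case of Lemma~\ref{lemma:uniform_bound}, where sub-Gaussianity is an outright modeling assumption, here the $R'$-sub-Gaussianity of $\epsilon'$ holds only with high probability: it rests on the Chi-squared tail bound holding simultaneously at each of the at most $T\mathbb{B}$ queries, which is why $\alpha$ must be taken as small as $\delta/(cT\mathbb{B})$ and union-bounded over all queries. I would therefore treat the $R'$-sub-Gaussianity event as a separately budgeted high-probability event in the overall proof of Theorem~\ref{theorem:mean:var} (mirroring the main text, which allocated a distinct $\delta/4$ for it), so that the $\delta/3$ charged in Lemma~\ref{lemma:uniform_bound:sigma} is spent purely on the self-normalized concentration of \cite{chowdhury2017kernelized} conditioned on that event. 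The only remaining point requiring attention is that the adaptive, data-dependent selection of both the inputs $\boldsymbol{x}^{[b]}_t$ and the replication counts $n^{[b]}_t$ must remain compatible with the predictable-filtration requirement of Theorem 2; this is fine because the confidence bound of \cite{chowdhury2017kernelized} is valid for arbitrary adaptively chosen query sequences as long as the noise is conditionally sub-Gaussian, which the argument above has already secured.
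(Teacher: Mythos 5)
Your proposal is correct and takes essentially the same route as the paper: the paper's own proof simply defers to the discussion in Secs.~\ref{subsubsec:unknown:modeling:of:noise:var} and~\ref{subsubsec:upper:bound:on:noise:var:func}, i.e., it applies Theorem 2 of~\cite{chowdhury2017kernelized} to the $\mathcal{GP}'$ regression of $g$ under the assumption $\norm{g}_{\mathcal{H}_{k'}}\leq B'$, with the $R'$-sub-Gaussianity of $\epsilon'$ justified by the Chi-squared concentration argument of Appendix~\ref{app:sec:confidence:bound:noise} and the error probability replaced by $\delta/3$. Your failure-probability bookkeeping (charging the lemma's $\delta/3$ only to the self-normalized bound and budgeting the sub-Gaussianity event separately, mirroring the main text's distinct $\delta/4$) is if anything slightly more explicit than the paper's, which folds both failure events into the $\delta/3$ without spelling out the split; the difference affects only constants.
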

The validity of Lemma~\ref{lemma:uniform_bound:sigma} follows from the discussion in Sec.~\ref{subsec:bts-red:unknown:noise:var} (i.e., equation~\eqref{lemma:uniform_bound:sigma}), after replacing the error probability of $\delta/2$ by $\delta/3$.
We also need Lemma~\ref{lemma:uniform_bound} to hold, and since we have replaced the error probability of $\delta/2$ in $\beta_t$ from Lemma~\ref{lemma:uniform_bound} by $\delta/3$ in our definition of $\beta_t$ above, we have that the event $E^{f}(t)$ in Lemma~\ref{lemma:uniform_bound} holds with probability of $\geq1-\delta/3$ here.
\begin{lemma}
\label{lemma:uniform_bound_t:sigma}
Define $E^{g_t}(t)$ as the event: $|g^{[b]}_t(\boldsymbol{x}) - \mu'_{t-1}(\boldsymbol{x})| \leq \beta'_t \sqrt{2\log(2\mathbb{B}|\mathcal{X}|t^2)} \sigma'_{t-1}(\boldsymbol{x})$, $\forall b\in[b_t]$.
We have that $\mathbb{P}\left[E^{g_t}(t) | \mathcal{F}_{t-1}\right] \geq 1 - 1 / (2t^2)$ for any possible filtration $\mathcal{F}'_{t-1}$.
\end{lemma}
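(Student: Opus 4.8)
The plan is to transcribe the proof of Lemma~\ref{lemma:uniform_bound_t} essentially verbatim, replacing every object associated with the objective-function GP $\mathcal{GP}$ by its primed counterpart for the noise-variance GP $\mathcal{GP}'$, and adjusting the per-iteration error budget from $1/t^2$ to $1/(2t^2)$. The starting observation is that each sampled function $g^{[b]}_t$ is drawn from the posterior $\mathcal{GP}'(\mu'_{t-1}(\cdot), {\beta'_t}^2{\sigma'_{t-1}}^2(\cdot))$ (line~5 of Algo.~\ref{algo:mean:var}), so conditioned on $\mathcal{F}'_{t-1}$ the quantity $g^{[b]}_t(\boldsymbol{x})$ is Gaussian with mean $\mu'_{t-1}(\boldsymbol{x})$ and standard deviation $\beta'_t\sigma'_{t-1}(\boldsymbol{x})$.

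First I would invoke the Gaussian tail bound (Lemma B4 of~\cite{hoffman2013exploiting}) for a single fixed $\boldsymbol{x}$ and $b$ to obtain $|g^{[b]}_t(\boldsymbol{x}) - \mu'_{t-1}(\boldsymbol{x})| \leq \beta'_t\sqrt{2\log(1/\delta)}\,\sigma'_{t-1}(\boldsymbol{x})$ with probability at least $1-\delta$. Next I would rescale $\delta \to \delta/(\mathbb{B}|\mathcal{X}|)$ and take a union bound over all $\boldsymbol{x}\in\mathcal{X}$ and all $b\in[b_t]$; since the stopping rule guarantees $b_t\leq\mathbb{B}$ deterministically, the number of events in the union is at most $|\mathcal{X}|\mathbb{B}$, so the uniform bound $|g^{[b]}_t(\boldsymbol{x}) - \mu'_{t-1}(\boldsymbol{x})| \leq \beta'_t\sqrt{2\log(\mathbb{B}|\mathcal{X}|/\delta)}\,\sigma'_{t-1}(\boldsymbol{x})$ holds for all $\boldsymbol{x},b$ with probability at least $1-\delta$. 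Finally, setting $\delta = 1/(2t^2)$ converts the log term into $\sqrt{2\log(2\mathbb{B}|\mathcal{X}|t^2)}$ and yields exactly the claimed event $E^{g_t}(t)$ with $\mathbb{P}[E^{g_t}(t)\mid\mathcal{F}'_{t-1}]\geq 1-1/(2t^2)$.

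There is no substantive obstacle here; the argument is a direct copy of the earlier lemma. The one point worth flagging is the bookkeeping behind the $1/(2t^2)$ choice. Unlike Lemma~\ref{lemma:uniform_bound_t}, which spent its full $1/t^2$ budget on the single sampling event for $\mathcal{GP}$, the mean-variance analysis must simultaneously control the sampling fluctuations of both $f^{[b]}_t$ and $g^{[b]}_t$ in every iteration. Splitting the per-iteration budget as $1/(2t^2)$ for each GP ensures that their combined failure probability remains $1/t^2$, so that summing over $t$ via $\sum_t 1/t^2 = \pi^2/6$ keeps the total sampling error finite and consistent with the $\delta/3$ allocations made to $E^f(t)$ and $E^g(t)$ in Lemmas~\ref{lemma:uniform_bound} and~\ref{lemma:uniform_bound:sigma}. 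The only computation to verify is that $\mathbb{B}|\mathcal{X}|/\delta = 2\mathbb{B}|\mathcal{X}|t^2$ under this choice, which is immediate, together with the consistency of the constant inside the logarithm with the definition of $c'_t$ stated just above the lemma.
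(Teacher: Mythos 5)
Your proposal is correct and takes essentially the same route as the paper: the paper gives no separate argument for this lemma, stating only that it follows from Lemma~\ref{lemma:uniform_bound_t} (the Gaussian tail bound of~\cite{hoffman2013exploiting} plus a union bound over $\boldsymbol{x}\in\mathcal{X}$ and $b\in[b_t]\subseteq[\mathbb{B}]$) with the error probability $1/t^2$ replaced by $1/(2t^2)$, which is exactly the transcription you carry out. Your closing remark on the budget split is also consistent with how the paper later uses this lemma, namely in the bound $\mathbb{P}\left( \overline{E^{f_t}(t)} \cup \overline{E^{g_t}(t)} \,|\, \mathcal{F}_{t-1},\mathcal{F}'_{t-1} \right) \leq 1/(2t^2)+1/(2t^2) = 1/t^2$ inside the proof of Lemma~\ref{lemma:prob_unsaturated:mean:var}.
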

Similarly, we will also need Lemma~\ref{lemma:uniform_bound_t} to hold, but replace the error probability of $1/t^2$ by $1/(2t^2)$.
Of note, we have also correspondingly changed the value of $c_t$ from Appendix~\ref{app:sec:proof:known:variance} by replacing $t^2$ by $2t^2$ in our definition of $c_t$ above.

The definition of saturated points also needs to be modified:
\begin{definition}
\label{def:saturated_set:mean:var}
Define the set of saturated points at iteration $t$ as
\[
S'_t = \{ \boldsymbol{x} \in \mathcal{X} : \Delta(\boldsymbol{x}) > \omega c_t \sigma_{t-1}(\boldsymbol{x}) + (1-\omega)c'_t \sigma'_{t-1}(\boldsymbol{x}) \},
\]
in which $\Delta(\boldsymbol{x}) = h(\boldsymbol{x}^*) - h(\boldsymbol{x})$ and $\boldsymbol{x}^* \in \arg\max_{\boldsymbol{x}\in \mathcal{X}}h(\boldsymbol{x})$.
\end{definition}

The following lemma is a counterpart to Lemma~\ref{lemma:uniform_lower_bound} in Appendix~\ref{app:sec:proof:known:variance}, and the proof here makes use of the same techniques.
\begin{lemma}
\label{lemma:uniform_lower_bound:mean:var}
For any $\mathcal{F}'_{t-1}$, conditioned on the events $E^g(t)$, we have that $\forall \boldsymbol{x}\in \mathcal{X},b\in[b_t]$,
\begin{equation}
\mathbb{P}\left(g^{[b]}_t(\boldsymbol{x}) > g(\boldsymbol{x}) | \mathcal{F}'_{t-1}\right) \geq p,
\label{eq:tmp_1}
\end{equation}
in which $p=\frac{1}{4e\sqrt{\pi}}$. 
\end{lemma}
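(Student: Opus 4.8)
The plan is to reproduce the proof of Lemma~\ref{lemma:uniform_lower_bound} essentially verbatim, substituting the negative-noise-variance objective $g$ and its GP-posterior quantities for the corresponding quantities of $f$. The single structural fact that makes the argument go through unchanged is that in Mean-Var-BTS-RED the sample $g^{[b]}_t$ is drawn from the \emph{inflated} posterior $\mathcal{GP}'(\mu'_{t-1}(\cdot),(\beta'_t)^2(\sigma'_{t-1})^2(\cdot))$ (line 5 of Algo.~\ref{algo:mean:var}), so that, conditioned on $\mathcal{F}'_{t-1}$, the normalized quantity $\big(g^{[b]}_t(\boldsymbol{x}) - \mu'_{t-1}(\boldsymbol{x})\big)/\big(\beta'_t \sigma'_{t-1}(\boldsymbol{x})\big)$ is a standard Gaussian for every fixed $\boldsymbol{x}$ and every $b\in[b_t]$. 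This is exactly the property of $f^{[b]}_t$ that was exploited in Lemma~\ref{lemma:uniform_lower_bound}.

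First I would standardize the probability of interest by subtracting $\mu'_{t-1}(\boldsymbol{x})$ and dividing by $\beta'_t\sigma'_{t-1}(\boldsymbol{x})$, writing
\begin{equation*}
\mathbb{P}\!\left(g^{[b]}_t(\boldsymbol{x}) > g(\boldsymbol{x}) \,\big|\, \mathcal{F}'_{t-1}\right) = \mathbb{P}\!\left(\frac{g^{[b]}_t(\boldsymbol{x}) - \mu'_{t-1}(\boldsymbol{x})}{\beta'_t \sigma'_{t-1}(\boldsymbol{x})} > \frac{g(\boldsymbol{x}) - \mu'_{t-1}(\boldsymbol{x})}{\beta'_t \sigma'_{t-1}(\boldsymbol{x})} \,\Big|\, \mathcal{F}'_{t-1}\right).
\end{equation*}
Next I would lower-bound the right-hand threshold by replacing $g(\boldsymbol{x}) - \mu'_{t-1}(\boldsymbol{x})$ with its absolute value, which only enlarges the threshold and hence shrinks the probability. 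Then, conditioning on the confidence event $E^g(t)$ of Lemma~\ref{lemma:uniform_bound:sigma}, we have $|\mu'_{t-1}(\boldsymbol{x}) - g(\boldsymbol{x})| \le \beta'_t\sigma'_{t-1}(\boldsymbol{x})$, so the threshold $|g(\boldsymbol{x}) - \mu'_{t-1}(\boldsymbol{x})|/(\beta'_t\sigma'_{t-1}(\boldsymbol{x}))$ is at most $1$; the probability is therefore at least $\mathbb{P}(Z>1)$ for a standard normal $Z$. Invoking the Gaussian anti-concentration bound $\mathbb{P}(Z>1)\ge e^{-1}/(4\sqrt{\pi})$ yields exactly $p=\tfrac{1}{4e\sqrt{\pi}}$. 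Finally, since every $g^{[b]}_t$ is drawn identically from $\mathcal{GP}'(\mu'_{t-1}(\cdot),(\beta'_t)^2(\sigma'_{t-1})^2(\cdot))$, the same bound holds uniformly for all $b\in[b_t]$.

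I do not expect any genuine obstacle here, as the argument is a direct transcription of Lemma~\ref{lemma:uniform_lower_bound}. The only points requiring care are bookkeeping ones: one must consistently use the \emph{primed} posterior quantities $\mu'_{t-1},\sigma'_{t-1},\beta'_t$, the filtration $\mathcal{F}'_{t-1}$, and condition on $E^g(t)$ (from Lemma~\ref{lemma:uniform_bound:sigma}) rather than $E^f(t)$; and one should note that the anti-concentration constant $e^{-1}/(4\sqrt{\pi})$ depends only on the fixed threshold $1$ and is independent of $\boldsymbol{x}$, $b$, and $t$, so the resulting lower bound $p$ is a universal constant as claimed.
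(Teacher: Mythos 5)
Your proposal is correct and follows the paper's own proof essentially verbatim: the same standardization of $g^{[b]}_t(\boldsymbol{x})$ by the primed posterior quantities, the same lower-bounding of the threshold via the absolute value and the event $E^g(t)$, and the same Gaussian anti-concentration bound $\mathbb{P}(Z>1)\geq e^{-1}/(4\sqrt{\pi})$, applied uniformly over $b\in[b_t]$ since all samples are drawn identically from $\mathcal{GP}'(\mu'_{t-1}(\cdot),(\beta'_t)^2(\sigma'_{t-1})^2(\cdot))$. No gaps; this matches the paper's argument exactly.
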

\begin{proof}
For any $b\in[b_t]$, we have that
\begin{equation}
\begin{split}
\mathbb{P}\left(g^{[b]}_t(\boldsymbol{x}) > g(\boldsymbol{x}) | \mathcal{F}'_{t-1}\right) &= 
\mathbb{P}\left(\frac{g^{[b]}_t(\boldsymbol{x})-\mu'_{t-1}(\boldsymbol{x})}{\beta'_t\sigma'_{t-1}(\boldsymbol{x})} > \frac{g(\boldsymbol{x})-\mu'_{t-1}(\boldsymbol{x})}{\beta'_t\sigma'_{t-1}(\boldsymbol{x})} \Big| \mathcal{F}'_{t-1}\right)\\
&\geq \mathbb{P}\left(\frac{g^{[b]}_t(\boldsymbol{x})-\mu'_{t-1}(\boldsymbol{x})}{\beta'_t\sigma'_{t-1}(\boldsymbol{x})} > \frac{|g(\boldsymbol{x})-\mu'_{t-1}(\boldsymbol{x})|}{\beta'_t\sigma'_{t-1}(\boldsymbol{x})} \Big| \mathcal{F}'_{t-1}\right)\\
&\geq \mathbb{P}\left(\frac{g^{[b]}_t(\boldsymbol{x})-\mu'_{t-1}(\boldsymbol{x})}{\beta'_t\sigma'_{t-1}(\boldsymbol{x})} > 1 \Big| \mathcal{F}'_{t-1}\right)\\
&\geq \frac{e^{-1}}{4\sqrt{\pi}}.
\end{split}
\end{equation}
\end{proof}

The next Lemma is the counterpart to Lemma~\ref{lemma:prob_unsaturated} in Appendix~\ref{app:sec:proof:known:variance}, but additional challenges need to be carefully handled here.
\begin{lemma}
\label{lemma:prob_unsaturated:mean:var}
For any $\mathcal{F}_{t-1}$ and $\mathcal{F}'_{t-1}$, conditioned on the events $E^f(t)$ and $E^g(t)$, we have that 
\[
\mathbb{P}\left(\boldsymbol{x}^{[b]}_t \in \mathcal{X}\setminus S'_t| \mathcal{F}_{t-1}, \mathcal{F}'_{t-1} \right) \geq p^2 - 1 / t^2, \qquad \forall b\in[b_t].
\]
\end{lemma}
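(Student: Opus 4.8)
The plan is to mirror the argument for Lemma~\ref{lemma:prob_unsaturated} from the known-variance case, with two modifications forced by the mean-variance objective: the saturation bound now involves \emph{both} posterior standard deviations, and the lower bound degrades from $p$ to $p^2$ because we must make the samples from \emph{both} $\mathcal{GP}$ and $\mathcal{GP}'$ exceed their respective ground-truth values at $\boldsymbol{x}^*$. First I would observe that $\boldsymbol{x}^*$ is always unsaturated, since $\Delta(\boldsymbol{x}^*)=0 \le \omega c_t \sigma_{t-1}(\boldsymbol{x}^*) + (1-\omega)c'_t \sigma'_{t-1}(\boldsymbol{x}^*)$, so $\boldsymbol{x}^* \in \mathcal{X}\setminus S'_t$. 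Because $\boldsymbol{x}^{[b]}_t = \arg\max_{\boldsymbol{x}} h^{[b]}_t(\boldsymbol{x})$, the event $\{h^{[b]}_t(\boldsymbol{x}^*) > h^{[b]}_t(\boldsymbol{x}),\forall \boldsymbol{x}\in S'_t\}$ forces the selected input to be unsaturated, giving
\[
\mathbb{P}\left(\boldsymbol{x}^{[b]}_t \in \mathcal{X}\setminus S'_t \mid \mathcal{F}_{t-1},\mathcal{F}'_{t-1}\right) \ge \mathbb{P}\left(h^{[b]}_t(\boldsymbol{x}^*) > h^{[b]}_t(\boldsymbol{x}),\ \forall \boldsymbol{x}\in S'_t \mid \mathcal{F}_{t-1},\mathcal{F}'_{t-1}\right).
\]

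Next, conditioning additionally on the four events $E^f(t), E^{f_t}(t), E^g(t), E^{g_t}(t)$, for any saturated $\boldsymbol{x}\in S'_t$ I would combine the concentration bounds on $f^{[b]}_t$ and $g^{[b]}_t$ (from Lemmas~\ref{lemma:uniform_bound}, \ref{lemma:uniform_bound_t}, \ref{lemma:uniform_bound:sigma}, and~\ref{lemma:uniform_bound_t:sigma}) with the nonnegativity of $\omega$ and $1-\omega$ to obtain
\[
h^{[b]}_t(\boldsymbol{x}) \le h(\boldsymbol{x}) + \omega c_t \sigma_{t-1}(\boldsymbol{x}) + (1-\omega)c'_t \sigma'_{t-1}(\boldsymbol{x}) \le h(\boldsymbol{x}) + \Delta(\boldsymbol{x}) = h(\boldsymbol{x}^*),
\]
where the second inequality is exactly Definition~\ref{def:saturated_set:mean:var}. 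Hence, on these four events, the single event $\{h^{[b]}_t(\boldsymbol{x}^*) > h(\boldsymbol{x}^*)\}$ already guarantees that $\boldsymbol{x}^*$ beats every saturated point and so $\boldsymbol{x}^{[b]}_t$ is unsaturated.

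The crux is then to lower-bound $\mathbb{P}(h^{[b]}_t(\boldsymbol{x}^*) > h(\boldsymbol{x}^*))$. Since $h^{[b]}_t = \omega f^{[b]}_t + (1-\omega)g^{[b]}_t$ with nonnegative weights, this exceedance is implied by the \emph{conjunction} $\{f^{[b]}_t(\boldsymbol{x}^*) > f(\boldsymbol{x}^*)\} \cap \{g^{[b]}_t(\boldsymbol{x}^*) > g(\boldsymbol{x}^*)\}$. Because $f^{[b]}_t$ and $g^{[b]}_t$ are sampled independently from the two separate posteriors, these events are conditionally independent given the histories, so their joint probability factorizes; applying Lemma~\ref{lemma:uniform_lower_bound} (conditioned on $E^f$) and Lemma~\ref{lemma:uniform_lower_bound:mean:var} (conditioned on $E^g$) bounds each factor below by $p$, yielding $p^2$. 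Finally I would subtract the failure probabilities of the two sampling-concentration events via a union bound, $\mathbb{P}(\overline{E^{f_t}})\le 1/(2t^2)$ and $\mathbb{P}(\overline{E^{g_t}})\le 1/(2t^2)$ (the halved budgets are precisely why $c_t$ and $c'_t$ carry $2t^2$ inside the logarithm), which together contribute $1/t^2$ and give the claimed $p^2 - 1/t^2$.

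The main obstacle is the independence-and-error-budget bookkeeping. I must verify that $f^{[b]}_t$ and $g^{[b]}_t$ are genuinely conditionally independent so the two exceedance probabilities \emph{multiply} to $p^2$ rather than merely being union-bounded, which here would give the useless estimate $2p-1<0$ (recall $p=\tfrac{1}{4e\sqrt{\pi}}$); and I must track the four confidence events and their allocations carefully, so that the two sampling events each consume exactly $1/(2t^2)$ while the high-probability events $E^f(t)$ and $E^g(t)$ are absorbed into the conditioning. The linearity of $h_\omega$ in $f$ and $g$ with nonnegative weights is what makes both the saturation bound and the conjunction-implies-exceedance step go through cleanly.
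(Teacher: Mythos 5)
Your proposal is correct and follows essentially the same route as the paper's proof: the same reduction to the event $\{h^{[b]}_t(\boldsymbol{x}^*) > h^{[b]}_t(\boldsymbol{x}),\ \forall \boldsymbol{x}\in S'_t\}$, the same saturation bound $h^{[b]}_t(\boldsymbol{x}) \leq h(\boldsymbol{x}^*)$ under the four confidence events, the same conjunction step yielding $p^2$, and the same union bound over $\overline{E^{f_t}(t)} \cup \overline{E^{g_t}(t)}$ contributing $1/t^2$. Your only addition is to state explicitly the conditional independence of the samples $f^{[b]}_t$ and $g^{[b]}_t$ (which the paper uses implicitly when multiplying the two exceedance probabilities from Lemmas~\ref{lemma:uniform_lower_bound} and~\ref{lemma:uniform_lower_bound:mean:var}), a worthwhile clarification rather than a different argument.
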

\begin{proof}
For every $b\in[b_t]$,
\begin{equation}
\mathbb{P}\left(\boldsymbol{x}^{[b]}_t \in \mathcal{X}\setminus S'_t | \mathcal{F}_{t-1},\mathcal{F}'_{t-1} \right) \geq \mathbb{P}\left( h^{[b]}_t(\boldsymbol{x}^*) > h^{[b]}_t(\boldsymbol{x}),\forall \boldsymbol{x} \in S_t | \mathcal{F}_{t-1},\mathcal{F}'_{t-1} \right),
\label{eq:lower_bound_prob_unsaturated:mean:var}
\end{equation}
which holds $\forall b\in[b_t]$. This inequality follows from noting that $\boldsymbol{x}^*$ is always unsaturated according to Definition~\ref{def:saturated_set:mean:var}, and that $\boldsymbol{x}^{[b]}_t={\arg\max}_{\boldsymbol{x}\in\mathcal{X}}h^{[b]}_t(\boldsymbol{x})={\arg\max}_{\boldsymbol{x}\in\mathcal{X}} (\omega f^{[b]}_t(\boldsymbol{x}) + (1-\omega) g^{[b]}_t(\boldsymbol{x}))$.

Next, we assume that the events $E^f(t)$, $E^{f_t}(t)$, $E^g(t)$ and $E^{g_t}(t)$ all hold, which allows us to derive an upper bound on $h^{[b]}_t(\boldsymbol{x})$ for all $\boldsymbol{x}\in S'_t$ and for all $b\in[b_t]$:
\begin{equation}
\begin{split}
h^{[b]}_t(\boldsymbol{x}) &= \omega f^{[b]}_t(\boldsymbol{x}) + (1-\omega) g^{[b]}_t(\boldsymbol{x})\\
&\leq \omega \left(f(\boldsymbol{x}) + c_t\sigma_{t-1}(\boldsymbol{x})\right) + (1-\omega)\left( g(\boldsymbol{x}) + c'_t\sigma'_{t-1}(\boldsymbol{x}) \right)\\
&= \omega f(\boldsymbol{x}) + (1-\omega) g(\boldsymbol{x}) + \omega c_t \sigma_{t-1}(\boldsymbol{x}) + (1-\omega) c'_t \sigma'_{t-1}(\boldsymbol{x})\\
&\leq h(\boldsymbol{x}) + \Delta(\boldsymbol{x})\\
&=h(\boldsymbol{x}) + h(\boldsymbol{x}^*) - h(\boldsymbol{x}) = h(\boldsymbol{x}^*),
\end{split}
\label{eq:bound_ftx_ftstar:mean:var}
\end{equation}
where the first inequality follows from Lemmas~\ref{lemma:uniform_bound},~\ref{lemma:uniform_bound_t},~\ref{lemma:uniform_bound:sigma} and~\ref{lemma:uniform_bound_t:sigma}, and the second inequality makes use of the definition of $h(\boldsymbol{x})$ as well as Definition~\ref{def:saturated_set:mean:var}.

Therefore,~\eqref{eq:bound_ftx_ftstar:mean:var} implies that for every $b\in[b_t]$, if the events $E^f(t)$, $E^{f_t}(t)$, $E^g(t)$ and $E^{g_t}(t)$ all hold, then we have that
\begin{equation}
\begin{split}
    \mathbb{P}\big( h^{[b]}_t(\boldsymbol{x}^*) > h^{[b]}_t(\boldsymbol{x}),&\forall \boldsymbol{x} \in S_t | \mathcal{F}_{t-1},\mathcal{F}'_{t-1} \big) \geq \mathbb{P}\left( h^{[b]}_t(\boldsymbol{x}^*) > h(\boldsymbol{x}^*) | \mathcal{F}_{t-1},\mathcal{F}'_{t-1} \right)\\
    &=\mathbb{P}\left( \omega f^{[b]}_t(\boldsymbol{x}^*) + (1-\omega) g^{[b]}_t(\boldsymbol{x}^*) > \omega f(\boldsymbol{x}^*) + (1-\omega) g(\boldsymbol{x}^*)| \mathcal{F}_{t-1},\mathcal{F}'_{t-1} \right)\\
    &\geq \mathbb{P}\left( f^{[b]}_t(\boldsymbol{x}^*) > f(\boldsymbol{x}^*) \,\text{and } g^{[b]}_t(\boldsymbol{x}^*) > g(\boldsymbol{x}^*) | \mathcal{F}_{t-1},\mathcal{F}'_{t-1}\right)\\
    &\geq p^2.
\end{split}
\end{equation}
The second inequality results since the event in the third line implies the event in the line above, and the last inequality follows from Lemmas~\ref{lemma:uniform_lower_bound} and~\ref{lemma:uniform_lower_bound:mean:var}.

Next, for every $b\in[b_t]$, we can show that 
\begin{equation}
\begin{split}
    \mathbb{P}\left(\boldsymbol{x}^{[b]}_t \in \mathcal{X}\setminus S'_t | \mathcal{F}_{t-1},\mathcal{F}'_{t-1} \right) &\geq 
    \mathbb{P}\left( h^{[b]}_t(\boldsymbol{x}^*) > h^{[b]}_t(\boldsymbol{x}),\forall \boldsymbol{x} \in S'_t | \mathcal{F}_{t-1},\mathcal{F}'_{t-1} \right)\\
    &\geq \mathbb{P}\left( h^{[b]}_t(\boldsymbol{x}^*) > h(\boldsymbol{x}^*) | \mathcal{F}_{t-1},\mathcal{F}_{t-1} \right) - \mathbb{P}\left(\overline{E^{f_t}(t)} \cup \overline{E^{g_t}(t)} | \mathcal{F}_{t-1},\mathcal{F}'_{t-1}\right)\\
    &\geq p^2 - 1 / t^2,
\end{split}
\label{eq:unsaturated_prob_plugin_1}
\end{equation}
which holds for all $b\in[b_t]$.
The second inequality follows from considering the following two events separately: $E^{f_t}(t) \cap E^{g_t}(t)$ and $\overline{E^{f_t}(t) \cap E^{g_t}(t)}$, and the last inequality follows since $\mathbb{P}\left( \overline{E^{f_t}(t)} \cup \overline{E^{g_t}(t)} | \mathcal{F}_{t-1},\mathcal{F}'_{t-1} \right) \leq \mathbb{P}\left( \overline{E^{f_t}(t)} | \mathcal{F}_{t-1},\mathcal{F}'_{t-1} \right) + \mathbb{P}\left( \overline{E^{g_t}(t)} | \mathcal{F}_{t-1},\mathcal{F}'_{t-1} \right)= 1/(2t^2)+1/(2t^2) = 1/t^2$.
\end{proof}

Next, we will need the following Lemma which is a counterpart to Lemma~\ref{lemma:known:var:C}.
\begin{lemma}
\label{lemma:mean:var:C}
Define $C_2 = \frac{2}{\log(1+\lambda^{-1})}$.
Denote all $\tau_{t-1}$ observed empirical means from iterations (batches) $1$ to $t-1$ as $\boldsymbol{y}_{1:t-1}$, and the $b_t$ observed empirical means in the $t^{\text{th}}$ batch as $\boldsymbol{y}_t$. Also denote all $\tau_{t-1}$ observed noise variances from iterations (batches) $1$ to $t-1$ as $\boldsymbol{y}'_{1:t-1}$, and the $b_t$ observed noise variances in the $t^{\text{th}}$ batch as $\boldsymbol{y}'_t$. 
Choose $C$ as an absolute constant such that $\max_{A\subset \mathcal{X},|A|\leq \mathbb{B}} \mathbb{I}\left( f;\boldsymbol{y}_A | \boldsymbol{y}_{1:t-1} \right) \leq C,\forall t\geq1$ and $\max_{A\subset \mathcal{X},|A|\leq \mathbb{B}} \mathbb{I}\left( g;\boldsymbol{y}'_A | \boldsymbol{y}'_{1:t-1} \right) \leq C,\forall t\geq1$.
Then we have that
\[
\sum^{b_t}_{b=1}\sigma_{t-1}(\boldsymbol{x}^{[b]}_t) \leq e^C \sqrt{C_2 b_t \mathbb{I}(f;\boldsymbol{y}_t | \boldsymbol{y}_{1:t-1})},
\]
and
\[
\sum^{b_t}_{b=1}\sigma'_{t-1}(\boldsymbol{x}^{[b]}_t) \leq e^C \sqrt{C_2 b_t \mathbb{I}(g;\boldsymbol{y}'_t | \boldsymbol{y}'_{1:t-1})}.
\]
\end{lemma}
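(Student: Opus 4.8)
The plan is to recognize that this lemma is simply a two-GP version of Lemma~\ref{lemma:known:var:C}: the first displayed inequality is \emph{identical} to Lemma~\ref{lemma:known:var:C}, and the second is its exact analogue in which the GP for $f$ (with empirical-mean observations $\boldsymbol{y}$) is replaced by $\mathcal{GP}'$ for $g=-\sigma^2$ (with empirical-variance observations $\boldsymbol{y}'$). Since the proof of Lemma~\ref{lemma:known:var:C} never uses anything specific to $f$ beyond (i) the within-batch multiplicative decay of the posterior standard deviation in terms of conditional information gain, (ii) the uniform information-gain bound $C$, (iii) the Cauchy--Schwarz inequality, and (iv) the elementary bound $s^2\leq\frac{1}{\log(1+\lambda^{-1})}\log(1+\lambda^{-1}s^2)$, I would reuse that argument essentially verbatim for both chains.

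First I would dispatch the first inequality by directly invoking Lemma~\ref{lemma:known:var:C}, whose hypothesis is met here because the present lemma assumes $\max_{A:|A|\leq\mathbb{B}}\mathbb{I}(f;\boldsymbol{y}_A | \boldsymbol{y}_{1:t-1})\leq C$ for all $t\geq1$. Then, for the second inequality, I would mirror the derivation on $\mathcal{GP}'$. Introducing $\sigma'_{t-1,b-1}(\cdot)$, the posterior standard deviation of $\mathcal{GP}'$ after additionally conditioning on the first $b-1$ queries of iteration $t$, the key step is the pointwise ratio bound
\begin{equation*}
\frac{\sigma'_{t-1}(\boldsymbol{x})}{\sigma'_{t-1,b-1}(\boldsymbol{x})} = \exp\!\big(\mathbb{I}(g(\boldsymbol{x});\boldsymbol{y}'_{t,1:b-1} | \boldsymbol{y}'_{1:t-1})\big) \leq \exp\!\big(\max_{A:|A|\leq\mathbb{B}}\mathbb{I}(g;\boldsymbol{y}'_A | \boldsymbol{y}'_{1:t-1})\big)\leq e^C,
\end{equation*}
which holds for $b>1$ (and trivially with ratio $1$ when $b=1$), now using the second hypothesis $\max_{A}\mathbb{I}(g;\boldsymbol{y}'_A | \boldsymbol{y}'_{1:t-1})\leq C$. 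Summing $\sigma'_{t-1}(\boldsymbol{x}^{[b]}_t)\leq e^C\sigma'_{t-1,b-1}(\boldsymbol{x}^{[b]}_t)$ over $b\in[b_t]$, then applying Cauchy--Schwarz, the elementary inequality (iv), and finally the definition of information gain, yields $\sum_{b=1}^{b_t}\sigma'_{t-1}(\boldsymbol{x}^{[b]}_t)\leq e^C\sqrt{C_2 b_t\,\mathbb{I}(g;\boldsymbol{y}'_t | \boldsymbol{y}'_{1:t-1})}$, completing the proof.

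The only point that is genuinely new relative to Lemma~\ref{lemma:known:var:C}---and hence the step I would flag---is that a \emph{single} constant $C$ must simultaneously dominate the per-batch information gains of both $f$ and $g$. This is not really an obstacle but rather an assumption already made explicit in the lemma statement, and it is justified in the surrounding discussion via the uncertainty-sampling initialization argument of~\cite{desautels2014parallelizing} run on whichever of the kernels $k,k'$ has the smaller lengthscale. With that hypothesis in hand, both chains close identically and no further work is required.
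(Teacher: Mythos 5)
Your proposal is correct and follows essentially the same route as the paper: the paper's own proof likewise disposes of both displayed inequalities by ``applying the proof techniques of Lemma~\ref{lemma:known:var:C} to $\sigma_{t-1}$ and $\sigma'_{t-1}$ separately,'' exactly the ratio/Cauchy--Schwarz/information-gain chain you spell out for $\mathcal{GP}'$. The only difference is one of emphasis: the paper's proof spends most of its text on the point you flag at the end---justifying that a single constant $C$ can simultaneously bound both maximum conditional information gains, via uncertainty-sampling initialization with whichever of $k,k'$ has the smaller lengthscale (Lemma 4 of~\cite{desautels2014parallelizing} plus submodularity)---which you correctly identify and defer to the lemma's hypothesis and the surrounding discussion.
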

\begin{proof}
Note that we have assumed that both $f$ and $g$ are associated with the SE kernels $k$ and $k'$, respectively. Denote the length scales for $k$ and $k'$ by $\theta$ and $\theta'$ respectively. Note that the maximum information gain is decreasing in the length scale, i.e., a smaller length scale leads to a larger maximum information gain~\cite{berkenkamp2019no}.
Therefore, we can run the initialization stage via uncertainty sampling to observe $\boldsymbol{y}_{\text{init}}$ using the kernel with a smaller length scale. 

For example, if $\theta < \theta'$, we use the kernel $k$ to run the uncertainty sampling algorithm for $T_{\text{init}}$ iterations to collect the initial set of inputs $D_{\text{init}}$, such that we can guarantee that
\begin{equation}
\max_{A\subset \mathcal{X},|A|\leq \mathbb{B}} \mathbb{I}\left( f;\boldsymbol{y}_A | \boldsymbol{y}_{1:t-1} \right)\leq
\max_{A\subset \mathcal{X},|A|\leq \mathbb{B}} \mathbb{I}\left( f;\boldsymbol{y}_A | \boldsymbol{y}_{\text{init}} \right) \leq C,\forall t\geq1,
\end{equation}
where the first inequality follows from the submodularity of conditional information gain, and the second inequality is a consequence of Lemma 4 of~\cite{desautels2014parallelizing}.
Note that given the same set of initial inputs $D_{\text{init}}$, the maximum conditional information gains $\max_{A\subset \mathcal{X},|A|\leq \mathbb{B}}\mathbb{I}\left( f;\boldsymbol{y}_A | \boldsymbol{y}_{\text{init}} \right)$ and $\max_{A\subset \mathcal{X},|A|\leq \mathbb{B}}\mathbb{I}\left( g;\boldsymbol{y}_A' | \boldsymbol{y}'_{\text{init}} \right)$ differ by only the lengthscales of the kernels $k$ and $k'$, denoted as $\theta$ and $\theta'$, respectively. Therefore, since we have assumed that $\theta < \theta'$ in the discussion here, we have that 
\begin{equation}
\max_{A\subset \mathcal{X},|A|\leq \mathbb{B}}\mathbb{I}\left( g;\boldsymbol{y}_A' | \boldsymbol{y}'_{\text{init}} \right) < \max_{A\subset \mathcal{X},|A|\leq \mathbb{B}}\mathbb{I}\left( f;\boldsymbol{y}_A | \boldsymbol{y}_{\text{init}} \right).
\end{equation}
This further tells us that 
\begin{equation}
\max_{A\subset \mathcal{X},|A|\leq \mathbb{B}} \mathbb{I}\left( g;\boldsymbol{y}'_A | \boldsymbol{y}'_{1:t-1} \right)\leq
\max_{A\subset \mathcal{X},|A|\leq \mathbb{B}} \mathbb{I}\left( g;\boldsymbol{y}'_A | \boldsymbol{y}'_{\text{init}} \right) \leq \max_{A\subset \mathcal{X},|A|\leq \mathbb{B}}\mathbb{I}\left( f;\boldsymbol{y}_A | \boldsymbol{y}_{\text{init}} \right) \leq C,\forall t\geq1.
\end{equation}

Subsequently, the proof is completed by applying the proof techniques of Lemma~\ref{lemma:known:var:C} to $\sigma_{t-1}$ and $\sigma'_{t-1}$ separately.
\end{proof}

\begin{lemma}
\label{lemma:upper_bound_expected_regret:mean:var}
Define $\widetilde{B}=\omega B + (1-\omega)B'$.
For any filtrations $\mathcal{F}_{t-1}$ and $\mathcal{F}'_{t-1}$, conditioned on the events $E^{f}(t)$ and $E^{g}(t)$, we have that 
\begin{equation*}
\begin{split}
\mathbb{E}\left[\min_{b\in[b_t]} r^{[b]}_t \Big|\mathcal{F}_{t-1}\right] &\leq e^C \left(1+\frac{28}{p^2}\right) \Bigg[ \omega c_t \mathbb{E}\Big[\frac{1}{b_t}\sqrt{C_2 b_t \mathbb{I}\left( f;\boldsymbol{y}_t | \boldsymbol{y}_{1:t-1} \right)} | \mathcal{F}_{t-1},\mathcal{F}'_{t-1}\Big]\\
&\quad + (1-\omega)c'_t\mathbb{E}\Big[\frac{1}{b_t}\sqrt{C_2 b_t \mathbb{I}\left(g;\boldsymbol{y}'_t | \boldsymbol{y}'_{1:t-1} \right)} | \mathcal{F}_{t-1},\mathcal{F}'_{t-1}\Big] \Bigg] + \frac{2\widetilde{B}}{t^2},
\end{split}
\end{equation*}
in which $r^{[b]}_t=h(\boldsymbol{x}^*) - h(\boldsymbol{x}^{[b]}_t)$.
\end{lemma}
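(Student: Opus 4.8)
The plan is to mirror the single-objective argument of Lemma~\ref{lemma:upper_bound_expected_regret}, replacing the posterior width $c_t\sigma_{t-1}(\cdot)$ by the \emph{combined} confidence width $w_t(\boldsymbol{x})\triangleq\omega c_t\sigma_{t-1}(\boldsymbol{x})+(1-\omega)c'_t\sigma'_{t-1}(\boldsymbol{x})$, which is exactly the quantity defining the saturation set $S'_t$ in Definition~\ref{def:saturated_set:mean:var}. First I would record the one-line consequence of combining the four confidence events: conditioned on $E^f(t)\cap E^{f_t}(t)$ and $E^g(t)\cap E^{g_t}(t)$, the triangle inequality gives $|f-f^{[b]}_t|\leq c_t\sigma_{t-1}$ and $|g-g^{[b]}_t|\leq c'_t\sigma'_{t-1}$, hence $|h(\boldsymbol{x})-h^{[b]}_t(\boldsymbol{x})|\leq w_t(\boldsymbol{x})$ for every $\boldsymbol{x}$ and $b$. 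I would then define $\overline{\boldsymbol{x}}_t\triangleq{\arg\min}_{\boldsymbol{x}\in\mathcal{X}\setminus S'_t}w_t(\boldsymbol{x})$, the unsaturated point of smallest combined width.

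Next I would carry out the regret decomposition exactly as in the known-variance proof, but on $h$. Writing $\min_{b}r^{[b]}_t\leq\tfrac{1}{b_t}\sum_b r^{[b]}_t$ and inserting $\pm h(\overline{\boldsymbol{x}}_t)$, I use $\Delta(\overline{\boldsymbol{x}}_t)\leq w_t(\overline{\boldsymbol{x}}_t)$ (since $\overline{\boldsymbol{x}}_t$ is unsaturated), the width bound above, and the defining optimality $h^{[b]}_t(\boldsymbol{x}^{[b]}_t)\geq h^{[b]}_t(\overline{\boldsymbol{x}}_t)$ to obtain $\min_b r^{[b]}_t\leq\tfrac{1}{b_t}\sum_b\bigl(2w_t(\overline{\boldsymbol{x}}_t)+w_t(\boldsymbol{x}^{[b]}_t)\bigr)$. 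The ingredient that differs from the single-objective case is the lower bound on the probability of selecting an unsaturated point: Lemma~\ref{lemma:prob_unsaturated:mean:var} gives $p^2-1/t^2$ rather than $p-1/t^2$. Using this as in equation~\eqref{eq:app:lower:bound:x:bar}, I get $\mathbb{E}[w_t(\boldsymbol{x}^{[b]}_t)\mid\mathcal{F}_{t-1},\mathcal{F}'_{t-1}]\geq(p^2-1/t^2)\,w_t(\overline{\boldsymbol{x}}_t)$, which lets me trade the $\overline{\boldsymbol{x}}_t$ terms for $\boldsymbol{x}^{[b]}_t$ terms at the cost of the factor $1+2/(p^2-1/t^2)$.

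Taking the conditional expectation and splitting on whether $E^{f_t}(t)\cap E^{g_t}(t)$ holds, I bound the instantaneous regret on the bad event by $2\widetilde{B}=2(\omega B+(1-\omega)B')$, using $\|h\|_\infty\leq\omega\|f\|_{\mathcal{H}_k}+(1-\omega)\|g\|_{\mathcal{H}_{k'}}\leq\widetilde{B}$ for the SE kernel, and its probability by $\mathbb{P}[\overline{E^{f_t}(t)}]+\mathbb{P}[\overline{E^{g_t}(t)}]\leq 1/(2t^2)+1/(2t^2)=1/t^2$, producing the additive $2\widetilde{B}/t^2$. On the good event I expand $w_t(\boldsymbol{x}^{[b]}_t)$ into its $\omega c_t\sigma_{t-1}$ and $(1-\omega)c'_t\sigma'_{t-1}$ parts and apply the two conclusions of Lemma~\ref{lemma:mean:var:C} separately, turning $\tfrac1{b_t}\sum_b\sigma_{t-1}(\boldsymbol{x}^{[b]}_t)$ and $\tfrac1{b_t}\sum_b\sigma'_{t-1}(\boldsymbol{x}^{[b]}_t)$ into $\tfrac{1}{b_t}e^C\sqrt{C_2 b_t\,\mathbb{I}(f;\boldsymbol{y}_t\mid\boldsymbol{y}_{1:t-1})}$ and the analogous expression in $g$; this yields the stated two-term bound, with $1+2/(p^2-1/t^2)$ absorbed into $1+28/p^2$ (valid for $t$ large enough, exactly as $2/(p-1/t^2)\le 10/p$ was used before).

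The main obstacle is the $p^2$ factor: because $\boldsymbol{x}^{[b]}_t$ maximizes the \emph{sum} $\omega f^{[b]}_t+(1-\omega)g^{[b]}_t$, guaranteeing it is unsaturated forces the sampled $h^{[b]}_t$ to exceed $h(\boldsymbol{x}^*)$, which I ensure by demanding that \emph{both} independent samples $f^{[b]}_t$ and $g^{[b]}_t$ over-shoot their means at $\boldsymbol{x}^*$ simultaneously — an event of probability at least $p^2$ by independence of the two GPs (Lemma~\ref{lemma:prob_unsaturated:mean:var}). Keeping the two confidence channels cleanly separated, so that the single constant $C$ and the per-objective information gains $\mathbb{I}(f;\cdot)$ and $\mathbb{I}(g;\cdot)$ are each handled by a half of Lemma~\ref{lemma:mean:var:C}, is the other delicate point, together with the bookkeeping of the four simultaneous high-probability events and their error budget.
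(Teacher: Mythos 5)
Your proposal is correct and follows essentially the same route as the paper's proof: the same minimizer $\overline{\boldsymbol{x}}_t$ of the combined width $\omega c_t\sigma_{t-1}+(1-\omega)c'_t\sigma'_{t-1}$ over unsaturated points, the same regret decomposition via $\pm h(\overline{\boldsymbol{x}}_t)$ and the optimality of $\boldsymbol{x}^{[b]}_t$ for $h^{[b]}_t$, the same use of Lemma~\ref{lemma:prob_unsaturated:mean:var} (with the $p^2-1/t^2$ probability from independence of the two GP samples) to trade $\overline{\boldsymbol{x}}_t$ terms for expected query terms, the same $2\widetilde{B}/t^2$ charge on the bad event $\overline{E^{f_t}(t)}\cup\overline{E^{g_t}(t)}$, and the same channel-wise application of Lemma~\ref{lemma:mean:var:C}. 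The only cosmetic difference is that you make explicit the independence argument behind the $p^2$ factor and the bound $\|h\|_\infty\leq\widetilde{B}$, both of which the paper leaves implicit.
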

\begin{proof}
To begin with, we define $\overline{\boldsymbol{x}}_t$ as the unsaturated input at iteration $t$ with the smallest weighted posterior standard deviation:
\begin{equation}
\overline{\boldsymbol{x}}_t = {\arg\min}_{\boldsymbol{x}\in\mathcal{X}\setminus S'_t}\left(\omega c_t\sigma_{t-1}(\boldsymbol{x}) + (1-\omega)c'_t\sigma'_{t-1}(\boldsymbol{x})\right).
\end{equation}
Following this definition, if both $E^{f}(t)$ and $E^{g}(t)$ hold $\forall b\in[b_t]$, then we have that 
\begin{equation}
\begin{split}
\mathbb{E}\big[&\omega c_t\sigma_{t-1}(\boldsymbol{x}^{[b]}_t) + (1-\omega)c'_t\sigma'_{t-1}(\boldsymbol{x}^{[b]}_t) | \mathcal{F}_{t-1},\mathcal{F}'_{t-1}\big] \\
&\geq \mathbb{E}\left[\omega c_t\sigma_{t-1}(\boldsymbol{x}^{[b]}_t) + (1-\omega)c'_t\sigma'_{t-1}(\boldsymbol{x}^{[b]}_t) | \mathcal{F}_{t-1},\mathcal{F}'_{t-1}, \boldsymbol{x}^{[b]}_t\in\mathcal{X} \setminus S'_t\right] \mathbb{P}\left(\boldsymbol{x}^{[b]}_t \in \mathcal{X} \setminus S'_t | \mathcal{F}_{t-1}, \mathcal{F}'_{t-1}\right)\\
&\geq \left[ \omega c_t\sigma_{t-1}(\overline{\boldsymbol{x}}_t) + (1-\omega)c'_t\sigma'_{t-1}(\overline{\boldsymbol{x}}_t) \right] (p^2-1/t^2)
\end{split}
\label{eq:lower:bound:x:bar:mean:var}
\end{equation}

Now we condition on all events $E^{f}(t)$, $E^{f_t}(t)$, $E^{g}(t)$ and $E^{g_t}(t)$, and analyze the instantaneous regret as:
\begin{equation}
\begin{split}
\min_{b\in[b_t]}r^{[b]}_t&\leq\frac{1}{b_t}\sum^{b_t}_{b=1} r^{[b]}_t=\frac{1}{b_t}\sum^{b_t}_{b=1} \Delta(\boldsymbol{x}^{[b]}_t)=\frac{1}{b_t}\sum^{b_t}_{b=1}\left[h(\boldsymbol{x}^*) - h(\overline{\boldsymbol{x}}_t) + h(\overline{\boldsymbol{x}}_t) - h(\boldsymbol{x}^{[b]}_t)\right]\\
&\leq \frac{1}{b_t}\sum^{b_t}_{b=1}\Big[\Delta(\overline{\boldsymbol{x}}_t) + h^{[b]}_t(\overline{\boldsymbol{x}}_t) + \omega c_t\sigma_{t-1}(\overline{\boldsymbol{x}}_t) + (1-\omega) c'_t\sigma'_{t-1}(\overline{\boldsymbol{x}}_t) \\
&\qquad - h^{[b]}_t(\boldsymbol{x}^{[b]}_t) + \omega c_t\sigma_{t-1}(\boldsymbol{x}^{[b]}_t) +  (1-\omega) c'_t\sigma'_{t-1}(\boldsymbol{x}^{[b]}_t)\Big]\\
&\leq \frac{1}{b_t}\sum^{b_t}_{b=1}\Big[\omega c_t \sigma_{t-1}(\overline{\boldsymbol{x}}_t) + (1-\omega) c'_t \sigma'_{t-1}(\overline{\boldsymbol{x}}_t) \\
&\qquad + h^{[b]}_t(\overline{\boldsymbol{x}}_t) + \omega c_t\sigma_{t-1}(\overline{\boldsymbol{x}}_t) + (1-\omega) c'_t\sigma'_{t-1}(\overline{\boldsymbol{x}}_t) \\
&\qquad - h^{[b]}_t(\boldsymbol{x}^{[b]}_t) + \omega c_t\sigma_{t-1}(\boldsymbol{x}^{[b]}_t) +  (1-\omega) c'_t\sigma'_{t-1}(\boldsymbol{x}^{[b]}_t)\Big]\\
&\leq\frac{1}{b_t}\sum^{b_t}_{b=1}\Big[\omega c_t \left(2\sigma_{t-1}(\overline{\boldsymbol{x}}_t)+\sigma_{t-1}(\boldsymbol{x}^{[b]}_t)\right) + (1-\omega)c'_t\left(2\sigma'_{t-1}(\overline{\boldsymbol{x}}_t)+\sigma'_{t-1}(\boldsymbol{x}^{[b]}_t)\right)\\
&\qquad + h^{[b]}_t(\overline{\boldsymbol{x}}_t) - h^{[b]}_t(\boldsymbol{x}^{[b]}_t)
\Big]
\\
&\leq \frac{1}{b_t}\sum^{b_t}_{b=1}\Big[\omega c_t \left(2\sigma_{t-1}(\overline{\boldsymbol{x}}_t)+\sigma_{t-1}(\boldsymbol{x}^{[b]}_t)\right) + (1-\omega)c'_t\left(2\sigma'_{t-1}(\overline{\boldsymbol{x}}_t)+\sigma'_{t-1}(\boldsymbol{x}^{[b]}_t)\right)
\Big]\\
&= \frac{1}{b_t}\sum^{b_t}_{b=1}\Big[2\left( \omega c_t \sigma_{t-1}(\overline{\boldsymbol{x}}_t) + (1-\omega) c'_t \sigma'_{t-1}(\overline{\boldsymbol{x}}_t) \right) + \omega c_t\sigma_{t-1}(\boldsymbol{x}^{[b]}_t) + (1-\omega) c'_t\sigma'_{t-1}(\boldsymbol{x}^{[b]}_t)
\Big],
\end{split}
\end{equation}
in which the second inequality follows from Lemmas~\ref{lemma:uniform_bound},~\ref{lemma:uniform_bound_t},~\ref{lemma:uniform_bound:sigma} and~\ref{lemma:uniform_bound_t:sigma}, the third inequality results from Definition~\ref{def:saturated_set:mean:var} and the fact that $\overline{\boldsymbol{x}}_t$ is unsaturated, and the last inequality follows from the way in which $\boldsymbol{x}^{[b]}_t$ is selected: $\boldsymbol{x}^{[b]}_t={\arg\max}_{\boldsymbol{x}\in\mathcal{X}}h^{[b]}_t(\boldsymbol{x})$.

\begin{equation}
\begin{split}
\mathbb{E}\Big[&\min_{b\in[b_t]} r^{[b]}_t \Big| \mathcal{F}_{t-1},\mathcal{F}'_{t-1}\Big]
\leq \mathbb{E}\Bigg[\frac{1}{b_t}\sum^{b_t}_{b=1} \Big[2\left( \omega c_t \sigma_{t-1}(\overline{\boldsymbol{x}}_t) + (1-\omega) c'_t \sigma'_{t-1}(\overline{\boldsymbol{x}}_t) \right) \\
&\quad + \omega c_t\sigma_{t-1}(\boldsymbol{x}^{[b]}_t) + (1-\omega) c'_t\sigma'_{t-1}(\boldsymbol{x}^{[b]}_t)\Big] \Big| \mathcal{F}_{t-1},\mathcal{F}'_{t-1}\Bigg] + 2\widetilde{B} \mathbb{P}\left[\overline{E^{f_t}(t)} \cup \overline{E^{g_t}(t)} | \mathcal{F}_{t-1},\mathcal{F}'_{t-1} \right]\\
&\leq \mathbb{E}\Bigg[\frac{1}{b_t}\sum^{b_t}_{b=1} \Big[\frac{2}{p^2-1/t^2}\left( \omega c_t \sigma_{t-1}(\boldsymbol{x}^{[b]}_t) + (1-\omega) c'_t \sigma'_{t-1}(\boldsymbol{x}^{[b]}_t) \right) \\
&\quad + \omega c_t\sigma_{t-1}(\boldsymbol{x}^{[b]}_t) + (1-\omega) c'_t\sigma'_{t-1}(\boldsymbol{x}^{[b]}_t)\Big] \Big| \mathcal{F}_{t-1},\mathcal{F}'_{t-1}\Bigg] + 2\widetilde{B} \frac{1}{t^2}\\
&\leq \omega c_t \left(1+\frac{28}{p^2}\right) \mathbb{E}\Bigg[\frac{1}{b_t}\sum^{b_t}_{b=1} \sigma_{t-1}(\boldsymbol{x}^{[b]}_t) | \mathcal{F}_{t-1},\mathcal{F}'_{t-1}\Bigg]\\
&\quad + (1-\omega)c'_t\left(1+\frac{28}{p^2}\right)\mathbb{E}\Bigg[\frac{1}{b_t}\sum^{b_t}_{b=1} \sigma'_{t-1}(\boldsymbol{x}^{[b]}_t) | \mathcal{F}_{t-1},\mathcal{F}'_{t-1}\Bigg] + \frac{2\widetilde{B}}{t^2}\\
&\leq \omega c_te^C \left(1+\frac{28}{p^2}\right) \mathbb{E}\Bigg[\frac{1}{b_t}\sqrt{C_2 b_t \mathbb{I}\left( f;\boldsymbol{y}_t | \boldsymbol{y}_{1:t-1} \right)} | \mathcal{F}_{t-1},\mathcal{F}'_{t-1}\Bigg]\\
&\quad + (1-\omega)c'_te^{C}\left(1+\frac{28}{p^2}\right)\mathbb{E}\Bigg[\frac{1}{b_t}\sqrt{C_2 b_t \mathbb{I}\left(g;\boldsymbol{y}'_t | \boldsymbol{y}'_{1:t-1} \right)} | \mathcal{F}_{t-1},\mathcal{F}'_{t-1}\Bigg] + \frac{2\widetilde{B}}{t^2}\\
&= e^C \left(1+\frac{28}{p^2}\right) \Bigg[ \omega c_t \mathbb{E}\Big[\frac{1}{b_t}\sqrt{C_2 b_t \mathbb{I}\left( f;\boldsymbol{y}_t | \boldsymbol{y}_{1:t-1} \right)} | \mathcal{F}_{t-1},\mathcal{F}'_{t-1}\Big]\\
&\quad + (1-\omega)c'_t\mathbb{E}\Big[\frac{1}{b_t}\sqrt{C_2 b_t \mathbb{I}\left(g;\boldsymbol{y}'_t | \boldsymbol{y}'_{1:t-1} \right)} | \mathcal{F}_{t-1},\mathcal{F}'_{t-1}\Big] \Bigg] + \frac{2\widetilde{B}}{t^2}.
\end{split}
\label{eq:expected_inst_regret:mean:var}
\end{equation}
The second inequality follows from equation~\eqref{eq:lower:bound:x:bar:mean:var}, the third inequality follows since $1/(p^2 - 1/t^2) \leq 14/p^2$, and the last inequality makes use of Lemma~\ref{lemma:mean:var:C}.

\end{proof}

\begin{definition}
Define $Y_0=0$, and for all $t=1,\ldots,T$,
\[
\overline{r}_t=\mathbb{I}\{E^{f}(t) \cap E^{g}(t)\} \min_{b\in[b_t]}r^{[b]}_t,
\]
\begin{equation*}
\begin{split}
X_t = \overline{r}_t &- e^C \left(1+\frac{28}{p^2}\right) \Bigg[ \omega c_t \frac{1}{b_t}\sqrt{C_2 b_t \mathbb{I}\left( f;\boldsymbol{y}_t | \boldsymbol{y}_{1:t-1} \right)} + (1-\omega)c'_t\frac{1}{b_t}\sqrt{C_2 b_t \mathbb{I}\left(g;\boldsymbol{y}'_t | \boldsymbol{y}'_{1:t-1} \right)} \Bigg] - \frac{2\widetilde{B}}{t^2}
\end{split}
\end{equation*}
\[
Y_t=\sum^t_{s=1}X_s.
\]
\end{definition}
Following the proof of Lemma~\ref{lemma:proof:sup:martingale}, we can easily show that $(Y_t:t=0,\ldots,T)$ is a super-martingale.
Now we are finally ready to prove an upper bound on the batch cumulative regret of our Mean-Var-BTS-RED:
\begin{lemma}
\label{lemma:final:regret:upper:bound:mean:var}
Define $C_0\triangleq\frac{1}{\mathbb{B} / \lceil\frac{\sigma^2_{\max}}{R^2}\rceil - 1}$.
Given $\delta \in (0, 1)$, then with probability of at least $1 - \delta$,
\begin{equation*}
\begin{split}
R_T^{\text{MV}}&\leq e^C \left(1+\frac{28}{p^2}\right) \sqrt{C_2C_0} \sqrt{T} \Bigg[
\omega c_T  \sqrt{\Gamma_{T\mathbb{B}}} + (1-\omega) c_T' \sqrt{\Gamma'_{T\mathbb{B}}} 
\Bigg] \\
&\quad + \frac{\widetilde{B}\pi^2}{3} + \left( 6\widetilde{B} + e^C\sqrt{C} \left(1+\frac{28}{p^2}\right) \sqrt{C_2C_0} \Bigg[
\omega c_T + (1-\omega)c_T'
\Bigg] \right) \sqrt{2\log (4/\delta) T},
\end{split}
\end{equation*}
in which 
$\Gamma_{T\mathbb{B}}$ is the maximum information gain about $f$ obtained from any set of $T\mathbb{B}$ observations, and $\Gamma'_{T\mathbb{B}}$ is the maximum information gain about $g$ obtained from any set of $T\mathbb{B}$ observations.
\end{lemma}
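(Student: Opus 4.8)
The plan is to reuse, in the mean-variance setting, the super-martingale argument already developed for the known-variance case in Lemma~\ref{lemma:app:upper:bound:cum:regret}, now feeding into it the two-GP per-iteration regret bound of Lemma~\ref{lemma:upper_bound_expected_regret:mean:var}. Since $(Y_t)$ has already been established as a super-martingale (via the counterpart of Lemma~\ref{lemma:proof:sup:martingale}), the remaining work is an Azuma--Hoeffding deviation argument together with routine information-gain bookkeeping carried out \emph{separately} for $f$ and for $g$.

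First I would bound the martingale increments $|Y_t-Y_{t-1}|=|X_t|$. Using $\norm{f}_{\mathcal{H}_k}\le B$ and $\norm{g}_{\mathcal{H}_{k'}}\le B'$ (so that $|h(\boldsymbol{x})|\le\widetilde{B}$ and hence $|\overline{r}_t|\le 2\widetilde{B}$), the deterministic caps $\mathbb{I}(f;\boldsymbol{y}_t|\boldsymbol{y}_{1:t-1})\le C$ and $\mathbb{I}(g;\boldsymbol{y}'_t|\boldsymbol{y}'_{1:t-1})\le C$, and the lower bound $b_t\ge \mathbb{B}/\lceil\sigma^2_{\max}/R^2\rceil-1=1/C_0$ (which holds because $n_t\le\lceil\sigma^2_{\max}/R^2\rceil$), each increment is bounded by a quantity of order $\widetilde{B}+e^C(1+28/p^2)\sqrt{C\,C_2\,C_0}\,(\omega c_T+(1-\omega)c'_T)$, where I replace $c_t,c'_t$ by $c_T,c'_T$ using their monotonicity in $t$.

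Next I would sum the drift terms across iterations. The two conditional-information-gain sums are controlled exactly as in equation~\eqref{eq:app:conditional:infogain:with:max:infogain}: by Cauchy--Schwarz, the chain rule for information gain, and $\tau_T\le T\mathbb{B}$, I obtain $\sum_t\sqrt{\mathbb{I}(f;\boldsymbol{y}_t|\boldsymbol{y}_{1:t-1})}\le\sqrt{T\Gamma_{T\mathbb{B}}}$ and the analogous $\sqrt{T\Gamma'_{T\mathbb{B}}}$ for $g$ (note $\Gamma_{T\mathbb{B}}$ and $\Gamma'_{T\mathbb{B}}$ differ because $k\ne k'$). The tail $\sum_t 2\widetilde{B}/t^2\le\widetilde{B}\pi^2/3$ supplies the additive constant. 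Applying Azuma--Hoeffding to $(Y_t)$, with the deviation term scaling as $\sqrt{2\log(4/\delta)\,T}$ times the increment bound and after substituting $1/b_t\le C_0$, yields precisely the two-term (mean and variance) structure claimed, each term carrying its own $c_T\sqrt{\Gamma_{T\mathbb{B}}}$ or $c'_T\sqrt{\Gamma'_{T\mathbb{B}}}$.

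Finally I would pass from $\overline{r}_t$ back to the true regret $r_t$: these coincide on the event $E^f(t)\cap E^g(t)$, which by Lemmas~\ref{lemma:uniform_bound} and~\ref{lemma:uniform_bound:sigma} each fails with probability at most $\delta/3$, so a union bound over the confidence events and the Azuma--Hoeffding event delivers the overall probability $1-\delta$. I expect the genuinely delicate point to be the bookkeeping forced by the \emph{random} batch size $b_t$: this is exactly why the drift must be handled through conditional expectations and a super-martingale rather than a deterministic sum, and why the deterministic lower bound $b_t\ge 1/C_0$ is indispensable for taming the $1/b_t$ and $1/\sqrt{b_t}$ factors. The two-GP coupling (the $p^2$ selection probability and the combined confidence width $\omega c_t\sigma_{t-1}+(1-\omega)c'_t\sigma'_{t-1}$) is already absorbed into Lemmas~\ref{lemma:prob_unsaturated:mean:var} and~\ref{lemma:upper_bound_expected_regret:mean:var}, so it introduces no further obstacle at this final stage.
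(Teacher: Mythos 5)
Your proposal is correct and follows essentially the same route as the paper's own proof: the same super-martingale with increments bounded via $|\overline{r}_t|\le 2\widetilde{B}$, the caps $\mathbb{I}(f;\boldsymbol{y}_t|\boldsymbol{y}_{1:t-1})\le C$, $\mathbb{I}(g;\boldsymbol{y}'_t|\boldsymbol{y}'_{1:t-1})\le C$ and $b_t\ge 1/C_0$, the same Cauchy--Schwarz/chain-rule treatment of the two information-gain sums yielding $\sqrt{T\Gamma_{T\mathbb{B}}}$ and $\sqrt{T\Gamma'_{T\mathbb{B}}}$, Azuma--Hoeffding for the deviation term, and the final union bound over $E^f(t)$, $E^g(t)$ and the concentration event with error probability $\delta/3$ each. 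No gaps; the only (immaterial) difference is that your constants match the lemma statement's slightly looser $6\widetilde{B}$ and $\log(4/\delta)$, whereas the paper's derivation actually produces $4\widetilde{B}$ and $\log(3/\delta)$.
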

\begin{proof}
To begin with, let's derive a lower bound on $b_t$. Firstly, note that $n_t\leq \lceil\frac{\sigma^2_{\max}}{R^2}\rceil$. This implies that $b_t \geq \mathbb{B} / \lceil\frac{\sigma^2_{\max}}{R^2}\rceil - 1$.
\begin{equation}
\begin{split}
|Y_t - Y_{t-1}| &= |X_t| \\
&=|\overline{r}_t| + e^C \left(1+\frac{28}{p^2}\right) \Bigg[ \omega c_t \frac{1}{b_t}\sqrt{C_2 b_t \mathbb{I}\left( f;\boldsymbol{y}_t | \boldsymbol{y}_{1:t-1} \right)}  \\
&\quad + (1-\omega)c'_t\frac{1}{b_t}\sqrt{C_2 b_t \mathbb{I}\left(g;\boldsymbol{y}'_t | \boldsymbol{y}'_{1:t-1} \right)} \Bigg] + \frac{2\widetilde{B}}{t^2}\\
&\leq 2\widetilde{B} + e^C \left(1+\frac{28}{p^2}\right) \Bigg[
\omega c_t \sqrt{\frac{C_2 C}{b_t}} + (1-\omega)c_t'\sqrt{\frac{C_2 C}{b_t}}
\Bigg] + 2\widetilde{B}\\
&= 4\widetilde{B} + e^C \left(1+\frac{28}{p^2}\right) \Bigg[
\omega c_t \sqrt{\frac{C_2 C}{b_t}} + (1-\omega)c_t'\sqrt{\frac{C_2 C}{b_t}}
\Bigg].
\end{split}
\end{equation}

\begin{equation}
\begin{split}
\sum^T_{t=1} \sqrt{ \mathbb{I}\left( f;\boldsymbol{y}_t | \boldsymbol{y}_{1:t-1} \right)} \leq \sqrt{T \sum^T_{t=1}\mathbb{I}\left( f;\boldsymbol{y}_t | \boldsymbol{y}_{1:t-1} \right)}=\sqrt{T \mathbb{I}\left( f;\boldsymbol{y}_{1:T} \right)}\leq \sqrt{T\Gamma_{T\mathbb{B}}}.
\end{split}
\label{eq:mean:var:conditional:info:gain:max:info:gain}
\end{equation}
Applying the Azuma-Hoeffding's inequality using an error probability of $\delta/3$ leads to
\begin{equation}
\begin{split}
\sum^T_{t=1}&\overline{r}_t \leq e^C \left(1+\frac{28}{p^2}\right) \Bigg[ \omega \sum^T_{t=1} c_t \sqrt{\frac{1}{b_t}C_2 \mathbb{I}\left( f;\boldsymbol{y}_t | \boldsymbol{y}_{1:t-1} \right)} + (1-\omega) \sum^T_{t=1}c'_t\sqrt{\frac{1}{b_t}C_2 \mathbb{I}\left(g;\boldsymbol{y}'_t | \boldsymbol{y}'_{1:t-1} \right)} \Bigg]\\
&\quad + \sum^T_{t=1} \frac{2\widetilde{B}}{t^2} + \sqrt{2 \log(\frac{3}{\delta}) \sum^T_{t=1} \left( 4\widetilde{B} + e^C \left(1+\frac{28}{p^2}\right) \Bigg[
\omega c_t \sqrt{\frac{C_2 C}{b_t}} + (1-\omega)c_t'\sqrt{\frac{C_2 C}{b_t}}
\Bigg] \right)^2  }\\
&\leq e^C \left(1+\frac{28}{p^2}\right) \sqrt{\frac{C_2}{\mathbb{B}/\lceil \frac{\sigma^2_{\max}}{R^2} \rceil - 1}} \sqrt{T} \Bigg[
\omega c_T  \sqrt{\Gamma_{T\mathbb{B}}} + (1-\omega) c_T' \sqrt{\Gamma'_{T,\mathbb{B}}} 
\Bigg] \\
&\quad + \frac{\widetilde{B}\pi^2}{3} + \left( 4\widetilde{B} + e^C\sqrt{C} \left(1+\frac{28}{p^2}\right) \sqrt{\frac{C_2}{\mathbb{B}/\lceil \frac{\sigma^2_{\max}}{R^2} \rceil - 1}} \Bigg[
\omega c_T + (1-\omega)c_T'
\Bigg] \right) \sqrt{2\log (3/\delta) T}\\
&= e^C \left(1+\frac{28}{p^2}\right) \sqrt{C_2C_0} \sqrt{T} \Bigg[
\omega c_T  \sqrt{\Gamma_{T\mathbb{B}}} + (1-\omega) c_T' \sqrt{\Gamma'_{T\mathbb{B}}} 
\Bigg] \\
&\quad + \frac{\widetilde{B}\pi^2}{3} + \left( 4\widetilde{B} + e^C\sqrt{C} \left(1+\frac{28}{p^2}\right) \sqrt{C_2C_0} \Bigg[
\omega c_T + (1-\omega)c_T'
\Bigg] \right) \sqrt{2\log (3/\delta) T}.
\end{split}
\end{equation}
The second inequality makes use of equation~\eqref{eq:mean:var:conditional:info:gain:max:info:gain} and the lower bound on $b_t$: $b_t \geq \mathbb{B} / \lceil\frac{\sigma^2_{\max}}{R^2}\rceil - 1$.
Now, note that $\overline{r}_t=r_t,\forall t\geq1$ with probability of $\geq1-\delta/3-\delta/3$ because both events $E^{f}(t)$ and $E^{g}(t)$ hold with probability of $\geq1-\delta/3$ respectively. As a result, taking into account the error probability of $\delta/3$ from the Azuma-Hoeffding's inequality, the regret upper bound holds with probability of $\geq1-\delta/3-\delta/3-\delta/3=1-\delta$.
\end{proof}
Finally, we can simplify the regret upper bound from Lemma~\ref{lemma:final:regret:upper:bound:mean:var} into asymptotic notation:
\begin{equation}
\begin{split}
R_T^{\text{MV}} = \widetilde{\mathcal{O}}\left(e^C \frac{1}{\sqrt{\mathbb{B}/\lceil \frac{\sigma^2_{\max}}{R^2} \rceil - 1}} \sqrt{T}\left[ \omega R \sqrt{\Gamma_{T\mathbb{B}}}(\sqrt{\Gamma_{T\mathbb{B}}} + \sqrt{C}) + (1-\omega)R'\sqrt{\Gamma'_{T\mathbb{B}}}(\sqrt{\Gamma'_{T\mathbb{B}}} + \sqrt{C}) \right]
\right)
\end{split}
\end{equation}

\section{More Experimental Details}
\label{app:sec:exp}
Since the theoretical value of $\beta_t$ and $\beta'_t$ is usually too conservative~\cite{srinivas2009gaussian,bogunovic2018adversarially}, we follow the common practice and set them to a constant: $\beta_t=\beta'_t=1$.
In every experiment, we use the same set of initial inputs selected via random search for all methods to ensure a fair comparison.
For all methods based on TS (including all our methods), i.e., all methods which require sampling functions from the GP posterior (e.g., line 5 of Algo.~\ref{algo:known:variance}, line 5 of Algo.~\ref{algo:unknown:variance} and line 5 of Algo.~\ref{algo:mean:var}), we follow the common practice and sample the functions from the GP posterior through random Fourier features approximation~\cite{rahimi2007random,wang2017max,mutny2019efficient}.
Again following the common practice in BO, we optimize the GP hyperparameters by maximizing the marginal likelihood after every $10$ iterations.
Our experiments are run on a computer server with 128 CPUs, with the AMD EPYC 7543 32-Core Processor. The server has 8 NVIDIA GeForce RTX 3080 GPUs.

\subsection{Synthetic Experiments}
\label{app:sec:exp:synth}
In the synthetic experiments, for both mean and mean-variance optimization, we firstly use a sampled function from a GP with the SE kernel (lengthscale$=0.04$) as the objective function $f$ (normalized into the range $[0,1]$), and then sample another function from a GP with the SE kernel (lengthscale$=0.15$) as the noise variance function $\sigma^2$ (normalized into the range $[0.0001,0.2]$). 

Fig.~\ref{fig:synth:upper:bound:on:pred:var} shows an example of the upper bound $U^{\sigma^2}_t(\boldsymbol{x})=-\mu'_{t-1}(\boldsymbol{x}) + \beta'_t \sigma'_{t-1}(\boldsymbol{x})$ constructed by our BTS-RED-Unknown (Sec.~\ref{subsec:bts-red:unknown:noise:var}), which is an effective approximation of the groundtruth $\sigma^2(\cdot)$. 
\begin{figure}
	\centering
	\begin{tabular}{c}
		\includegraphics[width=0.4\linewidth]{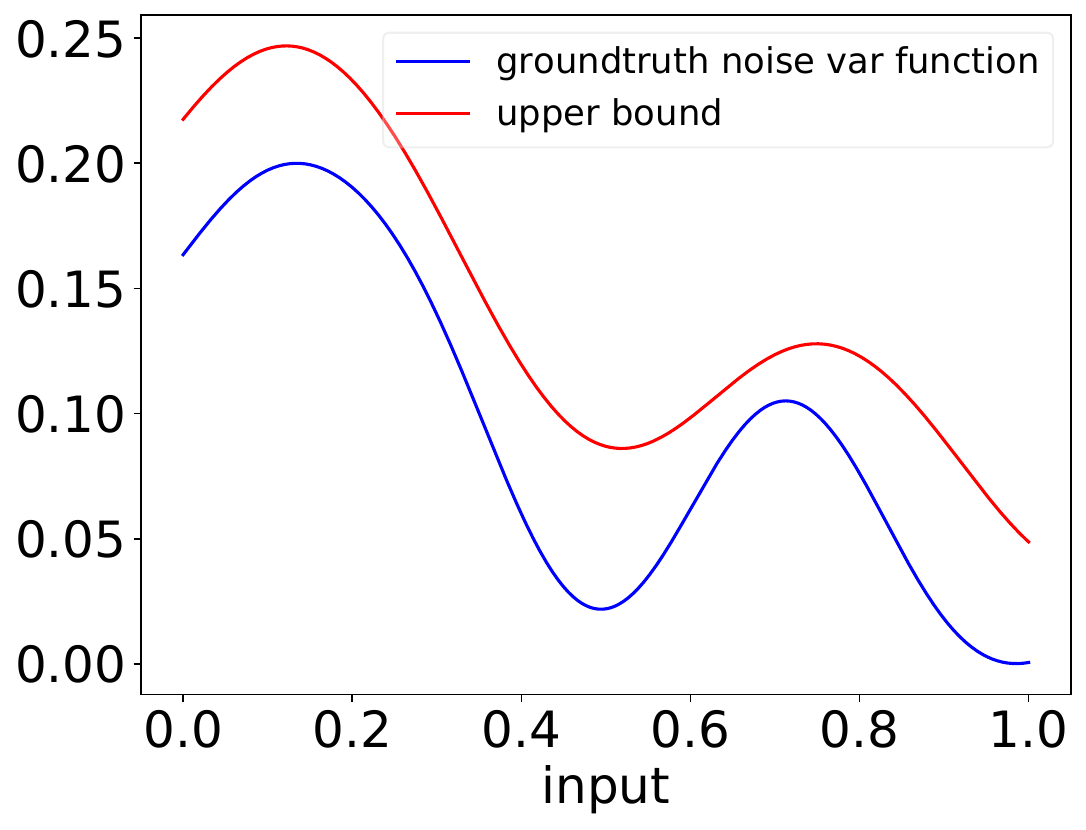}
	\end{tabular}
	\caption{	Groundtruth noise variance function and an estimated upper bound (Sec.~\ref{subsec:bts-red:unknown:noise:var}).}
	\label{fig:synth:upper:bound:on:pred:var}
\end{figure}

\begin{figure}
	\centering
	\begin{tabular}{c}
		\hspace{-4mm}	\includegraphics[width=0.4\linewidth]{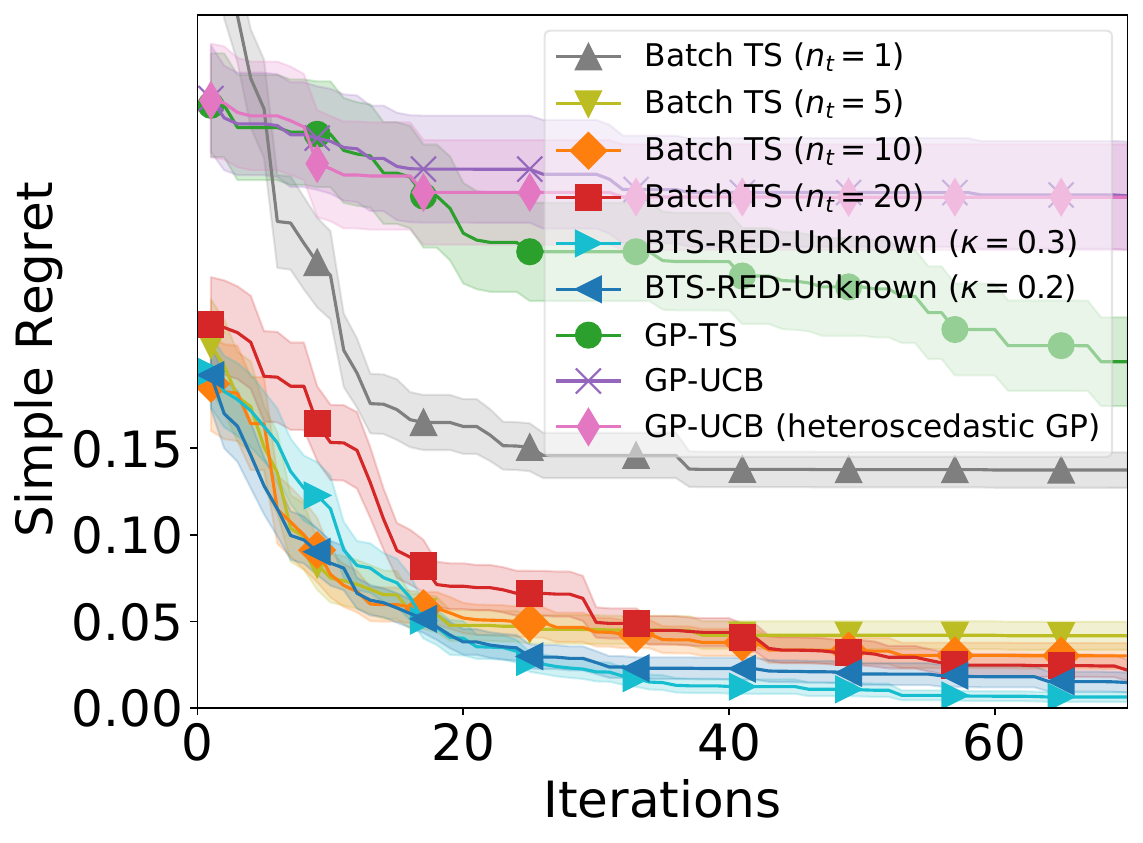}
	\end{tabular}
	\caption{
	Results for the mean optimization problem for the synthetic experiment (Sec.~\ref{sec:synth:exp}) after adding comparisons with some sequential BO algorithms.
	Sequential BO algorithms are unable to perform competitively with other algorithms which leverage batch evaluations.}
	\label{app:fig:synth:add:sequential}
\end{figure}

\begin{figure}
	\centering
	\begin{tabular}{cc}
		\hspace{-4mm} \includegraphics[width=0.33\linewidth]{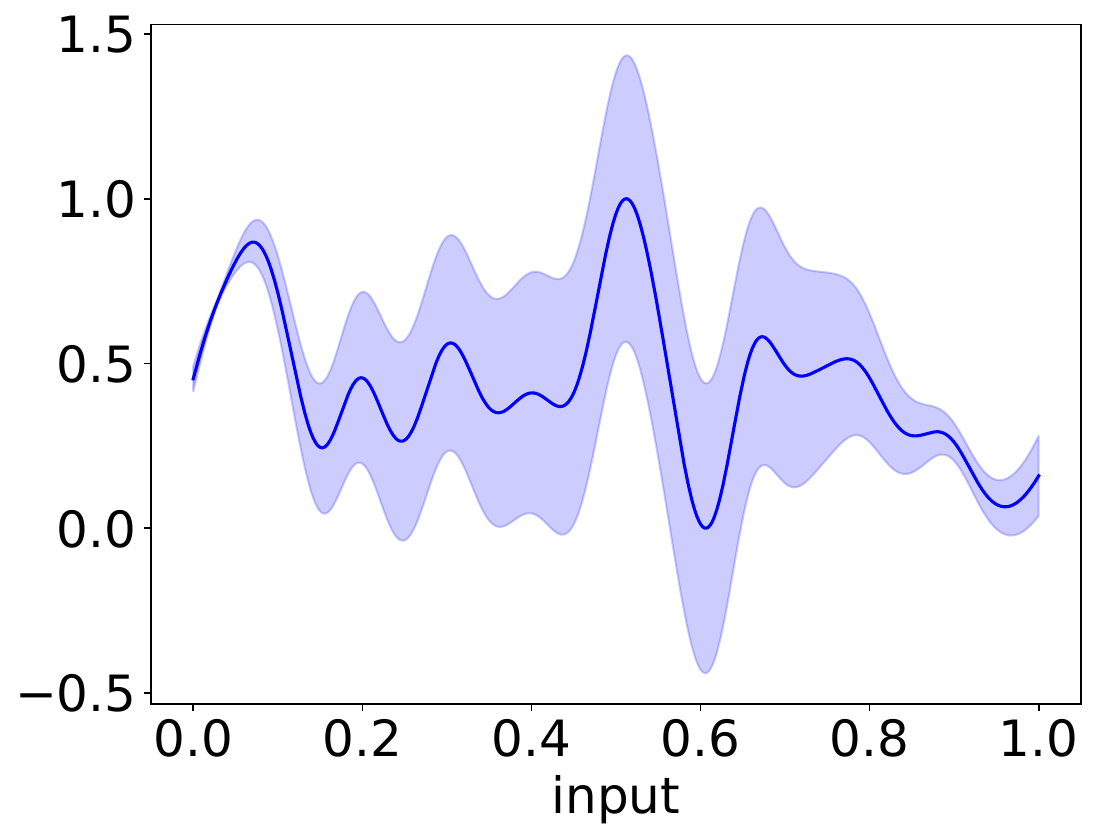}& \hspace{-4mm}
		\includegraphics[width=0.33\linewidth]{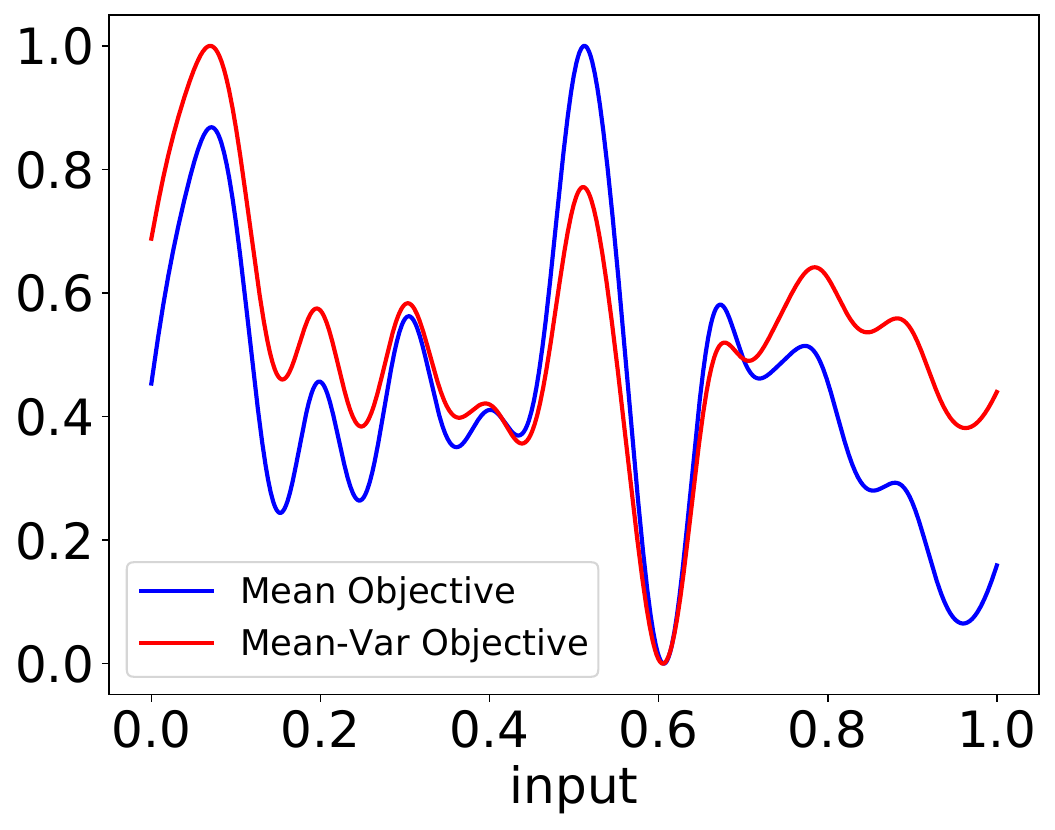}
		\\
		{(a)} & {(b)}
	\end{tabular}
	\caption{(a) The synthetic function used in the experiments for mean-variance optimization in Sec.~\ref{sec:synth:exp}.
	(b) The mean and mean-variance objective functions ($\omega=0.3$).
	}
	\label{fig:synth:synth:func:mean_var}
\end{figure}

\begin{figure}
	\centering
	\begin{tabular}{cc}
		\hspace{-4mm} \includegraphics[width=0.33\linewidth]{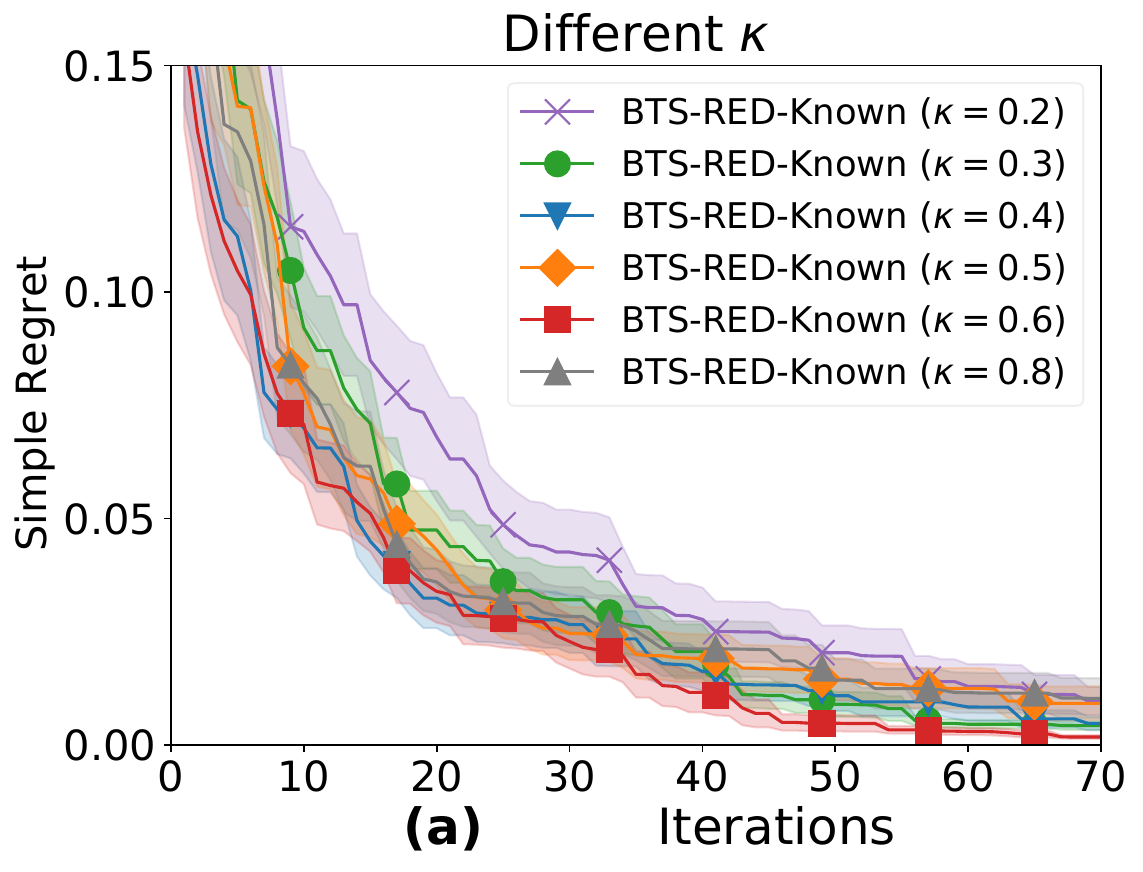}& \hspace{-4mm}
		\includegraphics[width=0.33\linewidth]{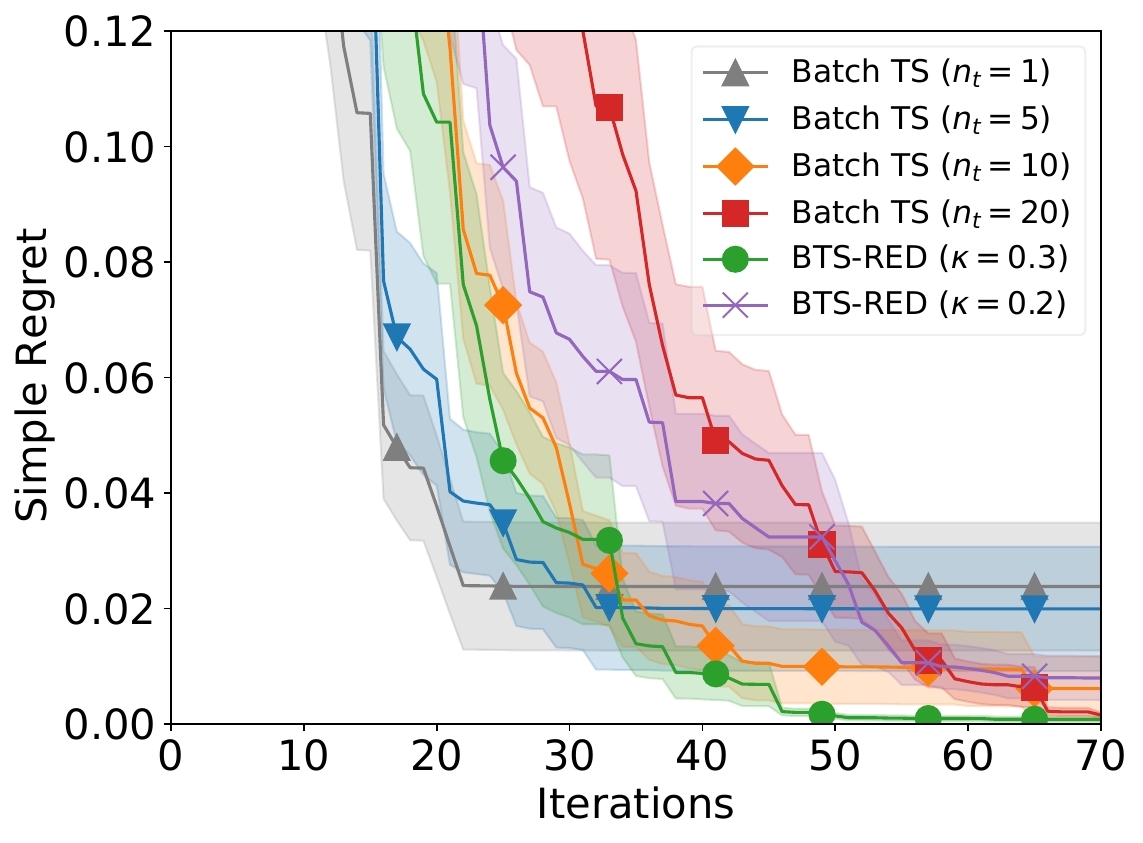}
		\\
		{(a)} & {(b)}
	\end{tabular}
	\caption{
	(a) Results using more values of $\kappa$ for BTS-RED-Known in the synthetic experiment.
	(b) Results for the synthetic experiment using the LCB as the report metric.
	}
	\label{fig:synth:diff:kapp:and:ucb:as:metric}
\end{figure}

Here we also use the synthetic experiments to explore the impact of the uncertainty sampling (US) initialization, which is required by our theoretical results (Sec.~\ref{subsec:known:var:theoretical:analysis}), affects the empirical performance of our algorithms.
The results in Fig.~\ref{fig:compare:US} show that using US and random search as the initialization method lead to similar performances. This provides an empirical justification for our choice of using the simpler random search as the initialization method in our main experiments.
\begin{figure}
	\centering
	\begin{tabular}{c}
		\includegraphics[width=0.33\linewidth]{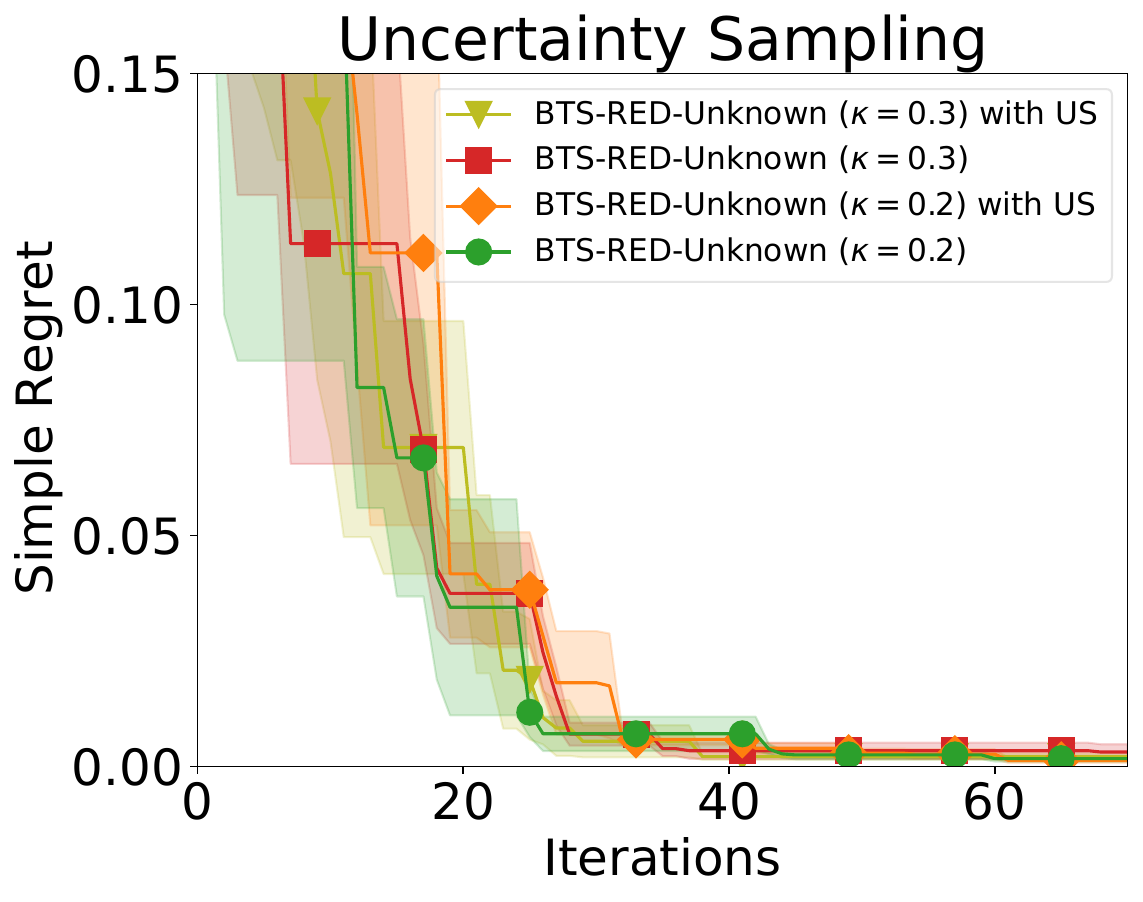} \\
	\end{tabular}
	\caption{
         Comparisons of results using uncertainty sampling (US) and random search as the initialization method.
	}
	\label{fig:compare:US}
\end{figure}

\subsection{Real-world Experiments on Plant Growth}
\label{app:sec:exp:farm}
As we have mentioned in the main text (Sec.~\ref{subsec:exp:farm}), we perform real-world experiments using the input conditions from a regular a 2-D grid within the 2-D domain. Specifically, the 2-D grid is constructed using the pH values of $\{2.5, 3.0, 3.5, 4.0, 4.5, 5.5, 6.5\}$ and NH3 concentrations of $\{0, 1, 5, 10, 15, 20, 25, 30\}$ ($\times1000$ uM). In other words, the size of the grid is $7\times 8=56$. Every tested input condition is replicated $6$ times.
For every tested input condition, the leaf area and tipburn area after harvest are observed, both measured in the unit of $\text{mm}^2$.
The resulting observations are used to learn a heteroscedastic GP model from GPglow (\url{https://gpflow.readthedocs.io/en/develop/notebooks/advanced/heteroskedastic.html}), which is then used to produce the groundtruth mean and variance function used in our experiments.

\begin{figure}
	\centering
	\begin{tabular}{cc}
		\hspace{-4mm} \includegraphics[width=0.415\linewidth]{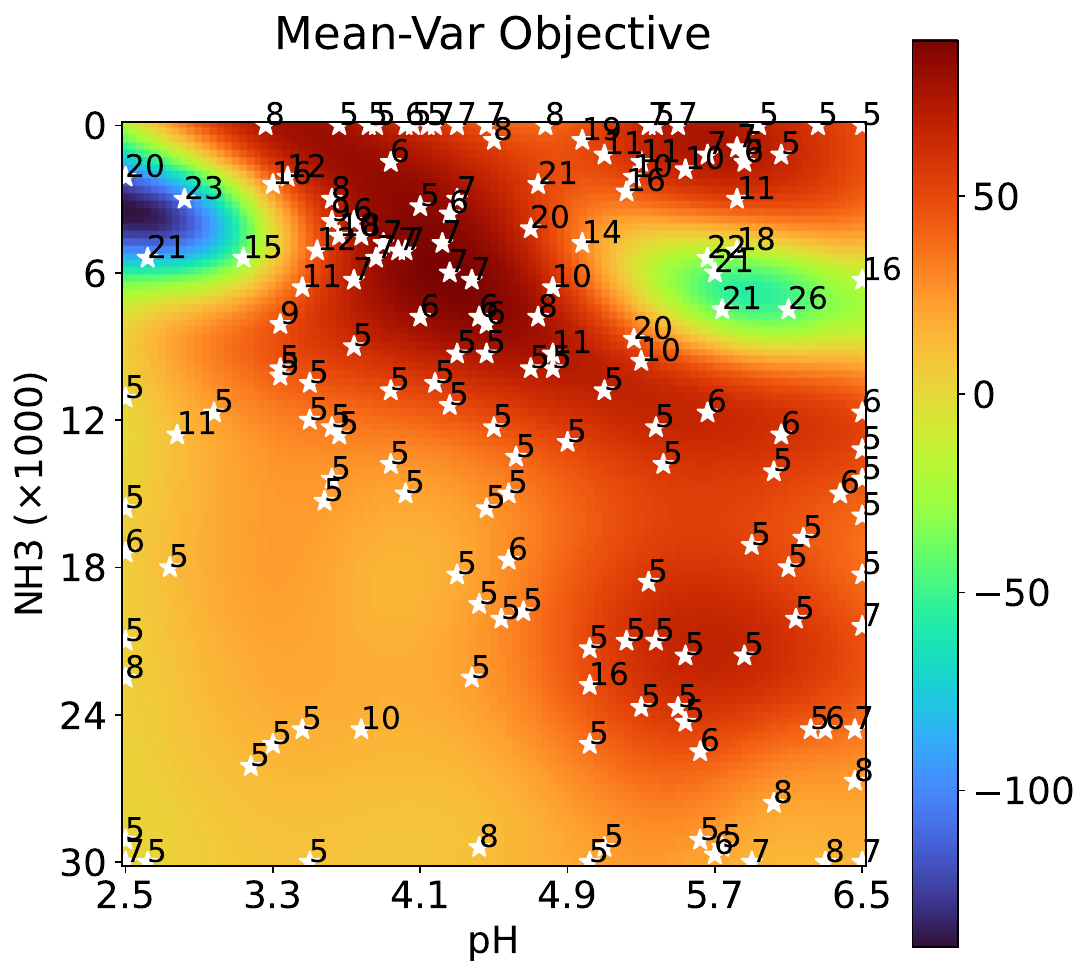}& \hspace{-4mm}
		\includegraphics[width=0.39\linewidth]{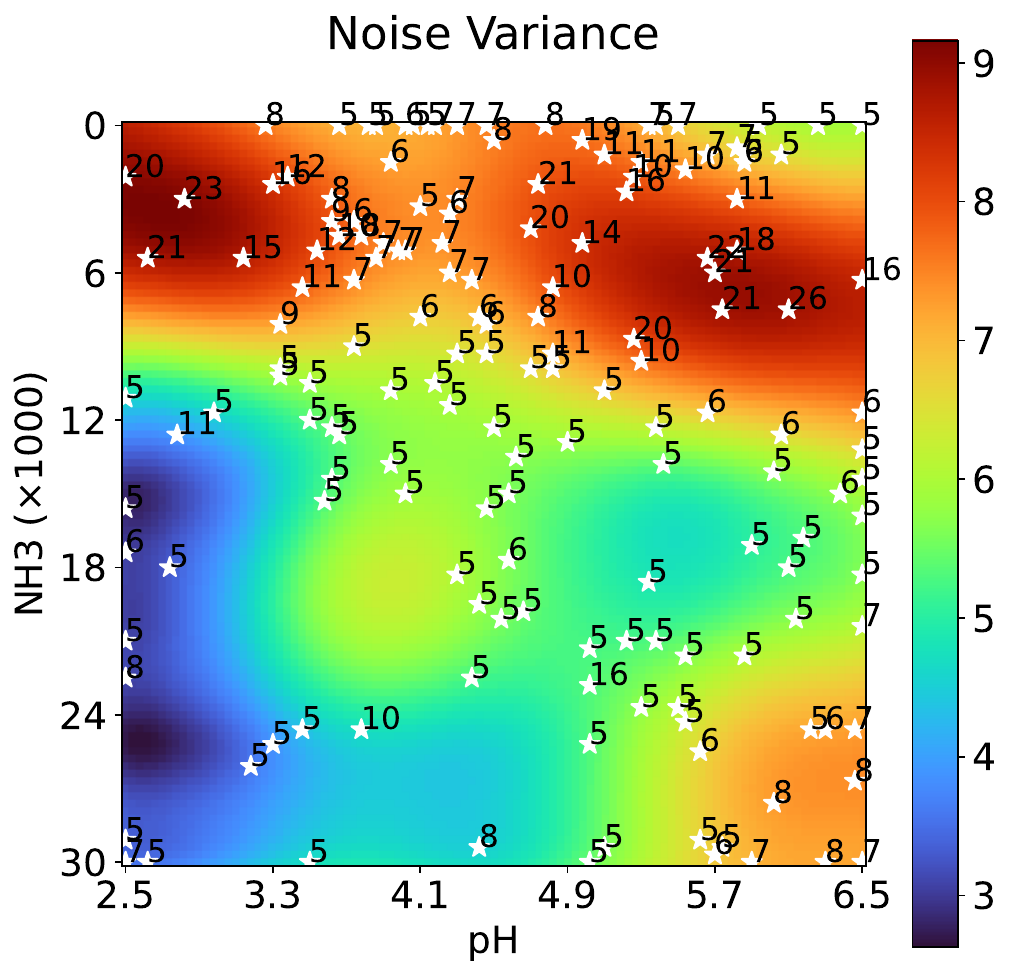}
		\\
		{(a)} & {(b)}
	\end{tabular}
	\vspace{-3mm}
	\caption{
	Groundtruth (a) mean-variance objective function and (b) noise variance functions for the leaf area, including some selected queries (white stars) and the corresponding $n_t$'s. 
    }
	\label{fig:exp:farm:viz:mean:var}
	\vspace{-3mm}
\end{figure}

\begin{figure}
	\centering
	\begin{tabular}{cc}
		\hspace{-4mm} \includegraphics[width=0.4\linewidth]{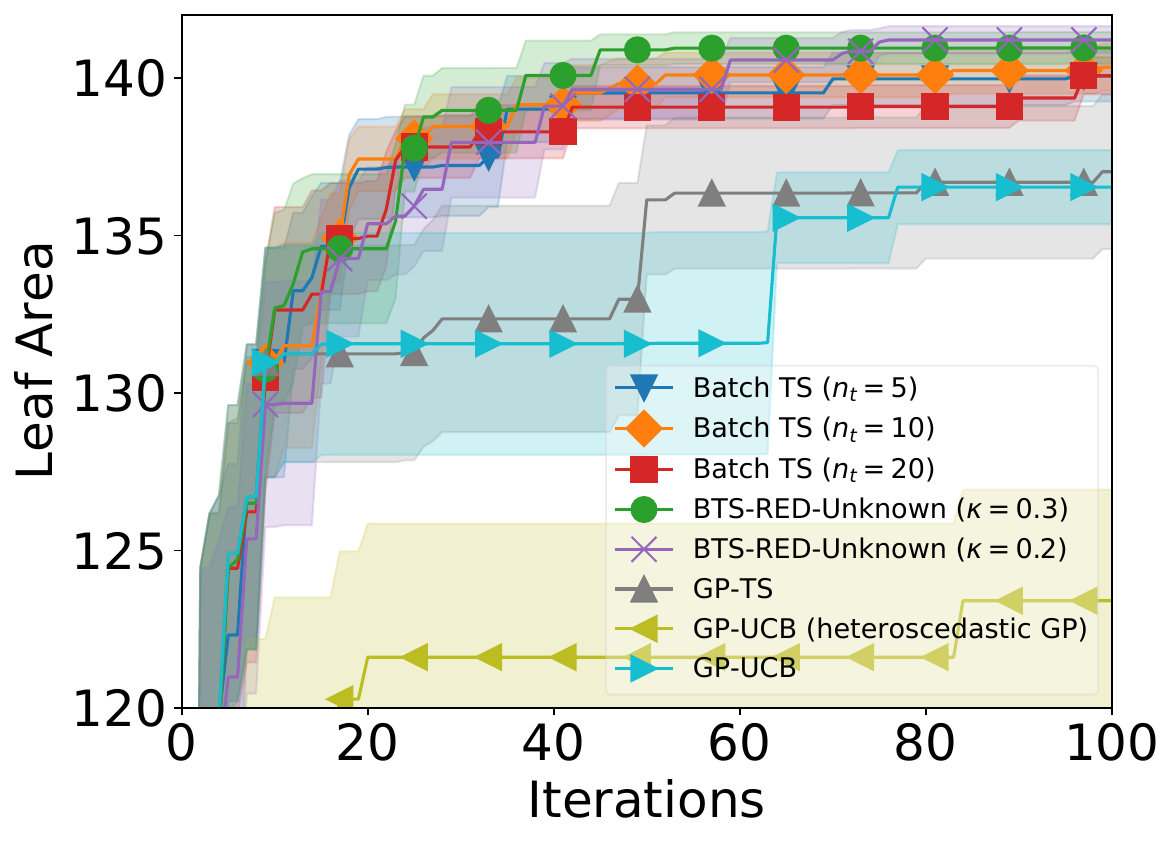}& \hspace{-4mm}
		\includegraphics[width=0.4\linewidth]{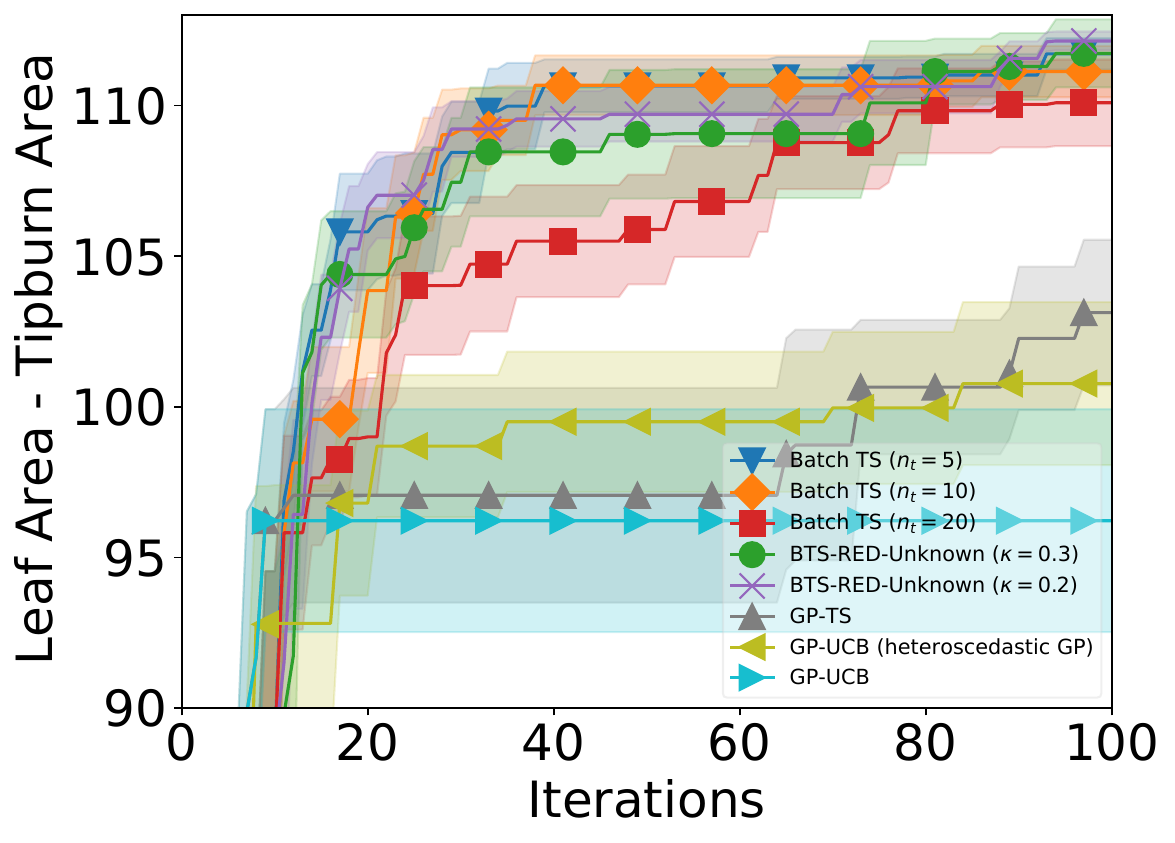}
		\\
		{(a)} & {(b)}
	\end{tabular}
	\caption{
	Mean optimization for (a) the leaf area and (b) the weighted combination of leaf area and negative tipburn area. The results for some sequential BO algorithms are included here, i.e., GP-TS, GP-UCB and GP-UCB with a heteroscedastic GP.
	Similar to Fig.~\ref{app:fig:synth:add:sequential} for the synthetic experiment, sequential BO algorithms fail to achieve competitive performances with other algorithms which are able to exploit batch evaluations.
	}
	\label{app:fig:farm:add:sequential}
\end{figure}

\subsection{Real-world Experiments on AutoML}
\label{app:sec:exp:automl}
In this experiment, we aim to tune two hyperparameters of SVM: the penalty parameter (denoted as $C$) and RBF kernel parameter (referred to as gamma), both within the range of $[0.0001,2]$.
To obtain the groundtrugh mean and variance functions, we firstly construct a uniform 2-D grid of size $80\times 80$ using the 2-D domain, and then evaluate every input hyperparameter configuration on the grid using $100$ different classification tasks (i.e., using the images of hand-written characters from 100 different individuals in the EMNIST dataset). 
The EMNIST dataset is under the CC0 license.
For every tested input hyperparameter configuration on the grid, we record the empirical mean and variance as the groundtruth mean and variance at the corresponding input location. As a result, this completes our construction of the groundtruth mean and variance functions with the input domain being discrete with a size of $80\times80=6400$.

Figs.~\ref{fig:exp:automl:visualize}a and b plot the constructed groundtruth mean and noise variance functions, including the locations of some selected inputs (the selected inputs after every $4$ iterations are included) as well as their corresponding number of replications $n_t$'s.
Similarly, Figs.~\ref{fig:exp:automl:visualize}c and d show the queried inputs and the $n_t$'s of Mean-Var-BTS-RED ($\omega=0.2$), shown on the heat maps of the mean-variance objective (c) and noise variance functions (d).
The figures show that similar to Fig.~\ref{fig:exp:farm:viz} for the precision agriculture experiment, the majority of our input queries fall into regions with large (either mean or mean-variance) objective function values (Figs.~\ref{fig:exp:automl:visualize}a and c) and that the selected number of replications $n_t$ is generally larger at those input locations with larger noise variance (Figs.~\ref{fig:exp:automl:visualize}b and d).

\begin{figure}
	\centering
	\begin{tabular}{cc}
		\hspace{-4mm} \includegraphics[width=0.4\linewidth]{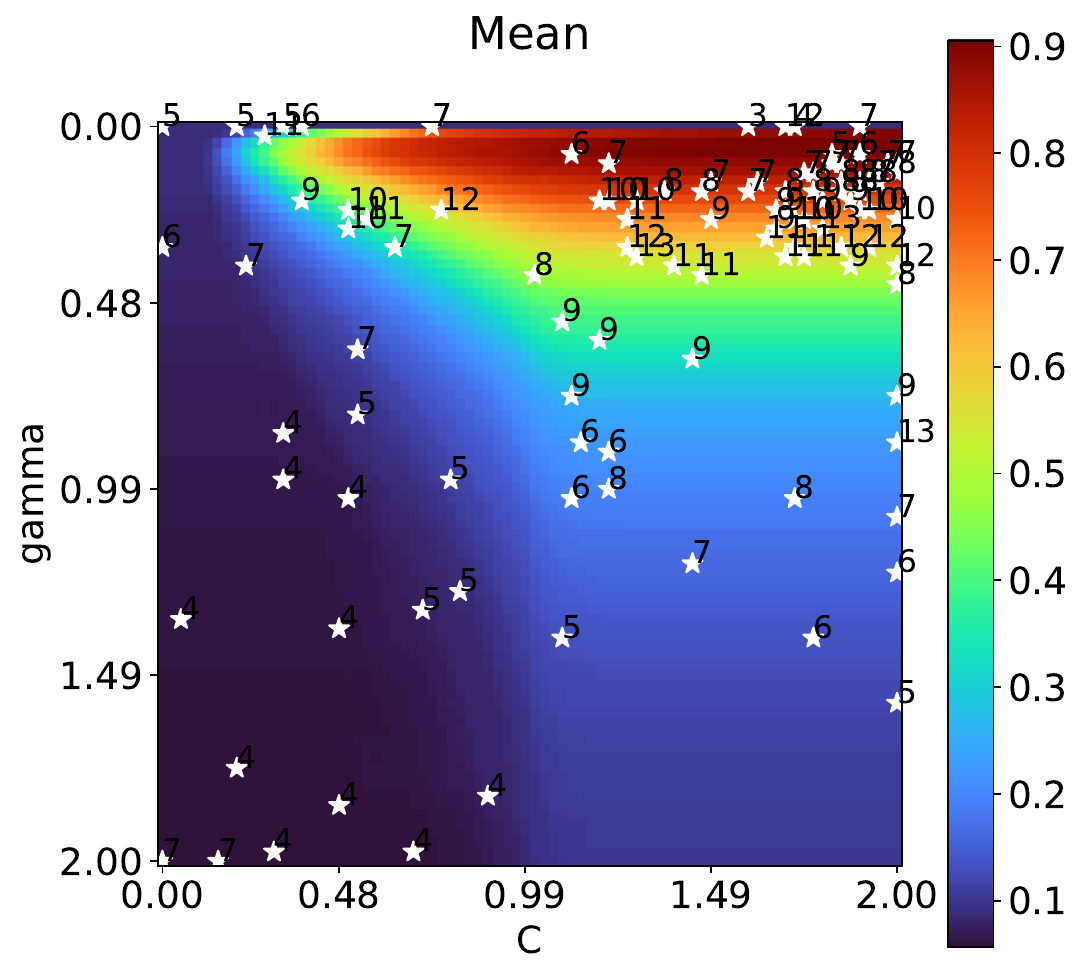}& \hspace{-4mm}
		\includegraphics[width=0.4\linewidth]{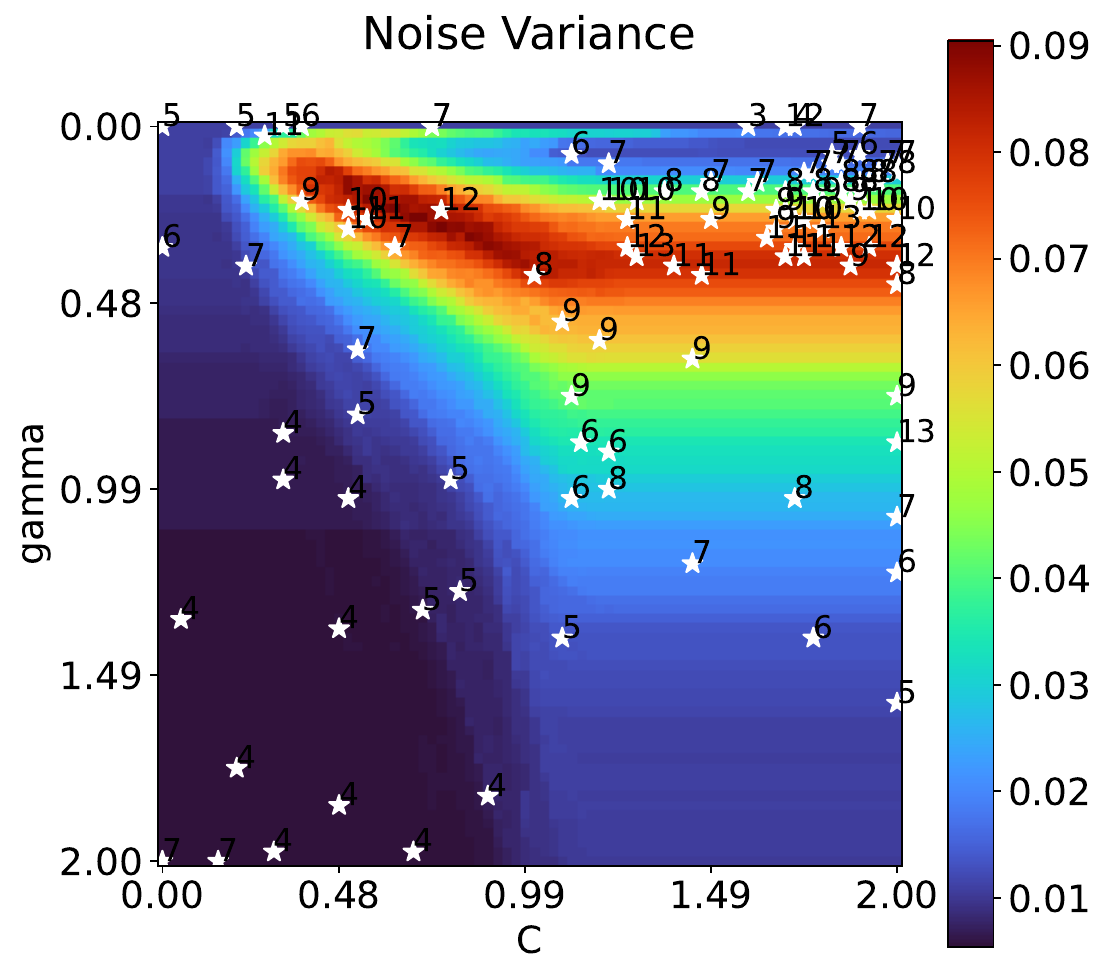}\\
		{(a)} & {(b)} \\
		\hspace{-4mm} \includegraphics[width=0.4\linewidth]{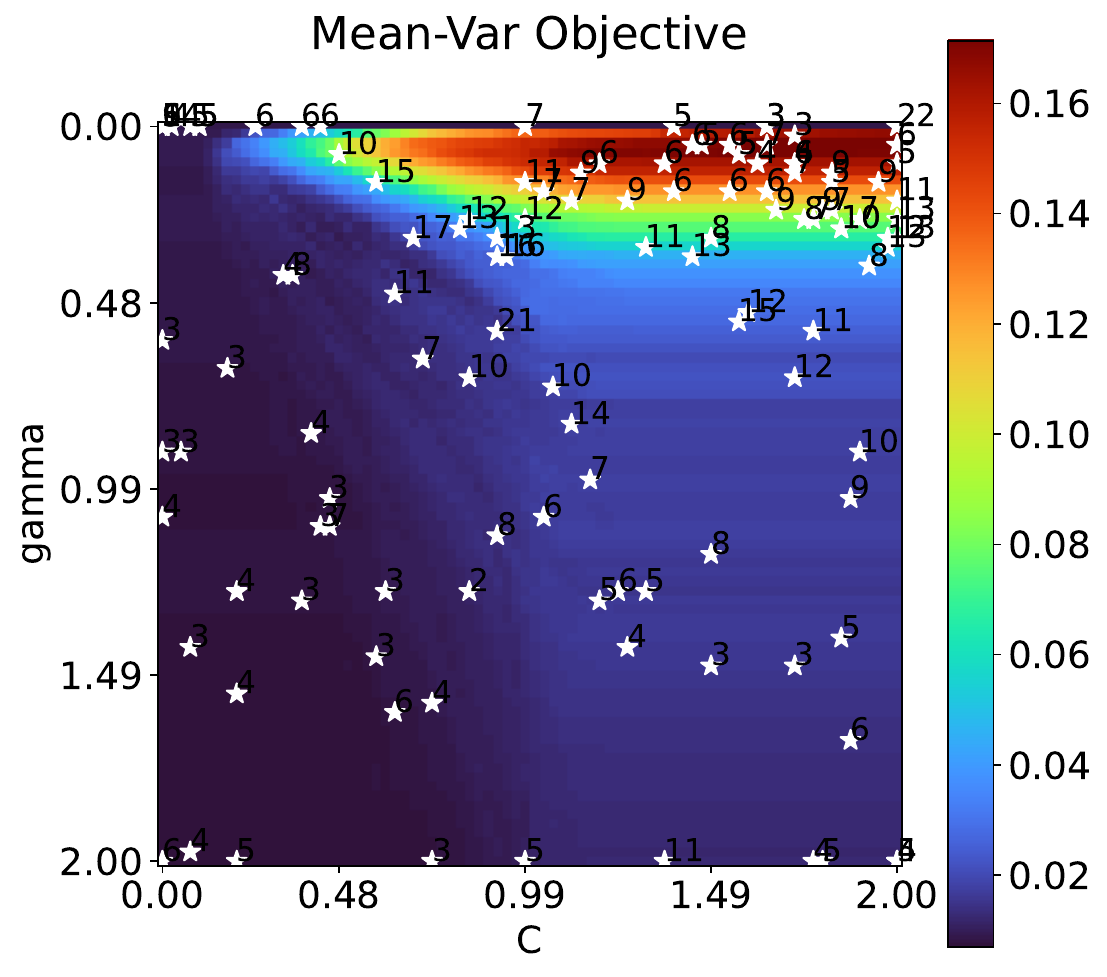}& \hspace{-4mm}
		\includegraphics[width=0.4\linewidth]{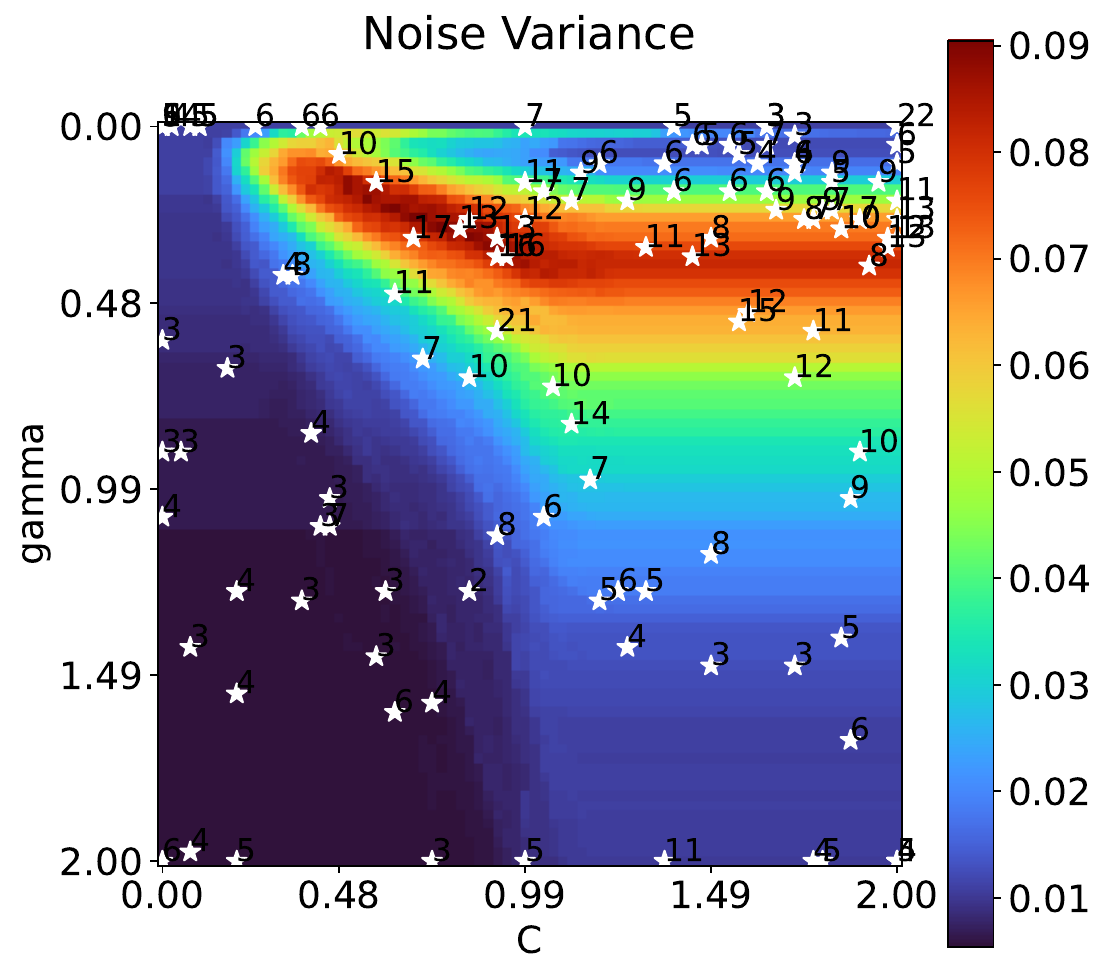}
		\\
		{(c)} & {(d)}\\
	\end{tabular}
	\caption{
	Groundtruth (a) mean and (b) noise variance functions for hyperparameter tuning of SVM, including some selected queries (white stars) and the corresponding $n_t$'s. (c, d): The corresponding plots for mean-variance optimization ($\omega=0.2$) for hyperparameter tuning of SVM.
	}
	\label{fig:exp:automl:visualize}
\end{figure}

\begin{figure}
	\centering
	\begin{tabular}{cc}
            \includegraphics[width=0.4\linewidth]{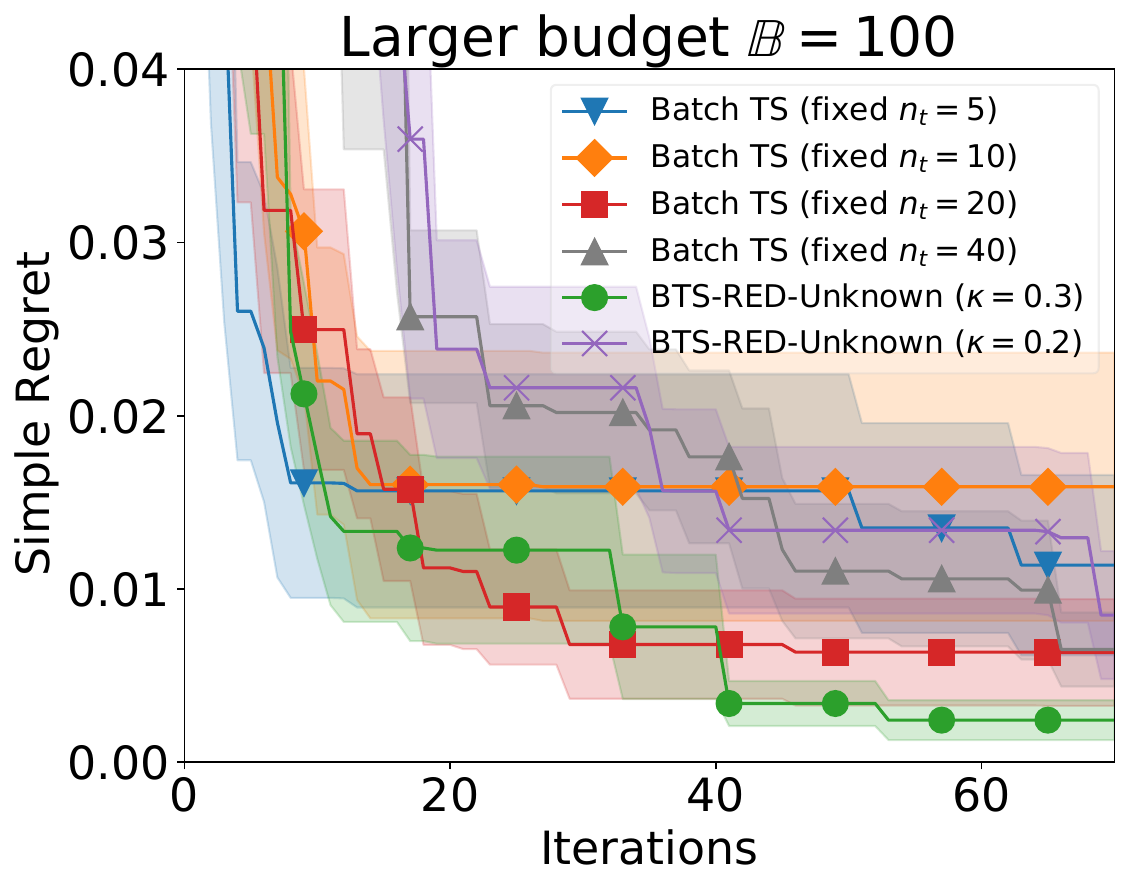}& \hspace{-4mm}
		\includegraphics[width=0.4\linewidth]{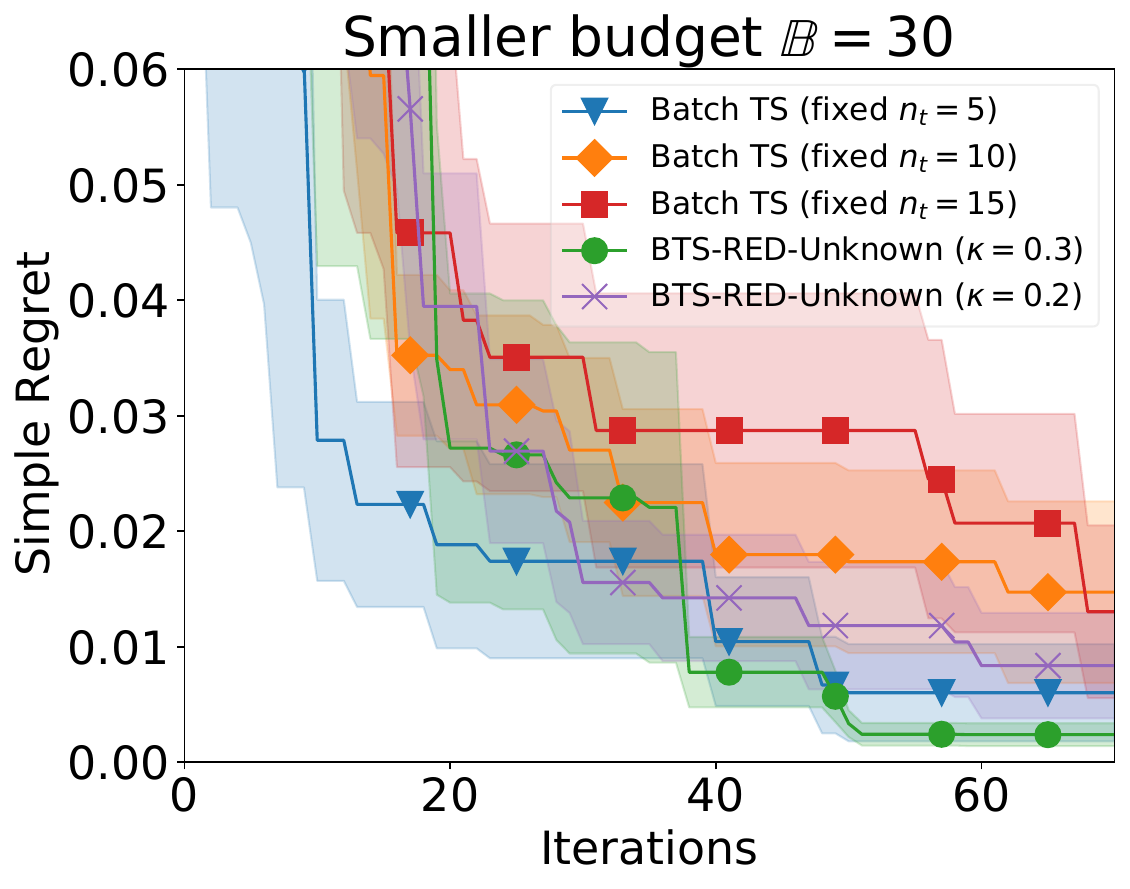}\\
	\end{tabular}
	\caption{
        AutoML experiment using different budgets $\mathbb{B}=100$ and $\mathbb{B}=30$.
	}
	\label{fig:exp:differnt:budgets}
\end{figure}

\subsection{Additional Experiments with Higher-Dimensional Inputs}
\label{app:exp:higher:dim}
The Lunar-Lander experiment requires tuning $d=12$ parameters of a heuristic controller used to control the Lunar-Lander environment from OpenAI Gym \cite{brockman2016openai}.
The controller can be found at \url{https://github.com/openai/gym/blob/8a96440084a6b9be66b2216b984a1c170e4a061c/gym/envs/box2d/lunar_lander.py#L447}.
The robot pushing task was introduced by \cite{wang2018batched}, and we defer a more detailed introduction to the experimental settings to the work of \cite{wang2018batched}.
Both experiments are widely used benchmarks for high-dimensional BO experiments \cite{dai2022sample,eriksson2019scalable}.
In both experiments, for every evaluated controller parameters (i.e., every evaluated input $\boldsymbol{x}$), the observation (i.e., cumulative rewards) is noisy due to random environmental factors. In addition, the noise may be heteroscedastic. For example, an effective set of parameters which can reliably and consistently control the robot is likely to induce small noise variance, whereas some ineffective sets of parameters may cause radically varying behaviors and hence large noise variances. Therefore, these experiments are also suitable for the application of our algorithms.
Since the domain is continuous in these two experiments, here we maximize the acquisition function in every iteration via a combination of random search and L-BFGS-B: in every iteration when we need to maximize the acquisition function, we firstly randomly sample $10,000$ inputs/points in the entire domain and find the point with the largest acquisition function value, and then further refine the search via L-BFGS-B with $100$ random re-starts.

\begin{figure}
     \centering
     \begin{tabular}{cc}
         \includegraphics[width=0.38\linewidth]{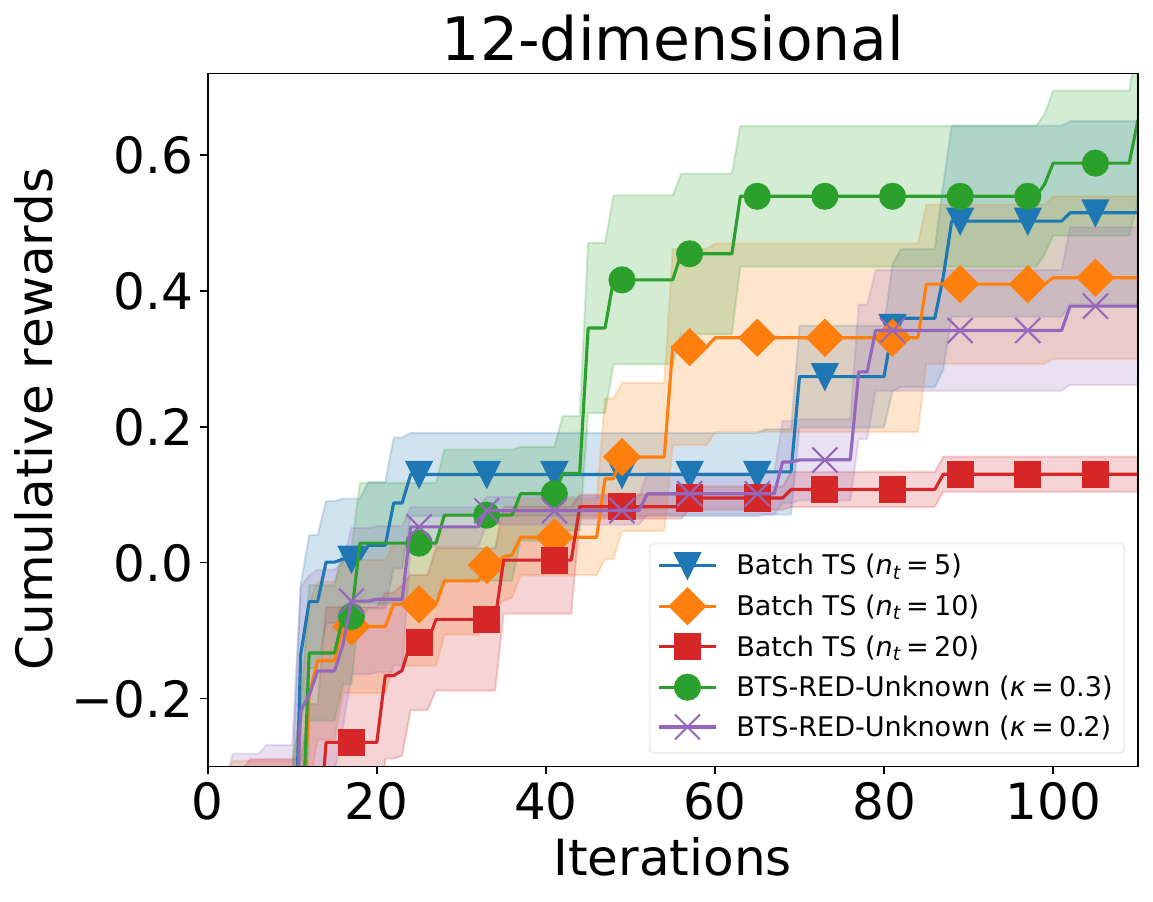} &  
         \includegraphics[width=0.36\linewidth]{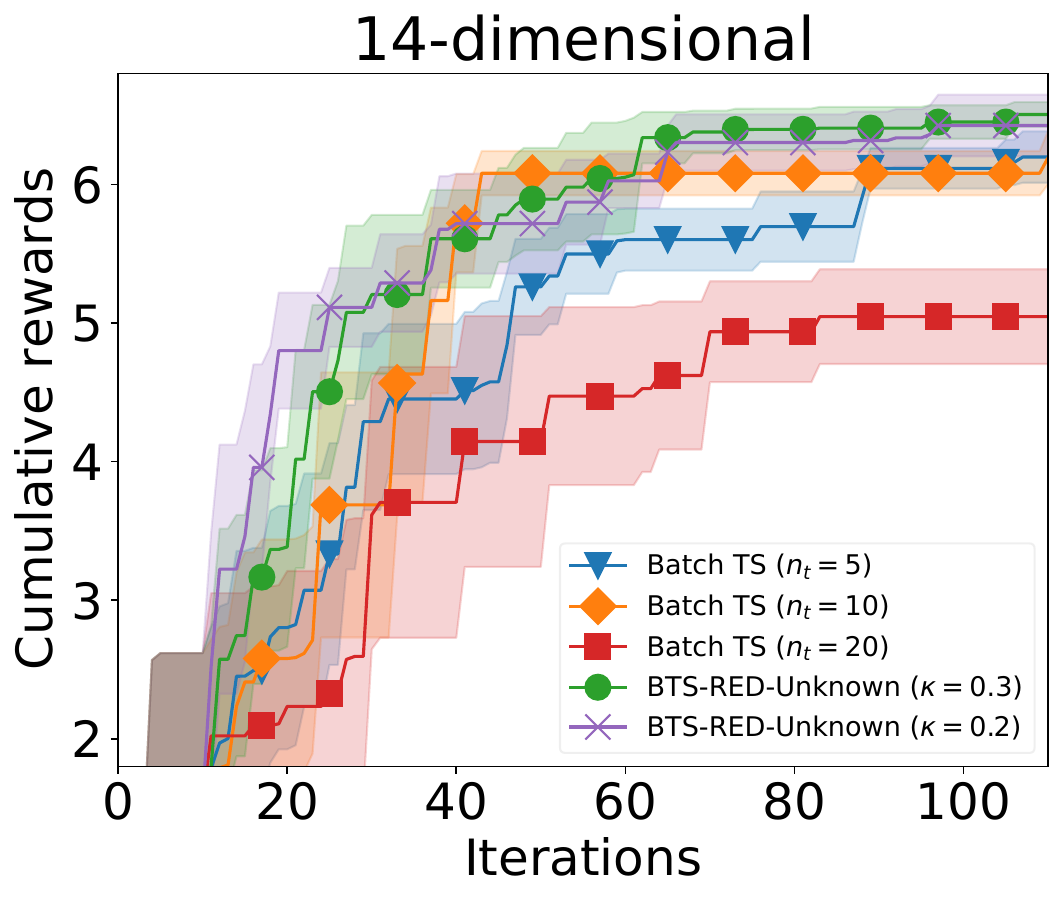}\\
         {\hspace{-2mm} (a) Lunar landing.} & {\hspace{-5mm} (b) Robot pushing.}
     \end{tabular}
     \caption{
    Experiments with higher-dimensional input spaces. 
     }
     \label{fig:exp:higher:dim}
\end{figure}

\end{document}